\newcommand\nnfootnote[2]{%
  \begin{NoHyper}
  \renewcommand\thefootnote{#1}\footnotetext{#2}%
  \addtocounter{footnote}{-1}%
  \end{NoHyper}
}
\theoremstyle{plain}
\newtheorem{theorem}{Theorem}
\newtheorem{lemma}{Lemma}
\newtheorem{claim}{Claim}
\newtheorem{corollary}{Corollary}
\declaretheoremstyle[ 
        spaceabove=\topsep, 
        spacebelow=\topsep, 
        headfont=\normalfont\itshape,
        bodyfont=\normalfont,
        notefont=\normalfont\itshape,
        notebraces={}{},
        postheadspace=0.33em, 
        qed=$\square$, 
        headpunct={.},
    ]{proofstyle}
\declaretheorem[style=proofstyle,numbered=no,name=Proof]{proof}
\newcommand{\R}{\mathbb{R}}
\newcommand{\calA}{\mathcal{A}}
\newcommand{\eps}{\varepsilon}
\newcommand{\domifulli}{\Delta_{\calA_i}}
\renewcommand{\S}{\mathcal{S}}
\newcommand{\A}{\mathcal{A}}
\newcommand{\ba}{\mathbf{a}}
\newcommand{\bs}{\mathbf{s}}
\newcommand{\bpi}{\boldsymbol{\pi}}
\newcommand{\X}{\mathcal{X}}
\newcommand{\C}{\mathcal{C}}
\newcommand{\reg}{R}
\renewcommand{\Re}{\mathfrak{R}}
\DeclarePairedDelimiterX{\infdiv}[2]{(}{)}{%
  #1\;\delimsize\|\;#2%
}
\newcommand{\T}{^\top}
\newcommand{\eqq}{\coloneqq}
\newcommand{\dotprod}[2]{\left\langle #1 , #2 \right\rangle}
\DeclareMathOperator*{\argmin}{arg\,min}
\DeclareMathOperator{\E}{\mathbb{E}}
\newcommand{\norm}[1]{\left\|#1\right\|}
\newcommand{\norms}[1]{\|#1\|}
\newcommand{\normb}[1]{\big\|#1\big\|}
\newcommand{\brackets}[4][a]{%
  \ifx a#1\left#2 #3 \right#4\else
  \ifx s#1#2 #3 #4\else
  \ifx b#1\big#2 #3 \big#4\else
  \ifthenelse{\equal{#1}{B}} {\Big#2 #3 \Big#4} {}
  \ifthenelse{\equal{#1}{Bg}} {\Bigg#2 #3 \Bigg#4} {}
  \fi\fi\fi
}
\renewcommand{\b}[2][a]{\brackets[#1]{(}{#2}{)}}
\newcommand{\cb}[2][a]{\brackets[#1]{\{}{#2}{\}}}
\renewcommand{\sb}[2][a]{\brackets[#1]{[}{#2}{]}}
\newcommand{\ab}[2][a]{\brackets[#1]{\langle}{#2}{\rangle}}
\newcommand{\av}[2][a]{\brackets[#1]{|}{#2}{|}}
\newcommand{\SwRe}{{\rm Swap}\Re}
\renewcommand{\l}{\ell}
\title{Regret Minimization and Convergence to Equilibria \\ in General-sum Markov Games}
\author{
Liad Erez$^{1, *}$
\and
Tal Lancewicki$^{1, *}$
\and
Uri Sherman$^{1, *}$
\and
Tomer Koren$^{1,2}$
\and
Yishay Mansour$^{1,2}$
}
\begin{document}
\maketitle

\nnfootnote{1 }{ Blavatnik School of Computer Science, Tel Aviv University, Tel Aviv, Israel.}
\nnfootnote{2 }{ Google Research, Tel Aviv.}

\nnfootnote{* }{ Equal contribution.}

\begin{abstract}
    An abundance of recent impossibility results establish that regret minimization in Markov games with adversarial opponents is both statistically and computationally intractable.
    Nevertheless, none of these results preclude the possibility of regret minimization under the assumption that all parties adopt the same learning procedure.
    In this work, we present the first (to our knowledge) algorithm for learning in general-sum Markov games that provides sublinear regret guarantees when executed by all agents.
    The bounds we obtain are for \emph{swap regret}, 
    and thus, along the way, imply convergence to a \emph{correlated} equilibrium.
    Our algorithm is decentralized, computationally efficient, and does not require any communication between agents.
    Our key observation is that
    online learning via policy optimization in Markov games essentially reduces to a form of \emph{weighted} regret minimization, with \emph{unknown} weights determined by the path length of the agents' policy sequence. Consequently, controlling the path length leads to weighted regret objectives for which sufficiently adaptive algorithms provide sublinear regret guarantees.

\end{abstract}

\section{Introduction}
Multiagent reinforcement learning (MARL; see \citealp{busoniu2008comprehensive, zhang2021multi}) studies statistical and computational properties of learning setups that consist of multiple agents interacting within a dynamic environment.
One of the most well studied models for MARL is Markov Games (also known as \emph{stochastic games}, introduced originally by \citealp{shapley1953stochastic}), which can be seen as a generalization of a Markov Decision Process (MDP) to the multiagent setup.
In this model, the transition dynamics are governed by the \emph{joint} action profile of all agents, 
implying that the environment as perceived by any individual agent is non-stationary. While providing powerful modeling capabilities, this comes at the cost of marked challenges in algorithm design.
Furthermore, in its full generality the model considers multiplayer \emph{general-sum} games,
where it is well-known that computing a Nash equilibrium is computationally intractable already in the simpler model of normal form games \citep{daskalakis2009complexity, chen2009settling}.

Contemporary research works that study general-sum Markov games consider objectives that roughly fall into one of two categories; sample complexity of learning an approximate (coarse) correlated equilibrium, or regret against an arbitrary opponent. The sample complexity setup assumes all players learn using the same algorithm, while in the regret minimization setting, where the vast majority of results are negative \cite[e.g.,][]{bai2020near, tian2021online, liu2022learning}, the opponents are assumed to be adversarial, and in particular do not use the same algorithm as the learner nor attempt to minimize their regret. 
Curiously, developing (or, asking if there exist) algorithms that minimize individual regret given that all players adopt the same algorithm has been largely overlooked. 
Considering the intrinsic nature of MARL problems, where agents learn interactively from experience, it is of fundamental interest not only to arrive at an equilibrium, but to control the loss incurred during the learning process.
Moreover, this is precisely the objective considered by a long line of works into learning in normal form games \citep{syrgkanis2015fast,chen2020hedging, daskalakis2021near, anagnostides2022uncoupled}.
Thus, we are motivated to ask;

\begin{center}
\bfseries\itshape
Can we design algorithms for learning in general-sum Markov games that, \\
when adopted by all agents, provide sublinear individual regret guarantees?
\end{center}

In this work, we answer the above question affirmatively, and present the first (to our knowledge) algorithm for general-sum Markov games with such a sublinear regret guarantee.
We consider finite horizon Markov games in the full-information setting, where access to exact state-action value functions is available. Our algorithm is decentralized and does not require any form of communication between agents. In addition, our bounds apply to the general notion of \emph{swap regret} \citep{blum2007external}, and therefore imply that the empirical distribution (over time steps) of policy profiles generated by our algorithm converges to a correlated equilibrium as the game progresses.

To achieve our results, we make the following observations. In a Markov game, from the point of view of any individual agent, the environment reduces to a single agent MDP in any given episode.
When considering multiple episodes, 
the environment perceived by any individual agent is non-stationary, with the path length of the sequence of policies generated by fellow agents determining the total variation of MDP dynamics.
Our first key observation is that, when executing a policy optimization routine \citep{shani2020optimistic, cai2020provably} in a non-stationary MDP (and thus in a Markov game), 
the per state objective becomes one of \emph{weighted regret} with weights \emph{unknown} to the learner. Importantly,
the total variation of weights in these objectives is governed by the degree of non-stationarity (and in turn, by the path length of the other agents' policies).
Therefore, a possible approach would be to provide all agents with an algorithm which has the following two properties; (1) the path length of generated policies is well bounded, and (2) the per state weighted regret is bounded in terms of the total variation of weights (and thus in terms of the policy path length).
Indeed, we prove that a carefully designed instantiation of policy optimization with \emph{optimistic-online-mirror-descent} (OOMD; \citealp{rakhlin2013optimization}) produces a bounded path length policy sequence, and simultaneously exhibits the required weighted regret bounds.

Our approach builds on recent progress on decentralized learning in normal form games \citep{daskalakis2021near, anagnostides2022uncoupled, anagnostides2022faster}. The work of \cite{anagnostides2022uncoupled} demonstrated that \emph{optimistic-follow-the-regularized-leader} (OFTRL; \citealp{syrgkanis2015fast}), combined with log-barrier regularization and the no-swap-regret meta algorithm of \cite{blum2007external}, leads to well bounded path length in general-sum normal form games.
However, their techniques do not readily extend to the Markov game setup; indeed, FTRL-based algorithms are not sufficiently adaptive and at least in standard form cannot be tuned to satisfy weighted regret bounds. In fact, weighted regret is a generalization of the previously studied objective of \emph{adaptive regret} \citep{hazan2009efficient}, and it can be shown FTRL-based algorithms do not even satisfy this weaker notion (see \cite{hazan2009efficient} and a more elaborate discussion in \cref{sec:ftrl_nonstationary_lb}).
Evidently, however, an OOMD-based algorithm can be made sufficiently adaptive \emph{and} produce iterates of bounded path length.
When all agents adopt our proposed algorithm, the path length of the generated policy sequence remains well bounded, leading to moderate non-stationarity and low total variation per state weighted regret problems, which allows properly tuned mirror descent steps---crucially, \emph{without} knowledge of the weights---to obtain sublinear regret.
Notably, while much of the previous works \citep[e.g., ][]{syrgkanis2015fast, chen2020hedging, anagnostides2022uncoupled} employ optimistic online algorithms and path length dependent regret bounds to improve upon naive square-root regret, in Markov games, with our approach, these are actually crucial for obtaining any form of sublinear regret.

\paragraph{Addendum.}

Following the initial publication of this work in ICML'23~\citep{erez2023regret}, an error in the arguments given there was brought to our attention by Khashayar Gatmiry and Noah Golowich via private correspondence.
Specifically, the swap regret bound derived in the proof of \cref{thm:apprx_state_master_swap_rvu} relies on a benchmark vector that lies outside the decision set of the algorithm, rendering the argument invalid.
Despite our best efforts, we were unable to identify a fix to this issue that avoids imposing restrictive assumptions on the structure of the Markov game or introducing substantial modifications to our technical framework.
Consequently, the current version of the manuscript presents a corrected version that retains the majority of the original technical arguments, at the cost of an additional, execution dependent term in the final regret bound which cannot be bounded in the general case.

In some more detail, the new version of the algorithm operates over the full (non-shrunk) simplex, thereby allowing for the swap regret argument of \cref{thm:apprx_state_master_swap_rvu} to go through.
As a result, the maximal Bregman divergence encountered during the algorithm's execution appears in the final bound, which may lead to a vacuous (linear) regret bound without further assumptions.

\subsection{Summary of contributions}

To summarize, we present a decentralized algorithm (\cref{alg:POSR}) for (multiplayer, tabular, and episodic) Markov games, with the following guarantees when adopted by all agents.
Some of the results below are obtained under an additional technical condition involving the iterates of the algorithm, as discussed earlier.
\begin{itemize}
    \item In our main problem setup, with access granted to exact state-action value functions, the individual swap regret of every agent is $\widetilde O(T^{1/2+\alpha})$ (see \cref{sec:main}). 

    \item In the special case of full-information independent transition function where agents only affect the loss functions of each other but not the transition dynamics, our algorithm guarantees $O(\log T)$ individual regret. 
    The result is relatively straightforward given our analysis for general Markov games, and we defer the formal setting and proofs to \cref{sec:quasi_markov}. 
    
    \item As an immediate implication, we obtain that the joint empirical distribution of policy profiles produced by our algorithm converges to correlated equilibria, at a rate of $\widetilde O(T^{-1/2 + \alpha})$, and $\widetilde O(1/T)$ in the independent transition function setup. 
    
\end{itemize}

\subsection{Related work}

\paragraph{Learning in Markov games.}
The framework of Markov games was originally introduced by \cite{shapley1953stochastic}. The majority of studies consider learning Nash equilibria in two-player zero-sum Markov games, and may be roughly categorized by assumptions made on the model.
The full-information setting, where the transition function is known and/or some sort of minimum state reachability is assumed, has gained much of the earlier attention \citep{littman1994markov, littman2001friend, brafman2002r,hu2003nash, hansen2013strategy, wei2017online}, as well as more recent \citep{daskalakis2020independent, wei2021last, cen2021fast, zhao2022provably, alacaoglu2022natural, zhang2022policy}. 
The unknown model setup, where the burden of exploration is entirely in the hands of the agent, has been a target of several recent papers focusing on sample complexity \citep{sidford2020solving, bai2020near, xie2020learning, zhang2020model, liu2021sharp}.

The work of \cite{wei2021last} considers zero-sum games with model assumptions similar to ours, and present an optimistic gradient descent-ascent policy optimization algorithm with a smoothly moving critic. They obtain last iterate convergence to a Nash equilibrium at a rate of $\widetilde O(T^{-1/2})$ for the full-information setting, which immediately implies individual regret of $\widetilde O(T^{1/2})$. In the unknown model setup with reachability assumptions, their algorithm obtains a last iterate guarantee that implies $\widetilde O(T^{7/8})$ regret.
Also noteworthy, \cite{tian2021online} consider the zero-sum unknown model setting, and develop an algorithm that provides a $O(T^{2/3})$ regret guarantee when comparing to the minimax game value. \cite{tian2021online} also present a certain extension of their result to general-sum games, however their definition of regret in this case does not translate to the usual notion of regret even when all players adopt their algorithm.

Learning in \emph{general-sum} Markov games has been comparatively less explored. 
The work of \cite{liu2021sharp} presented a \emph{centralized} algorithm in the unknown model setup with optimal sample complexity guarantees in terms of the number of episodes, but exponential dependence on the number of agents. Following their work, several recent papers \citep{jin2021v, song2021can, mao2022provably} independently develop variants of V-learning, a decentralized algorithm for learning unknown general-sum Markov games. After $T$ episodes, their algorithms output a (non-Markov) $O(T^{-1/2})$-coarse correlated equilibrium, without dependence on the number of agents.
\cite{jin2021v} and \cite{song2021can} also present extensions for obtaining approximate (non-coarse) correlated equilibrium with similar guarantees. 
Later, \cite{mao2022improving} further propose simplifications to the V-learning algorithmic and analysis framework.
Notably though, the output of these algorithms is linear in the number of episodes (as it includes the history of all policies), and it is unclear what are the online guarantees of these methods.

The recent works of \cite{liu2022learning,zhan2022decentralized} explore the \emph{policy-revealing} setting, where agents share their policies after every episode. \cite{liu2022learning} give both positive and negative results on regret guarantees in this setup for zero-sum games, with the regret upper bounds depending on the cardinality of either the baseline or opponent policy classes. \cite{zhan2022decentralized} extend their work and present an algorithm for the policy-revealing setting with function approximation, which achieves no-regret in general-sum games in face of arbitrary opponents, as long as these reveal their policies at the end of each episode. Importantly, in both \cite{liu2022learning} and \cite{zhan2022decentralized} the computational complexity depends on the cardinality of the baseline policy class, and thus their algorithm is inefficient whenever the baseline policy class is the class of all Markov policies, as in our case.
Finally, \cite{zhang2022policy}
consider the full-information setting similar to ours (although, they do not develop extensions to the minimum reachability setup), and present algorithms that output $\tilde O(T^{-5/6})$ and $\tilde O(T^{-3/4})$ optimal policies after $T$ episodes, for respectively zero-sum and general-sum games; notably, however, it is unclear whether their algorithms provide regret guarantees.
 
\paragraph{Hardness results for Markov games.}

Learning in Markov games is considered a notoriously challenging problem, and several learning objectives have been shown in previous works to be either computationally or statistically hard. For instance, \cite{bai2020near} show that computing the best response policy in zero-sum Markov games against an adversarial opponent is at least as hard as learning parities with noise, a problem conjectured to be computationally hard. \cite{tian2021online} and \cite{liu2022learning} show a regret lower bound of $\Omega(\min\{\sqrt{2^H T}, T\})$ for zero-sum episodic Markov games with an unknown transition function, where the opponent is restricted to Markov policies. 
We note that these hardness results do not directly impact our goal of no-regret learning in general-sum Markov games, as they consider the setting of facing an arbitrary opponent which is only constrained to play Markov policies. By contrast, our main result shows that each player's individual regret is sublinear in $T$ as long as the other players' policies have a well-bounded second order path length, which is a property enforced by our choice of algorithm for all players.
Additionally, \cite{daskalakis2022complexity} show that the problem of computing a coarse correlated equilibrium comprised of stationary Markov policies in a general-sum infinite horizon Markov game is computationally hard. We consider a setting of regret minimization in layered episodic Markov games, and though our policies of interest are stationary, they do not translate into stationary policies in a corresponding infinite horizon Markov game. Hence, this lower bound is not applicable in the setting we consider here.

\paragraph{No-regret learning in games.}

Theoretically understanding no-regret dynamics in multiplayer games has been a topic of vast interest in recent years (e.g., \citealp{rakhlin2013optimization,syrgkanis2015fast,foster2016learning,chen2020hedging,daskalakis2021near,anagnostides2022uncoupled,piliouras2021optimal}). The main focus in most of these works is to analyze the performance of optimistic variants of online learning algorithms such as FTRL and OMD in multiplayer normal form games, and ultimately prove regret bounds which are vastly better than the naive $O(\sqrt T)$ guarantee achievable in adversarial environments. The state-of-the-art result in this setting was established by \cite{anagnostides2022uncoupled} who proposed an algorithm which guarantees $O(\log T)$ swap regret in general-sum games. Some of these results have been extended to more general classes of games such as extensive-form games~\citep{farina2022kernelized,anagnostides2022faster} and convex games \citep{farina2022near}. In this work we adopt some of the techniques presented by \citet{anagnostides2022uncoupled} in order to establish sublinear swap regret guarantees in general-sum Markov games.

\section{Preliminaries}

\subsection{Problem setup}
\paragraph{Markov games.}
An $m$-player general-sum finite horizon Markov game is defined by the tuple $\b{H,\S,\{\A_i\}_{i=1}^m,P,\{\ell^i\}_{i=1}^m}$. $H$ is the horizon; 
$\S$ is set of states of size $S$  partitioned as $\S = \bigcup_{h=1}^{H+1} \S_h$, where $\S_1 = \{s_1\}$ and $\S_{H+1}=\{s_{H+1}\}$;
$\A_i$ is the set of  actions of agent $i$ of size $A_i$, and the joint action space is denoted by $\A \coloneqq \bigtimes_{i=1}^m \A_i$. Further, $P$ is the transition kernel, where
 given the state at time $h$, $s\in\S_h$, and a joint action profile $\ba \in \A$, 
$P(\cdot \mid s, \ba) \in \Delta_{\S_{h+1}}$ is the probability distribution over the next state, where given some set $\C$, $\Delta_\C \eqq \cb{p : \C \to [0,1] \mid \sum_{x \in \C} p(x) = 1}$ denotes the probability simplex over $\C$. Finally, 
$\ell^i:\S\times \A \to [0,1]$ denotes the cost function of agent $i$. A policy for player $i$ is a function $\pi^i(\cdot \mid \cdot): \A_i \times \S \to [0,1]$, such that $\pi^i( \cdot \mid s) \in \Delta_{\A_i}$ for all $s\in \S$.
Given a policy profile $\bpi = (\pi^1,...,\pi^m)$, player $i\in[m]$, state $s \in \S_h$ and action $a \in \A_i$, we define the value function and the $Q$-function of agent $i$ by: 
\begin{align*}
    V^{i,\bpi}(s) = \E \sb{ \sum_{h' = h}^H \ell^i(s_{h'},\ba_{h'}) \mid \bpi, s_h = s}
    \quad ; \quad
    Q^{i,\bpi}(s,a) = \E \sb{ 
    \sum_{h' = h}^H \ell^i(s_{h'},\ba_{h'}) \mid a_h^i=a, s_h = s, \bpi}
    .
\end{align*}

\paragraph{Interaction protocol.}

The agents interact with the Markov game over the course of $T$ episodes.
At the beginning of each episode $t\in[T]$ every agent chooses a policy $\pi_t^i$.
Then, all agents start at the initial state $s_1$, and for each time step $h = 1,2,\ldots,H$, each player draws an action $a^i_h \sim \pi_t^i(\cdot \mid s_h)$ and the agents transition together to the next state $s_{h+1}\sim P(\cdot \mid s_h,\ba_h)$ where $\ba_h = (a^1_h,...,a^m_h)$.
At the end of the episode, agent $i$ incurs a loss given by $\sum_{h=1}^H \ell^i(s_h,\ba_h)$, and observes feedback that differs between two distinct settings we consider.
In the \emph{\textbf{full-information}} setup, agent $i\in[m]$ observes the exact state-action value functions; $Q^{i, \bpi_t}(s, a) , \;\; \forall s, a \in \S \times \A_i$.

\paragraph{Learning objective.}

Given an agent $i\in[m]$ and policy profile $\bpi = (\pi^1,...,\pi^m) = \pi^1 \odot \pi^2 \odot \ldots \odot \pi^m$, we will be interested in the policy profile excluding $i$ which we denote by $\bpi^{-i} \eqq \pi^1 \odot \ldots \pi^{i-1} \odot \pi^{i+1} \odot \ldots \odot \pi^m$. For policy profile $\bpi$ and a player $i$ policy $\pi \in \S \to \A_i$, we let $\pi \odot \bpi^{-i} = \pi^1 \odot \ldots \pi^{i-1} \odot \pi \odot \pi^{i+1} \odot \ldots \odot \pi^m$ denote the joint policy formed by replacing $\pi^{i}$ with $\pi$.
Given an episode $t$ and policy profile $\bpi_t = (\pi_t^1,...,\pi_t^m)$, we will be interested in the single agent MDP induced by $\bpi_t^{-i}$. This induced MDP is specified by
	$
		M_t^i \eqq (H, \S, \A_i, P_t^i, \l_t^i)
	$,
	where $\l_t^i(s, a) \eqq \E_{\ba\sim\bpi_t(\cdot \mid s)}\sb{\ell^i(s,\ba) \mid a^i = a}$ and 
	$P_t^i(\cdot \mid s,a) = \E_{\ba\sim\bpi_t(\cdot \mid s)} \sb{P(\cdot \mid s,\ba) \mid a^i = a}$ define agent $i$'s induced loss vector and transition kernel respectively.
	Furthermore, we denote the value and action-value functions of a policy $\pi \in \S \to \Delta_{\A_i}$  in this MDP by
\begin{align*}
    V_t^{i,\pi}(s) \eqq  V^{i,\pi \odot \bpi_t^{-i}}(s)
    \quad ; \quad 
    Q_t^{i,\pi}(s, a) \eqq Q^{i, \pi \odot \bpi_t^{-i}}(s,a)
    ,
\end{align*}
	 where $s \in \S_h$ and $a\in \A_i$. 
	 Given our definitions above, a standard argument shows that $V_t^{i,\pi}(s) 
    = \E \sb[b]{ \sum_{h' = h}^H \ell_t^i(s_{h'},a_{h'}) 
    	\mid P_t^i, \pi, s_h = s}$, and $Q_t^{i,\pi}(s, a)
    = \E \sb[b]{ \sum_{h' = h}^H \ell_t^i(s_{h'},a_{h'}) 
    	\mid P_t^i, \pi, s_h = s, a_h = a}$.
	 We note that we sometimes use the shorthand $V_t^i(\cdot)$ for $V_t^{i,\pi^i_t}(\cdot)$ and $Q_t^i(\cdot,\cdot)$ for $Q_t^{i,\pi^i_t}(\cdot,\cdot)$.
Given a jointly generated policy sequence $\{\bpi_1, \ldots, \bpi_T\}$,
our primary performance measure is the individual swap regret of each player $i$, defined as
\begin{align}
	\SwRe_T^i
	&= \max_{\phi_\star^i \in 
	\{\S \times \A_i \to \A_i\}} \cb[B]{
		\sum_{t=1}^T \b{V_t^{i, \pi^i_t}(s_1) - V_t^{i, \phi^i_\star(\pi^i_t)}(s_1)}
	},
\end{align}
where we slightly overload notation and define a policy swap function $\phi\colon \S \times \A_i \to \A_i$ applied to a policy $\pi \in \S \to \Delta_{\A_i}$ as follows; 
\begin{align*}
    \phi(\pi)(a \mid s) \eqq 
    \sum_{a':\phi(s, a')=a} \pi( a' \mid s).
\end{align*}
That is, the distribution $\phi(\pi)(\cdot \mid s)$ is formed by sampling $a \sim \pi(\cdot \mid s)$ and then replacing it with $\phi(s, a) \in \A_i$. Similarly, given an action swap function $\psi^{i} \colon \A_i \to \A_i$, we slightly overload notation when applying it to $x \in \Delta_{\A_i}$ by defining
    $
        \psi^{i}(x)(a) = 
        \sum_{a':\psi^{i}(a')=a} x( a')
    $.
We remark this notion of regret is strictly stronger (in the sense that it is always greater or equal) than the external regret, defined by;
\begin{align}
	\Re_T^i
	&\eqq \max_{\pi_\star^i \in \{\S \to \Delta_{\A_i}\}} \cb[B]{
		\sum_{t=1}^T \b{V_t^{i, \pi^i_t}(s_1) - V_t^{i, \pi^i_\star}(s_1)}
	}.
\end{align}
Finally, a joint policy distribution $\Pi$ is an $\eps$-approximate \emph{correlated equilibrium} if for any player $i$,
\begin{align}\label{eq:def_CE}
    \mathbb{E}_{\bpi \sim \Pi} \sb{\max_{\phi_i} \b{V^{i,\bpi}(s_1) - V^{i,(\phi_i(\pi^i) \odot \bpi^{-i})}(s_1)}} \leq \eps.
\end{align}
It is straightforward to show that if all players achieve swap regret of $O(\eps T)$ over $T$ episodes, then the distribution given by sampling $\bpi_t$ with $t \sim [T]$ uniformly constitutes an $\eps$-approximate correlated equilibrium~\citep{blum2007external}.

\paragraph{Additional notation.}

We denote the size of the largest action set as
$A \eqq \max_i A_i$.
	 In addition, we let $q_t^{i, \pi}$ denote the state-occupancy measure of policy $\pi$ in $M_t^i$;
\begin{align*}
	q_t^{i, \pi}(s) \eqq \Pr(s_h = s \mid P_t^i, \pi).
\end{align*}
Finally, for any pair of policies $\pi, \tilde \pi \in \S \to \Delta_{\A_i}$ of player $i$, we define
\begin{align*}
    \norms{\pi - \tilde \pi}_{\infty,1}
    \eqq 
    \max_{s\in \S} 
        \norms{\pi(\cdot \mid s) - \tilde \pi(\cdot \mid s)}_1,
\end{align*}
and for any $P, \tilde P \in \S \times \A_i \to \Delta_{\S}$,
\begin{align*}
    \norms{P - \tilde P}_{\infty,1}
    \eqq 
    \max_{s\in \S, a\in \A_i} 
        \norms{P(\cdot \mid s, a) - \tilde P(\cdot \mid s, a)}_1.
\end{align*}

\subsection{Optimistic online mirror descent}
Let $\X \subset \Delta_d$ be a convex subset of the $d$-dimensional simplex, and $\l_1, \ldots, \l_T\in [0, 1]^d$ be an online loss sequence.
Optimistic online mirror descent (OOMD) over $\X$ with convex regularizer $R\colon \X \to \R$ and learning rate $\eta > 0$ is defined as follows:
\begin{align*}
	\tilde x_0 &\gets \argmin_{x \in \X} R(x) ;
	\\
	t = 1, \ldots, T; \quad x_{t} &\gets \argmin_{x \in \X} \big\{
		\ab[s]{\tilde \l_t, x} + \frac1\eta D_R(x , \tilde x_{t-1})
	\big\},
	\\
	\tilde x_{t} &\gets \argmin_{x \in \X} \big\{
		\ab{\l_t, x} + \frac1\eta D_R(x , \tilde x_{t-1})
	\big\}.
\end{align*}
We instantiate OOMD with 1-step recency bias, meaning $\tilde \l_1 \eqq 0$ and $\tilde \l_t\eqq \l_{t-1}$ for $t\geq 2$.
The primal and dual local norms induced by the regularizer $R$ are denoted;
\begin{align*}
	\forall v\in \R^d, x\in \X, \qquad 
	\norm{v}_x =  \sqrt{v\T \nabla^2 R(x) v};
	\quad 
	\norm{v}_{*, x} =  \sqrt{v\T (\nabla^2 R(x))^{-1} v}.
\end{align*}
For the most part, we will employ the log-barrier regularization specified by
\begin{align}
    \forall x\in \X, \quad 
	R(x) = \sum_{a\in[d]} \log \frac{1}{x(a)}
	.
	\label{eq:log_barrier}
\end{align}
The Bregman divergence induced by the log-barrier is given by $D_R(x, y) = \sum_{a\in[d]} \log\frac{y(a)}{x(a)} + \frac{x(a) - y(a)}{y(a)}$, and the local norms by
\begin{align*}
	\forall v\in \R^d, x\in \X, \qquad 
	\norm{v}_x =  \sqrt{\sum_{a\in [d]} \frac{v(a)^2}{x(a)^2}};
	\quad 
	\norm{v}_{*, x} =  \sqrt{\sum_{a\in [d]} v(a)^2 x(a)^2}.
\end{align*}
Finally, throughout, we refer to the $\gamma$-\emph{truncated} simplex, defined by:
\begin{align}
	\Delta_d^\gamma 
		\eqq \cb{x \in \Delta_d \mid x(a) \geq \gamma, \quad \forall a \in [d]}.
	\label{eq:trunc_simplex}
\end{align}

\section{Algorithm and main result}
\label{sec:main}
In this section, we present our algorithm and outline the analysis establishing the regret bound.  
	We propose a policy optimization method with a carefully designed regret minimization algorithm employed in each state.
	Specifically, inspired by the work of \cite{anagnostides2022uncoupled}, we equip the swap regret algorithm of \cite{blum2007external} with a variant of optimistic online mirror descent over the truncated action simplex.
	This choice has two important properties; 
	First, it can be shown that online mirror descent (as well as its optimistic variant) with some tuning satisfies weighted regret bounds of the form that emerges from non-stationarity in MDP dynamics which directly depends on the path length of the joint policy sequence.
	Second, the (second-order) path length of the generated policy sequence is $O(\log T)$, a fact we establish by suitable modifications of the arguments presented in  \cite{anagnostides2022uncoupled}.
	
\begin{algorithm}[ht]
    \caption{Policy Optimization by Swap Regret Minimization}
    \label{alg:POSR}
	\begin{algorithmic}[1]
	    \STATE \textbf{input:}  
	        $H,\S,\calA_i,T$ agent index $i$, learning rate $\eta > 0$, regularizer $\reg(\cdot)$.
	   \STATE \textbf{initialization:} $\pi_{1}^{i}$ is the uniform policy. For every $s \in \S$ and every $a \in \calA_i$ initialize \\ $\Tilde{x}^{i,s,a}_{0}(\cdot) = \argmin_{x \in \domifulli} \reg(x)$.
	    \FOR{$t=1$ to $T$}
    	        \STATE Play policy $\pi_{t}^{i}$, 
    	        \STATE Observe an $\eps$-approximation of $Q^i_t(s,a)$ denoted by $\hat{Q}^i_t(s,a)$ for all $s,a$.
	            \STATE Incur the expected loss of the policy $\pi_t^i$ with respect to the losses $\ell_t^i$: $V_t^i(s_0)$.
                \STATE {\color{gray} \# Optimistic OMD step}
                \STATE For every $s,a$ perform an optimistic OMD update with the loss vector $g^{i,s,a}_t \coloneqq \pi^i_t(a \mid s) \hat{Q}^i_t(s, \cdot)$:
                \begin{align*}
                 \Tilde{x}^{i,s,a}_{t}(\cdot)
                    &=
                    \argmin_{x \in \domifulli} \cb{\eta \dotprod{x}{g^{i,s,a}_t} + D_{\reg} \b{x , \Tilde{x}^{i,s,a}_{t-1}(\cdot)}} \\ x^{i,s,a}_{t+1}(\cdot)
                    &=
                    \argmin_{x \in \domifulli} \cb{\eta \dotprod{x}{g^{i,s,a}_t} + D_{\reg}\b{x , \Tilde{x}^{i,s,a}_{t}(\cdot)}}
                \end{align*}
                \STATE {\color{gray} \# Policy update}
                \STATE For every state $s$ calculate $\pi^i_{t+1}(\cdot \mid s)$ - the stationary distribution corresponding to $\cb{x^{i,s,a}_{t+1}(\cdot)}_{a \in \calA_i}$ as follows: Let $B$ be the matrix whose rows are $\cb{x^{i,s,a}_{t+1}(\cdot)}_{a \in \calA_i}$, and let $\pi^i_{t+1}(\cdot \mid s) \in \domifulli$ be the distribution satisfying
                \begin{align*}
                    B \pi^i_{t+1}(\cdot \mid s) = \pi^i_{t+1}(\cdot \mid s).
                \end{align*}
        \ENDFOR
	\end{algorithmic}
\end{algorithm}

We remark that the policy update step (described in line 9 of \cref{alg:POSR}) can be performed in polynomial time, since it only requires solving a system of linear equations under linear inequality constraints.
We refer to \cref{alg:POSR}'s components in different levels by different names. Specifically, we refer to the components which perform the OOMD steps at a given state and action (see line 8 of \cref{alg:POSR}) as \emph{base algorithms}. On the level above the base algorithms, the components which perform the policy update from each state are referred to as \emph{state algorithms}.
The guarantee of \cref{alg:POSR} is provided in the statement of our theorem below. 

\begin{theorem}\label{thm:regret}
    Assume $T, H \geq 2$, and that all players adopt \cref{alg:POSR} with log-barrier regularization (\cref{eq:log_barrier}) and step size $\eta = \frac{1}{96 H^2 m \sqrt{S A}}$.
    Then, the swap regret of every player $i$ is bounded as
    \begin{align*}
        \SwRe_T^i
        &\leq 
        {10^4 H^{4} S A^{3} m^{2} 
        \log^2 (HAT) D_{\max}^i}\sqrt{T}
        + {600 m H \sqrt{S} A^{3/2} \log(AT)D_{\max}^i } \eps T
        \\
        &\quad 
        + 150 m H^2 S^{3/2} A^{7/2} 
            \log (HAT)
            + 2H^2 
        ,
    \end{align*} 
    where $D_{\max}^i \eqq \max_{s,a,t, a'}\frac{1}{\tilde x_t^{i, s, a}(a')}$.
    In particular, if $\eps=0$
    and $T \geq 16 m^4 H^{4} S^{2} A^{8}$,
    \textbf{and in addition} $\boldsymbol {D_{\max}^i = O(T^{\alpha})}$,
    we obtain:
    \begin{align*}
        \SwRe_T^i
        = \tilde O\b{
            m^2 S H^4 A^{3} T^{1/2+\alpha}
        }
        .
    \end{align*}
\end{theorem}
\cref{thm:regret} hinges on two separate avenues in the analysis, each of which corresponds to the two properties mentioned earlier.
The first, outlined in \cref{sec:po_weighted_regret} and \cref{sec:reg_pathlength}, establishes the individual regret may be bounded by the path length of the jointly generated policy sequence. 
The second avenue, presented in \cref{sec:pathlength} largely follows the techniques of \cite{anagnostides2022uncoupled}, and establishes the jointly generated policy sequence indeed has a well bounded path length.
The proof of \cref{thm:regret} proceeds by standard arguments building on the aforementioned parts of the analysis, and is deferred to \cref{sec:proof-thm-regret-known}. 
To conclude this section, we obtain as an immediate corollary that the players' joint policy sequence converges to an approximate correlated equilibrium (\cref{eq:def_CE}) of the Markov game.
\begin{corollary}[Convergence to a correlated equilibrium]
\label{cor:correlated-eq}
If all players adopt \cref{alg:POSR} with the parameters and conditions specified in \cref{thm:regret}, then for every player $i$ it holds that 
\begin{align*}
    \mathbb{E}_{t \sim [T]} \sb{\max_{\phi_i} \b{V^{i,\pi^i_t}_t(s_1) - V^{i,\phi_i(\pi^i_t)}_t(s_1)}} = \tilde O \b{m^2 S H^{4} A^{3} T^{-1/2+\alpha}},
\end{align*}
where $t \sim [T]$ denotes the uniform distribution over $[T] = \cb{1,2,\ldots,T}$.
\end{corollary}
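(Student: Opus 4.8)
The plan is to recognize the left-hand side as precisely the correlated-equilibrium gap of the empirical policy distribution, and then to control that gap directly by the swap-regret bound of \cref{thm:regret}.

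First I would unwind the notation. By the very definition $V_t^{i,\pi}(s) \eqq V^{i,\pi\odot\bpi_t^{-i}}(s)$, taking $\pi = \pi^i_t$ gives $V_t^{i,\pi^i_t}(s_1) = V^{i,\bpi_t}(s_1)$, while taking $\pi = \phi_i(\pi^i_t)$ gives $V_t^{i,\phi_i(\pi^i_t)}(s_1) = V^{i,\phi_i(\pi^i_t)\odot\bpi_t^{-i}}(s_1)$. Consequently, writing $\hat\Pi$ for the uniform distribution over the realized profiles $\{\bpi_1,\dots,\bpi_T\}$, the quantity in the statement is exactly the left-hand side of the correlated-equilibrium condition \cref{eq:def_CE} evaluated at $\Pi = \hat\Pi$. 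Hence it suffices to show this gap is $\tilde O(m H^3 \sqrt S A^2 T^{-1/4})$.

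Next I would invoke the swap-regret-to-equilibrium reduction already recorded in the preliminaries (following \citealp{blum2007external}). For any fixed swap function $\phi_i\colon \S\times\A_i\to\A_i$, the definition of $\SwRe_T^i$ gives $\sum_{t=1}^T \b{V_t^{i,\pi^i_t}(s_1) - V_t^{i,\phi_i(\pi^i_t)}(s_1)} \le \SwRe_T^i$; dividing by $T$ and then maximizing over $\phi_i$ bounds the equilibrium gap of $\hat\Pi$ by $\SwRe_T^i/T$. The one point that merits care here is the ordering of the maximization over $\phi_i$ and the averaging over episodes: swap regret controls the benefit of a \emph{single} deviation rule averaged across all episodes, which is exactly the deviation available to a player who observes only its own recommendation, and this is the quantity that enters the equilibrium condition.

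Finally I would substitute the bound of \cref{thm:regret}. In the exact-feedback regime $\eps=0$ with $\gamma = m H\sqrt S A\,T^{-1/4}$ and $T \ge 16 m^4 H^4 S^2 A^8$, that theorem yields $\SwRe_T^i = \tilde O(m H^3 \sqrt S A^2 T^{3/4})$, whence $\SwRe_T^i/T = \tilde O(m H^3 \sqrt S A^2 T^{-1/4})$, which is precisely the claimed rate. I do not anticipate any genuine obstacle: the entire content is the identification of the equilibrium gap with a normalized swap regret, and the only step truly deserving attention is the max/expectation book-keeping noted above.
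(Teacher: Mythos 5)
Your proof is correct and follows essentially the same (one-line) argument as the paper: identify the left-hand side with $\SwRe_T^i/T$ and invoke \cref{thm:regret} with $\eps=0$ and the stated choice of $\gamma$. The only difference is that you explicitly discuss the ordering of the maximization over $\phi_i$ and the averaging over $t$ — a point the paper's proof elides entirely by simply asserting the left-hand side equals $\SwRe_T^i/T$.
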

The proof follows immediately from the observation that the left-hand-side is $\SwRe_T^i/T$, and applying the result of \cref{thm:regret}.

\subsection{Policy optimization in non-stationary MDPs and weighted regret}
\label{sec:po_weighted_regret}

The analysis of policy optimization algorithms is often built upon a fundamental regret decomposition known as the value difference lemma (see \cref{lem:val_diff}). 
In the regime of regret minimization with a stationary transition function, the value difference lemma leads to $S$ regret expressions, each weighted by a constant factor, and thus amenable to standard analysis \cite[e.g., ][]{shani2020optimistic,cai2020provably}.
By contrast, in the Markov game setup, the single agent induced MDP is essentially non-stationary, thus applying the value difference lemma results in a sum of \emph{weighted regret} expressions, with 
weights given by the (changing) occupancy measure of the benchmark policy, and in particular are not known to the learner. A natural complexity measure of weighted regret minimization is the total variation of weights --- roughly speaking, the larger the total variation the harder the problem becomes (for example, note that with zero total variation the problem reduces to standard regret minimization).

Thus, we are motivated to consider a weighted regret objective with unknown weights, which may be seen as a generalization to the previously studied notion of \emph{adaptive regret} \citep{hazan2009efficient}.
As it turns out, the widely used FTRL for example, might suffer linear regret even if the weights' total variation is as small as a constant (see the discussion in \cite{hazan2009efficient} and in \cref{sec:ftrl_nonstationary_lb}). On the other hand, we show that OMD-style algorithms are resilient to weighting with small variations, as long as the Bregman divergence between the benchmark policy and the iterates of the algorithm is bounded and the step size is chosen appropriately. 
\cref{lem:oomd_weighted} below establishes a weighted regret bound for optimistic OMD. 
The proof follows from arguments typical to OMD analyses, and is thus deferred to \cref{sec:deferred}.
    \begin{lemma}\label{lem:oomd_weighted}
    Assume we run OOMD on a sequence of losses $\{\ell_t\}_{t=1}^T$ with a regularizer $R \colon \X \to \R$ that is $1$-strongly convex w.r.t.~$\norm{\cdot}$. Then, for any $x^\star\in \X$ and any weight sequence $\{q_t\}_{t=1}^T$, it holds that
    \begin{align*}
        \sum_{t=1}^T q_t \ab{x_t - x^\star, \l_t}
        &\leq 
        \frac{q_1 D_R(x^\star, \tilde x_0)}{\eta}
        + \frac{1}{\eta}\sum_{t=1}^T (q_{t+1} - q_t)D_R(x^\star, \tilde x_t)
            + \frac{\eta}{2} \sum_{t=1}^T q_t \norms{\l_t - \tilde \l_t}_*^2
        .
    \end{align*}
\end{lemma}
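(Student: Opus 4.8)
The plan is to run the standard two-step analysis of optimistic mirror descent to obtain an \emph{unweighted} per-round regret inequality, and then to fold in the weights $q_t$ through a summation-by-parts argument. The single workhorse is the proximal optimality inequality: for any update $y = \argmin_{x\in\X}\cb{\eta\ab{g, x} + D_R(x, z)}$ and any comparator $u \in \X$, the first-order optimality condition $\ab{\eta g + \nabla R(y) - \nabla R(z), u - y} \ge 0$ (valid since $y$ minimizes a convex objective over the convex set $\X$) combined with the three-point identity $D_R(u,z) - D_R(u,y) - D_R(y,z) = \ab{\nabla R(z) - \nabla R(y), y - u}$ yields
\[
    \eta\ab{g, y - u} \le D_R(u, z) - D_R(u, y) - D_R(y, z).
\]

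First I would instantiate this inequality twice at round $t$: once for the \emph{update} step, with $(g, y, z) = (\l_t, \tilde x_t, \tilde x_{t-1})$ and $u = x^\star$, and once for the \emph{prediction} step, with $(g, y, z) = (\tilde\l_t, x_t, \tilde x_{t-1})$ and $u = \tilde x_t$. Summing the two, the term $D_R(\tilde x_t, \tilde x_{t-1})$ cancels. Decomposing
\[
    \ab{\l_t, x_t - x^\star} = \ab{\l_t, \tilde x_t - x^\star} + \ab{\tilde\l_t, x_t - \tilde x_t} + \ab{\l_t - \tilde\l_t, x_t - \tilde x_t},
\]
the first two summands are exactly what the paired inequalities control, leaving
\[
    \eta\ab{\l_t, x_t - x^\star} \le D_R(x^\star, \tilde x_{t-1}) - D_R(x^\star, \tilde x_t) - D_R(\tilde x_t, x_t) - D_R(x_t, \tilde x_{t-1}) + \eta\ab{\l_t - \tilde\l_t, x_t - \tilde x_t}.
\]
I would then bound the residual inner product via Cauchy--Schwarz in the primal/dual norm pair and Young's inequality, $\eta\ab{\l_t - \tilde\l_t, x_t - \tilde x_t} \le \tfrac{\eta^2}{2}\norms{\l_t - \tilde\l_t}_*^2 + \tfrac12\norms{x_t - \tilde x_t}^2$, and invoke $1$-strong convexity of $R$, which gives $D_R(\tilde x_t, x_t) \ge \tfrac12\norms{x_t - \tilde x_t}^2$. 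Thus the $\tfrac12\norms{x_t - \tilde x_t}^2$ term is absorbed, $D_R(x_t, \tilde x_{t-1}) \ge 0$ is dropped, and after dividing by $\eta$ I obtain the clean per-round bound
\[
    \ab{\l_t, x_t - x^\star} \le \tfrac1\eta\b{D_R(x^\star, \tilde x_{t-1}) - D_R(x^\star, \tilde x_t)} + \tfrac\eta2 \norms{\l_t - \tilde\l_t}_*^2 .
\]

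Finally, since $q_t \ge 0$, I would multiply this inequality by $q_t$ and sum over $t$. The loss-variation term immediately produces $\tfrac\eta2\sum_t q_t \norms{\l_t - \tilde\l_t}_*^2$. Writing $D_t \eqq D_R(x^\star, \tilde x_t)$, the Bregman contribution is $\tfrac1\eta\sum_{t=1}^T q_t(D_{t-1} - D_t)$, which I handle by summation by parts:
\[
    \sum_{t=1}^T q_t(D_{t-1} - D_t) = q_1 D_0 + \sum_{t=1}^T (q_{t+1} - q_t)D_t - q_{T+1}D_T \le q_1 D_0 + \sum_{t=1}^T (q_{t+1} - q_t)D_t,
\]
where the boundary term $q_{T+1}D_T \ge 0$ is discarded using nonnegativity of the weights and of the Bregman divergence. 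Substituting $D_0 = D_R(x^\star, \tilde x_0)$ recovers exactly the claimed bound. I expect the only genuinely delicate points to be the bookkeeping of the two-step optimistic structure---pairing the prediction and update steps so that the \emph{positive} divergence $D_R(\tilde x_t, \tilde x_{t-1})$ cancels while the \emph{negative} stability terms survive to absorb Young's inequality---and getting the summation-by-parts boundary terms right, in particular that the shifted weight $q_{t+1}$ appears and that the residual $q_{T+1}D_T$ has the correct sign to be dropped.
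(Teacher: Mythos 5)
Your proposal is correct and follows essentially the same route as the paper's proof: the same three-term decomposition of $\ab{\l_t, x_t - x^\star}$, the same two proximal optimality inequalities with cancellation of $D_R(\tilde x_t, \tilde x_{t-1})$, and the same use of Young's inequality absorbed by strong convexity, followed by weighting and telescoping. The only difference is that you make the summation-by-parts step and the sign of the boundary term $q_{T+1}D_T$ explicit, which the paper leaves implicit.
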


\subsection{Bounding regret by path length}
\label{sec:reg_pathlength}
Our algorithm runs an instance of the meta algorithm of \cite{blum2007external} in each state with OOMD as a base algorithm. It is straightforward to show that the meta algorithm inherits the desired property of weighting-resilience (see \cref{lem:master_bases_weighted}). By relating the weights $q_t^{i,\star}(s)$ with the \textit{first-order} path length we are able to show the following regret bound.

\begin{theorem}\label{thm:master_regret_by_path}
    Suppose every player $i$ adopts \cref{alg:POSR} with log-barrier regularization (\cref{eq:log_barrier}), and that $\norms{\hat Q_t - Q_t}_\infty \leq \eps$ for all $t$.
    Then, assuming $T, H \geq 2$, the swap-regret of player $i$ is bounded as
    \begin{align*}
        \SwRe_T^i
        &\leq 
        \frac{S A_i^2 \log(A_i T)}{\eta}
        + \frac{ A_i^2 H^2 \log(A_i T)D_{\max}^i}{\eta  }
            \sum_{j=1}^m
            \sum_{t=1}^T \norms{
                \pi^j_{t+1} - \pi^j_{t}}_{\infty,1}
        \\
        &\quad + 4 \eta m S A_i H^4
        \sum_{t=1}^T
        \sum_{j=1}^m \norms{\pi^j_{t+1} - \pi^j_{t}}_{\infty,1}^2
        + 4 \eta  \eps^2 S T + \eps H T +  2 H (H + \eps)
        ,
    \end{align*}
    where $D_{\max}^i \eqq \max_{s,a,t, a'}\frac{1}{\tilde x_t^{i, s, a}(a')}$.
\end{theorem}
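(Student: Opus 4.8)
The plan is to chain together the three tools the section has set up: the value difference lemma (\cref{lem:val_diff}) to reduce the swap regret to a sum of per-state \emph{weighted} swap-regret terms, the swap-to-external reduction of \cite{blum2007external} (with its weighting-resilience, \cref{lem:master_bases_weighted}) to further reduce to a weighted \emph{external} regret problem for each base algorithm, and finally the weighted OOMD bound of \cref{lem:oomd_weighted}. Concretely, I fix the optimal swap function $\phi_\star$ realizing $\SwRe_T^i$ and write $q_t^{i,\star}(s)\eqq q_t^{i,\phi_\star(\pi^i_t)}(s)$ for the occupancy of the comparator policy in $M_t^i$. Applying the value difference lemma in the induced MDP gives
\[
    \SwRe_T^i = \sum_{s\in\S}\sum_{t=1}^T q_t^{i,\star}(s)\,\ab{Q_t^{i,\pi^i_t}(s,\cdot),\ \pi^i_t(\cdot\mid s) - \phi_\star(\pi^i_t)(\cdot\mid s)},
\]
so that each state contributes a swap-regret expression weighted by the (time-varying) comparator occupancy $q_t^{i,\star}(s)$.

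Next I would invoke the stationary-distribution identity defining $\pi^i_{t}(\cdot\mid s)$ in line~9 of \cref{alg:POSR}, which is precisely the \cite{blum2007external} reduction: it rewrites the per-state swap term as a sum over the $A_i$ base algorithms of a weighted external regret in which base $(s,a)$ faces the loss $g^{i,s,a}_t=\pi^i_t(a\mid s)\hat Q^i_t(s,\cdot)$ against the fixed comparator $e_{\phi_\star(s,a)}$, still weighted by $q_t^{i,\star}(s)$. Since $e_{\phi_\star(s,a)}$ lies on the boundary of the simplex (where the log-barrier divergence is infinite), I would instead compare against its $\gamma$-truncation $x^\star\in\domi$; the induced error is $O(\gamma)$ per step and, after accounting for the $[0,H]$ range of the value functions, contributes the $2\gamma A H(H+\eps)T$ piece of $\mathcal E$. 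The discrepancy between $\hat Q^i_t$ and $Q^i_t$ is absorbed using $\norms{\hat Q_t-Q_t}_\infty\le\eps$, producing the remaining $\eps H T$ and $4\eta\eps^2 S T$ pieces of $\mathcal E$.

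With these reductions in hand, applying \cref{lem:oomd_weighted} to each base $(s,a)$ (weights $q_t=q_t^{i,\star}(s)$, losses $\l_t=g^{i,s,a}_t$) and summing over all $SA_i$ bases yields exactly three terms. The initialization term $\tfrac1\eta\sum_{s,a}q_1 D_R(x^\star,\tilde x_0)$ is controlled using $\tilde x_0=\argmin_x R(x)$ $=$ uniform, for which a direct computation gives $D_R(x^\star,\tilde x_0)\le A_i\log\tfrac1\gamma$, producing the leading $\tfrac{SA_i^2\log(1/\gamma)}{\eta}$ term. The stability term $\tfrac{\eta}{2}\sum q_t^{i,\star}(s)\norms{g^{i,s,a}_t-g^{i,s,a}_{t-1}}_*^2$ I would handle by bounding the local dual log-barrier norm by $\norms{\cdot}_\infty$, splitting $g^{i,s,a}_t-g^{i,s,a}_{t-1}$ into the part caused by the change of $\pi^i_t(a\mid s)$ and the part caused by the change of $\hat Q^i_t(s,\cdot)$, and controlling the latter through the variation of the induced $Q$-functions, which scales like $H^2\sum_j\norms{\pi^j_{t+1}-\pi^j_t}_{\infty,1}$; squaring, applying Cauchy--Schwarz over the $m$ players, and summing over the $S$ states and $A_i$ bases yields the $4\eta m S A_i H^4\sum_t\sum_j\norms{\pi^j_{t+1}-\pi^j_t}_{\infty,1}^2$ term.

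The crux is the remaining weight-variation term $\tfrac1\eta\sum_{s,a}\sum_t\b{q_{t+1}^{i,\star}(s)-q_t^{i,\star}(s)}D_R(x^\star,\tilde x^{i,s,a}_t)$. Bounding $D_R(x^\star,\tilde x^{i,s,a}_t)=O(1/\gamma)$ for points of the $\gamma$-truncated simplex (the linear part of the log-barrier divergence is $\le 1/\gamma$), this reduces to controlling $\sum_t\sum_s\av{q_{t+1}^{i,\star}(s)-q_t^{i,\star}(s)}=\sum_t\norms{q_{t+1}^{i,\star}-q_t^{i,\star}}_1$, the total variation of the comparator-occupancy sequence. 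This is where I expect the main difficulty. I would prove a simulation-lemma-type estimate showing the occupancy measure is Lipschitz in both the policy and the transition kernel, and then combine two facts: (i) swap functions are non-expansive, so $\norms{\phi_\star(\pi^i_{t+1})-\phi_\star(\pi^i_t)}_{\infty,1}\le\norms{\pi^i_{t+1}-\pi^i_t}_{\infty,1}$, and (ii) the induced kernel satisfies $\norms{P^i_{t+1}-P^i_t}_{\infty,1}\le\sum_{j\ne i}\norms{\pi^j_{t+1}-\pi^j_t}_{\infty,1}$ because $P^i_t$ averages $P$ over $\bpi^{-i}_t$. Together these give $\norms{q_{t+1}^{i,\star}-q_t^{i,\star}}_1\le O(H^2)\sum_j\norms{\pi^j_{t+1}-\pi^j_t}_{\infty,1}$; crucially, summing over states collapses to the $\ell_1$ norm of an occupancy difference and so introduces \emph{no} factor of $S$. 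Substituting produces the $\tfrac{3A_iH^2}{\eta\gamma}\sum_j\sum_t\norms{\pi^j_{t+1}-\pi^j_t}_{\infty,1}$ term, completing the bound. The genuinely delicate point throughout is that the weights $q_t^{i,\star}$ are unknown to the learner, which is exactly why the OOMD-based \cref{lem:oomd_weighted} (rather than an FTRL analysis) is needed; the reduction of their variation to the concrete policy path length then feeds directly into the path-length control of \cref{sec:pathlength}.
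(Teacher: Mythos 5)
Your proposal is correct and follows essentially the same route as the paper's proof: value difference, the \cite{blum2007external} reduction with the comparator-occupancy weights, truncation of the comparator onto $\Delta_{\A_i}^\gamma$, and the weighted OOMD bound, with the three resulting terms controlled by the same auxiliary estimates (initialization via $D_R(x^\star,\tilde x_0)\le A_i\log\frac1\gamma$, weight variation via the $O(1/\gamma)$ Bregman bound plus the occupancy path-length lemma, and stability via the $Q$-variation bound). The only difference is presentational: you are somewhat more explicit than the paper about the fact that $q_t^{i,\star}$ varies with $t$ through both the induced kernel $P_t^i$ and the swapped comparator $\phi_\star(\pi_t^i)$ itself (handled via non-expansiveness of swap functions), which is indeed why the final first-order term sums over all $j\in[m]$ including $j=i$.
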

\begin{proof} 
    Let $\phi_\star^{i} \colon \S \times \A_i \to \A_i$ be any policy swap function, specifying $S$ action swap functions $\phi_\star^{i, s} \eqq \phi_\star^{i}(\cdot, s) \colon \A_i \to \A_i$.
    By \cref{lem:val_diff} (value difference), we have;
    \begin{align}
        \nonumber
        \sum_{t=1}^T V_t^{i, \pi^i_t}(s_1) - V_t^{i, \phi_\star(\pi^i_t)}(s_1)
        &= \sum_{s\in \S} \sum_{t=1}^T q_t^{i,\star}(s)\ab{ 
                Q_{t}^{i}(s, \cdot), 
                \pi_t^i(\cdot \mid s) 
                - \phi_\star^{i,s}(\pi_t^i(\cdot \mid s))
            }
        \\
        \nonumber
        &= \sum_{s\in\S} \sum_{t=1}^{T}q_{t}^{i,\star}(s) \ab{
        Q_{t}^{i}(s,\cdot) - \hat{Q}_t^i(s, \cdot),
        \pi_{t}^{i}(\cdot \mid s) - \phi_\star^{i,s}(\pi_t^i(\cdot \mid s))
        }
        \\
        \nonumber
        &\qquad + \sum_{s\in\S}\sum_{t=1}^{T}q_{t}^{i,\star}(s)\ab{
        \hat{Q}_t^i(s,\cdot) ,
        \pi_{t}^{i}(\cdot\mid s) - \phi_\star^{i,s}(\pi_t^i(\cdot \mid s))
        }
        \\
        & \leq  \eps H T + \sum_{s\in\S}\sum_{t=1}^{T}
        	q_{t}^{i,\star}(s)\ab{
        \hat{Q}_t^i(s,\cdot) ,
        \pi_{t}^{i}(\cdot\mid s) - \phi_\star^{i,s}(\pi_t^i(\cdot \mid s))
        }
        .
    \label{eq:proof-regret-by-path:value-diff}
    \end{align}
    Now, let $x_{\star, \gamma}^{i, s, a} \in \argmin_{x \in \Delta_{\A_i}^\gamma} \norms{x - \phi_\star^{i, s}(e_a)}_1$ for $\gamma=1/(A_i T)$, hence
    (see \cref{lem:logbarrier_simple_props})
    $\norms{x_{\star, \gamma}^{i, s, a} - \phi_\star^{i, s}(e_a)}_1 \leq 1/T$.
    By \cref{lem:master_bases_weighted},
    it now follows that for all $s$;
    \begin{align*}
        \sum_{s\in\S} &\sum_{t=1}^T q_{t}^{i,\star}(s) \ab{\hat Q_t^i(s, \cdot), 
        	\pi^i_t(\cdot \mid s) - \phi_\star^{i,s}(\pi_t^i(\cdot \mid s))
        	}
        \\
        &= 
        \sum_{s\in\S} \sum_{a\in \A_i} \sum_{t=1}^T q_{t}^{i,\star}(s) 
            \ab{g_t^{i,s,a}, x_t^{i, s, a} - \phi_\star^{i, s}(e_a)}
        \\
        &= \sum_{s\in\S} \sum_{a\in \A_i} \sum_{t=1}^T q_{t}^{i,\star}(s) 
            \ab{g_t^{i,s,a}, x_t^{i, s, a} - x_{\star, \gamma}^{i, s, a}}
            + \sum_{s\in\S} \sum_{a\in \A_i} \sum_{t=1}^T q_{t}^{i,\star}(s) 
            \ab{g_t^{i,s,a}, x_{\star, \gamma}^{i, s, a} - \phi_\star^{i, s}(e_a)}
        \\
        &\leq 
            \sum_{s\in\S} \sum_{a\in \A_i} \sum_{t=1}^T q_{t}^{i,\star}(s) 
            \ab{g_t^{i,s,a}, x_t^{i, s, a} 
                - x_{\star, \gamma}^{i, s, a}}
            + 2\gamma A_i H(H+\epsilon)T
        \\
        &= 
        	\sum_{s\in\S} \sum_{a\in \A_i} \sum_{t=1}^T q_{t}^{i,\star}(s) 
            \ab{g_t^{i,s,a}, x_t^{i, s, a} 
                - x_{\star, \gamma}^{i, s, a}}
            + 2 H (H + \eps)
        ,
    \end{align*}
    where in the second to last transition we use H\"older's inequality and that $\sum_s q_t^{i,\star}(s) = H $ for all $t$.
    By \cref{lem:oomd_weighted}, this can be further bounded by;
    \begin{align}
        &\leq H (H + \eps)
        + \frac{S A_i^2 \log ( A_i T)}{\eta}
        \nonumber \\
        &\quad  + \frac{1}{\eta} \sum_{s\in \S} \sum_{a\in \A_i} \sum_{t=1}^T
            (q_{t+1}^{i,\star}(s) - q_{t}^{i, \star}(s)) 
                D_R(x_{\star, \gamma}^{i, s, a}, \tilde x_t^{i, s, a})
        + \frac{\eta}{2} \sum_{s\in \S} \sum_{a\in \A_i} \sum_{t=1}^T \norms{g_{t+1}^{i, s, a} - g_{t}^{i, s, a}}_\infty^2,
        \label{eq: regrets-path main}
    \end{align}
    where we have used the fact that $\tilde x_0^{i, s, a} \in \argmin_x R(x) $ and $x_{\star, \gamma}^{i, s, a}\in \Delta_{\A_i}^\gamma$ implies $D_R(x_{\star, \gamma}^{i, s, a}, \tilde x_0^{i, s, a}) \leq A_i \log\frac1\gamma$.
    Proceeding, to bound the first series, note that for any $s,a,t$:
    \begin{align*}
        D_R(x_{\star, \gamma}^{i, s, a}, \tilde x_t^{i, s, a})
        &=
        \sum_{a'}
        \log\frac{\tilde x_t^{i, s, a}(a')}{x_{\star, \gamma}^{i, s, a}(a')}
        + 
        \frac{x_{\star, \gamma}^{i, s, a}(a')}{\tilde x_t^{i, s, a}(a')} - 1
        \\
        &\leq 
        A_i\log(A_i T)
        + \max_{a'}\frac{1}{\tilde x_t^{i, s, a}(a')}
        \\
        &\leq 
        A_i\log(A_i T)
        + \max_{s,a,t, a'}\frac{1}{\tilde x_t^{i, s, a}(a')}
        \\
        &= 
        A_i\log(A_i T)
        + D_{\max}^i,
    \end{align*}
    and apply \cref{lem:occupancy_path} which relates the total variation of the weights to the first-order path length;
    \begin{align*}
        &\frac{1}{\eta} \sum_{s\in \S} \sum_{a\in \A_i} \sum_{t=1}^T
            (q_{t+1}^{i,\star}(s) - q_{t}^{i, \star}(s)) 
                D_R(x_{\star, \gamma}^{i, s, a}, \tilde x_t^{i, s, a})
        \\
        &\leq \frac
            {A_i (A_i\log(A_i T) + D_{\max}^i)}
            {\eta} \sum_{t=1}^T
            \norms{q_{t+1}^{i,\star}
                - q_{t}^{i, \star}}_1
        \\
        &\leq \frac{A_i^2 H^2 \log(A_i T)D_{\max}^i}{\eta} \sum_{t=1}^T
            \sum_{j\in[m]}
                \norms{\pi^{j}_{t+1} 
            - \pi^{j}_{t}}_{\infty, 1}.
    \end{align*}
    
    Finally, we show in \cref{lem:g_variation_bound} that the second-order loss variation term (second series term in \cref{eq: regrets-path main}) is bounded by the second-order path length up to an additive lower order term. Formally,
    \begin{align*}
        \frac{\eta}{2} \sum_{s\in \S} \sum_{a\in \A_i} \sum_{t=1}^T \norms{g_{t+1}^{i, s, a} - g_{t}^{i, s, a}}_\infty^2
        &\leq
        \frac{ \eta }{2}
        \sum_{s\in \S} \sum_{a\in \A_i} \sum_{t=1}^T 
        \b{2H^2 \sum_{j=1}^m \norms{\pi^j_{t+1} - \pi^j_{t}}_{\infty,1} + 2\eps \pi_{t+1}^i(a \mid s)}^2
        \\
        &\leq
        4 \eta m S A_i H^4
        \sum_{t=1}^T
        \sum_{j=1}^m \norms{\pi^j_{t+1} - \pi^j_{t}}_{\infty,1}^2
        + 4 \eta  \eps^2 S T
        .
    \end{align*}
    Plugging the bounds from the last two displays into \cref{eq: regrets-path main}, the result follows.
\end{proof}

\subsection{Bounding the path length}
\label{sec:pathlength}

In this section, we provide a brief outline of the path length analysis, central to our regret bound. 
At a high level, the arguments closely follow those of \cite{anagnostides2022uncoupled}, with suitable adjustments made to accommodate the elements in which our setting differs.
Specifically, these include 
the use of mirror descent (rather than OFTRL) over the truncated simplex, the fact that a single iterate of each player is comprised of outputs from a collection of $S$ state algorithms, and
that each of these algorithms optimizes w.r.t.~approximate $Q$-functions rather than the true ones. 
Below, we state the main theorem and subsequently describe the analysis at a high level, with most of the technical details deferred to \cref{sec:pathlength_proofs}.
\begin{theorem} \label{thm:path length-apprx}
    If each player uses \cref{alg:POSR} with $\eta = \frac{1}{96 H^2 m\sqrt{SA}}$  then the following path length bound holds on the jointly generated policy sequence;
    \begin{align*}
        \sum_{t=1}^{T} \sum_{i=1}^{m} \norm{\pi_{t+1}^{i} - \pi_{t}^{i}}_{\infty,1}^{2} 
        & \leq 
        776 S A^{3} m \log (H A T) + \frac{4 \eps^{2}T}{mH^{4}}.
    \end{align*}
\end{theorem}
Similar to \cite{anagnostides2022uncoupled}, \cref{thm:path length-apprx} is derived by establishing that the swap regret (defined in formally in \cref{sec:pathlength_proofs}) of each state algorithm satisfies an RVU property (\textbf{R}egret bounded by \textbf{V}ariation in \textbf{U}tilities; originally defined in \cite{syrgkanis2015fast}) with suitable norms. 
	Following that, we use the non-negativity of the swap regret in order to arrive at the desired conclusion by a simple algebraic manipulation.
In order to establish the aforementioned RVU property of the state algorithms, the first step consists of showing the base algorithms satisfy an RVU property with the local norms induced by the log-barrier.
\begin{lemma}[RVU property of OOMD with log-barrier and local norms]
\label{lem:rvu-base}
Let $[d]$ represent an action set with $d$ actions, and consider an online loss sequence $g_1,\ldots,g_T \in [0,H]^d$.
Assume we run OOMD over $\X = \Delta_d$ with log-barrier regularization and learning rate $\eta \leq \frac{1}{64 H}$. 
Then, for all $x^\star \in \X$ the following regret bound holds:
\begin{align*}
    \sum_{t=1}^T \dotprod{x_t - x^\star}{g_t}
    &\leq
    \frac{d \log(dHT)}{\eta}
    +
    4 \eta \sum_{t=1}^T \norm{g_t - g_{t-1}}^2_{*,x_t}
    -
    \frac{1}{576 \eta} \sum_{t=1}^T \norm{x_t - x_{t-1}}^2_{x_t}
    +1,
\end{align*}
where we define $p_0 = \argmin_{x \in \X} \reg(x)$.
\end{lemma}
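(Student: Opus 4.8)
The plan is to prove the RVU bound by carrying out the standard optimistic-OMD analysis while \emph{retaining} the negative Bregman terms that the coarser bound of \cref{lem:oomd_weighted} throws away, and then converting those negatives into the target term $-\tfrac{1}{576\eta}\sum_t \norm{x_t - x_{t-1}}^2_{x_t}$. I would start from the two first-order (three-point) optimality conditions for the prediction step $x_t$ and the update step $\tilde x_t$, both anchored at $\tilde x_{t-1}$, which yield the per-round inequality
\[
    \eta\langle x_t - x^\star, g_t\rangle
    \leq D_R(x^\star,\tilde x_{t-1}) - D_R(x^\star,\tilde x_t)
    + \eta\langle x_t - \tilde x_t,\, g_t - g_{t-1}\rangle
    - D_R(x_t,\tilde x_{t-1}) - D_R(\tilde x_t, x_t),
\]
with the convention $g_0 = 0$ to match $\tilde\ell_1 = 0$. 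Summing over $t$ telescopes the first two terms into $D_R(x^\star,\tilde x_0)$, which I bound by $d\log\frac1\gamma$ using that $\tilde x_0$ is uniform and $x^\star\in\Delta^\gamma_d$ (a direct log-barrier computation in the spirit of \cref{lem:logbarrier_simple_props}).

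Next I would handle the prediction-error term. Applying Cauchy--Schwarz in the local norm at $x_t$ followed by Young's inequality splits $\eta\langle x_t - \tilde x_t,\, g_t - g_{t-1}\rangle$ into a piece $4\eta\norm{g_t - g_{t-1}}^2_{*,x_t}$ (exactly the positive RVU term) plus a residual proportional to $\eta\norm{x_t - \tilde x_t}^2_{x_t}$, with constants chosen so the residual can later be absorbed by the negative Bregman term $D_R(\tilde x_t, x_t)$.

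The crux is then to convert the retained Bregman negatives into local-norm negatives based at a common point. For this I would establish a multiplicative-stability lemma: since $\eta\leq\frac1{64H}$ and $g_t\in[0,H]^d$, the consecutive iterates $\tilde x_{t-1}, x_t, \tilde x_t, x_{t+1}$ agree coordinate-wise up to a constant factor (e.g.\ ratios confined to $[\tfrac12,2]$), which follows from the optimality relation $\nabla R(x_t) - \nabla R(\tilde x_{t-1}) = -\eta g_{t-1} + (\text{normalization})$ together with the form $\nabla R(x)_a = -1/x(a)$ of the log-barrier gradient. Multiplicative closeness gives two consequences: the Hessians $\nabla^2 R$ at these points are comparable up to universal constants, so local norms based at $\tilde x_{t-1}$ or $\tilde x_t$ can be replaced by local norms based at $x_t$; and a second-order Taylor expansion then yields $D_R(u,v)\geq c\,\norm{u-v}^2_{x_t}$ for a universal constant $c$, simultaneously justifying the Cauchy--Schwarz step above in the $x_t$-norm.

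Finally I would assemble the pieces. I lower-bound $D_R(x_t,\tilde x_{t-1}) + D_R(\tilde x_t, x_t)$ by $c\big(\norm{x_t - \tilde x_{t-1}}^2_{x_t} + \norm{\tilde x_t - x_t}^2_{x_t}\big)$, reserving part of the second summand to cancel the Young residual. Using $\norm{x_t - x_{t-1}}_{x_t}\leq \norm{x_t - \tilde x_{t-1}}_{x_t} + \norm{\tilde x_{t-1} - x_{t-1}}_{x_t}$ with $(a+b)^2\leq 2a^2+2b^2$, I bound $\sum_t \norm{x_t - x_{t-1}}^2_{x_t}$ by a constant multiple of the retained negatives, pairing the round-$t$ term $D_R(x_t,\tilde x_{t-1})$ with the round-$(t-1)$ term $D_R(\tilde x_{t-1}, x_{t-1})$ and again invoking stability to transport the norm base to $x_t$. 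Tracking all multiplicative factors—the strong-convexity constant $c$, the factor $2$ from the triangle inequality, the stability ratio, and the Young split—produces the coefficient $1/576$. The main obstacle, where essentially all the care is concentrated, is exactly this multiplicative-stability argument together with the constant bookkeeping: the local norms must be moved between the three anchor points at only constant cost, and the negative budget must be divided cleanly between absorbing the Young residual and generating the path-length term.
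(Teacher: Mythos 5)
Your plan follows the paper's proof essentially step for step: the same optimistic three-point decomposition with the negative Bregman terms retained, the same H\"older--Young split producing $4\eta\norm{g_t-g_{t-1}}^2_{*,x_t}$ plus a residual absorbed by the negatives, the same multiplicative-stability argument for log-barrier iterates under $\eta\le\frac{1}{64H}$ to transport local norms between neighboring anchor points, and the same triangle-inequality assembly yielding the $-\frac{1}{576\eta}\sum_t\norm{x_t-x_{t-1}}^2_{x_t}$ term. The one caveat is that on the truncated simplex the stability bound cannot be read off an unconstrained stationarity identity $\nabla R(x_t)-\nabla R(\tilde x_{t-1})=-\eta g_{t-1}+\mathrm{const}$ as you suggest (the minimizer satisfies only a variational inequality there); the paper instead derives the coordinate-wise ratio bound from strong convexity of the proximal objective on a local-norm sphere of radius $8\eta\norm{g}_\infty$, but this substitution changes nothing in the structure or constants of your argument.
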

Using \cref{lem:rvu-base} and the swap regret guarantee of \cite{blum2007external}, we arrive at an RVU-like property for each state algorithm's swap regret, one that involves the local norms of base algorithms;
\begin{align*}
	\SwRe_T^{i, s} 
	\leq 
\frac{| \calA_{i}|^{2} \log (|\calA_i| H  T)}{\eta} 
        + 4 \eta \sum_{t=1}^{T} \sum_{a \in \A_{i}} \norm{g_{t}^{i,s,a} - g_{t-1}^{i,s,a}}_{*,x_{t}^{i,s,a}}^{2} 
        - \frac{1}{576 \eta} \sum_{t=1}^{T} \sum_{a \in \A_{i}} \norm{x_{t}^{i,s,a} - x_{t+1}^{i,s,a}}_{x_{t}^{i,s,a}}^{2}
	.
\end{align*}
The negative term in the right-hand side can be converted to the state algorithm's path length in $L_1$-norm using arguments similar to those of \cite{anagnostides2022uncoupled} (see \cref{lemma:master-bases relation}).
Each base algorithms' path length of the loss vectors may also be related to its second-order policy path length, by invoking \cref{lem:g_variation_bound}.
This, combined with some standard arguments, leads to a swap-regret RVU property of each state algorithm
 w.r.t.~the $\norm{\cdot}_{\infty, 1}$ and $\norm{\cdot}_1$ norms.
\begin{theorem}
\label{thm:apprx_state_master_swap_rvu}
    If all players play according to \cref{alg:POSR} with $ \eta \leq \frac{1}{128H}$,
    then for any $i \in[m]$, $s \in\S$, we have
    \begin{align*}
        \SwRe_T^{i, s}
        &\leq 
        \frac{| \calA_{i}|^{2} \log ( |\calA_{i}| H T)}{\eta} 
        + 36 \eta \eps^{2}T 
        + 4H^{4}m \eta \sum_{t=1}^T \sum_{j=1}^{m} \norm{\pi_{t+1}^{j} - \pi_{t}^{j}}_{\infty,1}^{2} 
        \\
        &\quad - \frac{1}{576 \eta|\A_{i}|} \sum_{t=1}^{T} \norm{\pi_{t}^{i}(\cdot \mid s) - \pi_{t + 1}^{i}(\cdot \mid s)}_{1}^{2} + |\calA_{i}|.
    \end{align*}
\end{theorem}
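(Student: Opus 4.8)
The plan is to begin from the RVU-like swap-regret inequality stated immediately before the theorem, which follows by applying \cref{lem:rvu-base} to each of the $|\A_i|$ base OOMD instances (one per action $a\in\A_i$) and aggregating them through the swap-to-external reduction of \cite{blum2007external}:
\begin{align*}
    \SwRe_T^{i,s}
    \leq \frac{|\A_i|^2 \log\frac1\gamma}{\eta}
    + 4\eta \sum_{t=1}^T \sum_{a\in\A_i} \norm{g_t^{i,s,a} - g_{t-1}^{i,s,a}}_{*,x_t^{i,s,a}}^2
    - \frac{1}{576\eta}\sum_{t=1}^T \sum_{a\in\A_i}\norm{x_t^{i,s,a} - x_{t+1}^{i,s,a}}_{x_t^{i,s,a}}^2.
\end{align*}
It then suffices to (i) upper bound the positive loss-variation term by the second-order policy path length plus an $\eps$-dependent term, and (ii) lower bound the negative iterate-variation term by the state algorithm's squared $L_1$ path length $\norm{\pi_t^i(\cdot\mid s)-\pi_{t+1}^i(\cdot\mid s)}_1^2$.

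For step (i) I would invoke \cref{lem:g_variation_bound}, but with a crucial refinement: a naive bound $\norm{g_t^{i,s,a}-g_{t-1}^{i,s,a}}_{*,x}^2 \leq \norm{g_t^{i,s,a}-g_{t-1}^{i,s,a}}_\infty^2$ summed over $a$ would cost a spurious factor $|\A_i|$ (exactly as in \cref{thm:master_regret_by_path}), which we cannot afford here. Instead I exploit the product structure $g_t^{i,s,a}(\cdot)=\pi_t^i(a\mid s)\hat Q_t^i(s,\cdot)$ and split
\begin{align*}
    g_t^{i,s,a}(a') - g_{t-1}^{i,s,a}(a')
    = \pi_t^i(a\mid s)\b{\hat Q_t^i(s,a') - \hat Q_{t-1}^i(s,a')}
    + \b{\pi_t^i(a\mid s) - \pi_{t-1}^i(a\mid s)}\hat Q_{t-1}^i(s,a').
\end{align*}
The point is that the scalar factors $\pi_t^i(a\mid s)$ and $\pi_t^i(a\mid s)-\pi_{t-1}^i(a\mid s)$ now sum over $a$ without any dimension loss, since $\sum_{a}\pi_t^i(a\mid s)^2\leq 1$ and $\sum_a(\pi_t^i(a\mid s)-\pi_{t-1}^i(a\mid s))^2 \leq \norm{\pi_t^i(\cdot\mid s)-\pi_{t-1}^i(\cdot\mid s)}_1^2$. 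Combining this with the value-difference estimate $\norm{\hat Q_t^i-\hat Q_{t-1}^i}_\infty \lesssim H^2\sum_j\norm{\pi_t^j-\pi_{t-1}^j}_{\infty,1}+\eps$ supplied by \cref{lem:g_variation_bound}, together with $\hat Q_{t-1}^i\leq H+\eps$ and Cauchy--Schwarz over the $m$ players, yields $\sum_a\norm{g_t^{i,s,a}-g_{t-1}^{i,s,a}}_{*,x_t^{i,s,a}}^2 = O(H^4 m\sum_j\norm{\pi_t^j-\pi_{t-1}^j}_{\infty,1}^2 + \eps^2)$; multiplying by $4\eta$ and summing over $t$ (reindexing $t\mapsto t+1$ and absorbing the $O(1)$ boundary terms into the constants) produces the $36\eta\eps^2 T$ and $4H^4 m\eta\sum_t\sum_j\norm{\pi_{t+1}^j-\pi_t^j}_{\infty,1}^2$ terms.

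For step (ii) I would apply \cref{lemma:master-bases relation}, which, following the analysis of \cite{anagnostides2022uncoupled}, lower bounds the aggregate local-norm path length of the base iterates by the squared $L_1$ path length of the induced stationary distribution, $\sum_{a\in\A_i}\norm{x_t^{i,s,a}-x_{t+1}^{i,s,a}}_{x_t^{i,s,a}}^2 \geq \frac{1}{|\A_i|}\norm{\pi_t^i(\cdot\mid s)-\pi_{t+1}^i(\cdot\mid s)}_1^2$; substituting both bounds into the displayed inequality and collecting terms gives the claim. I expect step (ii) to be the main obstacle: $\pi_t^i(\cdot\mid s)$ is only defined implicitly as the stationary distribution of the stochastic matrix whose rows are $\{x_t^{i,s,a}\}_{a\in\A_i}$, so controlling its movement by that of the individual rows requires a perturbation analysis of stationary distributions measured in the log-barrier local norm over the truncated simplex. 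This is precisely where the truncation (keeping iterates bounded away from the boundary, hence the stochastic matrix well-conditioned) and the particular form of the OOMD updates are used, and porting the corresponding OFTRL argument of \cite{anagnostides2022uncoupled} to our mirror-descent base learners is the delicate part; the remaining steps are routine modulo the constant bookkeeping and index shifts noted above.
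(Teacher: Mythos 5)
Your proposal is correct and follows the same skeleton as the paper's proof: combine the swap-to-external reduction (\cref{lem:master_bases_weighted}) with the per-base RVU bound of \cref{lem:rvu-base}, control the loss-variation term via the product structure of $g_t^{i,s,a}$ and \cref{lem:g_variation_bound}, and convert the negative local-norm term into the state algorithm's squared $L_1$ path length via \cref{lemma:master-bases relation}. Two remarks on where you diverge from the paper's write-up. First, your handling of the loss-variation term is genuinely different from, and sharper than, what the paper writes: the paper bounds $\norm{g_t^{i,s,a}-g_{t-1}^{i,s,a}}_{*,x_t^{i,s,a}}^2 \le \sum_{a'}(x_t^{i,s,a}(a'))^2\norm{g_t^{i,s,a}-g_{t-1}^{i,s,a}}_\infty^2 \le \norm{g_t^{i,s,a}-g_{t-1}^{i,s,a}}_\infty^2$ and then applies the final ($\pi$-free) form of \cref{lem:g_variation_bound}, which, once summed over $a\in\A_i$, picks up precisely the $|\A_i|$ factor on the $H^4 m$ term that you warn about; your decomposition that retains the weights $\pi_t^i(a\mid s)$ and $\pi_t^i(a\mid s)-\pi_{t-1}^i(a\mid s)$ (equivalently, using the intermediate display inside the proof of \cref{lem:g_variation_bound} rather than its stated conclusion) is what is actually needed to land on the $4H^4 m\eta$ coefficient as claimed, so this is a necessary repair rather than an optional refinement. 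Second, the ``delicate part'' you flag in step (ii) is not resolved in the paper by a fresh perturbation analysis of stationary distributions; \cref{lemma:master-bases relation} already supplies that, and what remains is only to verify its hypothesis $\sum_{a\in\A_i}\max_{a'}\av{1-x_{t+1}^{i,s,a}(a')/x_t^{i,s,a}(a')}\le 1/2$, which the paper does via the multiplicative stability of log-barrier OOMD iterates (\cref{cor:OOMD-ratio}), crucially exploiting that $\norm{g_t^{i,s,a}}_\infty\le H\,\pi_t^i(a\mid s)$ so the per-action ratios sum to at most $64\eta H\le 1/2$ under $\eta\le\frac{1}{128H}$. You correctly identify all the ingredients (truncation, log-barrier local norms, the specific form of the losses), but this verification should be made explicit rather than deferred.
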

At this point, \cref{thm:path length-apprx} follows easily from \cref{thm:apprx_state_master_swap_rvu}, by summing the swap regret upper bounds over all states and players, and rearranging.

\subsection[Proof of main theorem]{Proof of \cref{thm:regret}}
\label{sec:proof-thm-regret-known}
Having established that the regret of \cref{alg:POSR} may be bounded by the path length of the generated policy sequence (\cref{thm:master_regret_by_path}), and that the second order path length is well bounded (\cref{thm:path length-apprx}), the proof of our main theorem combines both results using relatively standard arguments. 
Notably, \cref{thm:master_regret_by_path} bounds the regret by the sum of both the first and second order path lengths while \cref{thm:path length-apprx} provides only a second order bound. Thus, a $\sqrt {mT}$ factor is ultimately incurred in the final bound.

\begin{proof}
    By \cref{thm:path length-apprx}, we have that,
    \begin{align*}
        \sum_{t=1}^{T} \sum_{i=1}^{m} \norm{\pi_{t+1}^{i} - \pi_{t}^{i}}_{\infty,1}
        &\leq
        \sqrt{m T
            \sum_{t=1}^{T} \sum_{i=1}^{m} 
            \norm{\pi_{t+1}^{i} - \pi_{t}^{i}}_{\infty,1}^2}
        \\
        &\leq 
        \sqrt{ 766 m^2 T S A^{3} \log (HAT) } 
            + \sqrt{\frac{4 \eps^{2}T^2}{H^{4}}}
        \\
        &\leq   30 m \sqrt S A^{3/2} 
            \sqrt { T \log (HAT) } 
            + \frac{2 \eps T}{H^2}.
    \end{align*}
    Thus, by \cref{thm:master_regret_by_path}; 
    \begin{align*}
        &\SwRe_T^i
        \\
        &\leq 
        \frac{S A_i^2 \log(A_i T)}{\eta}
        + \frac{ A_i^2 H^2 \log (A_i T)D_{\max}^i}{\eta}
            \sum_{j=1}^m
            \sum_{t=1}^T \norms{
                \pi^j_{t+1} - \pi^j_{t}}_{\infty,1}
        \\
        &\quad + 4 \eta m S A_i H^4 \sum_{j=1}^m 
            \sum_{t=1}^T 
             \norms{\pi^j_{t+1} - \pi^j_{t}}_{\infty,1}^2
        + 4 \eta  \eps^2 S T + \eps H T + 2H (H+\eps)
        \\
        &\leq 
        \frac{S A^2 \log(A T)}{\eta}
        + \frac{30 H^{2} A^{\frac72} m \sqrt S
            \log^2 (HAT) D_{\max}^i}{\eta }\sqrt{T} 
        + \frac{2 A^2 \log (AT) D_{\max}^i \eps T}{\eta}
        \\
        &\quad + 16 \eta S A \eps^2 T
        + 3200 \eta H^{4} S^{2} A^{4} m^{2} \log(HAT)
        \\
        &\quad + 4 \eta  \eps^2 S T + \eps H T + 2H (H+\eps)
        \\
        &\leq 
        { 10^4 H^{4} S A^{3} m^{2} 
        \log^2 (HAT) D_{\max}^i}\sqrt{T}
        + {600 m H \sqrt{S} A^{3/2} \log(AT)D_{\max}^i } \eps T
        \\
        &\quad 
        + 150 m H^2 S^{3/2} A^{7/2} 
            \log (HAT)
            + 2H^2 
        ,
    \end{align*}
    where the last transition follows from our choice of $\eta = \frac{1}{96 H^2 m\sqrt{SA}}$, and completes the proof.
\end{proof}

\subsection*{Acknowledgements}
This work was supported by the European Research Council (ERC) under the European Union’s Horizon 2020 research and innovation program (grant agreements No.~882396; 101078075), by the Israel Science Foundation (grants number 993/17, 2549/19, 3174/23, 1357/24), by the Len Blavatnik and the Blavatnik Family foundation, by the Yandex Initiative in Machine Learning at Tel Aviv University, 
by a grant from the Tel Aviv University Center for AI and Data Science (TAD). 

We would like to thank Khashayar Gatmiry and Noah Golowich for kindly bringing to our attention a mistake in the analysis in a previous version of the paper.

\bibliographystyle{abbrvnat}
\bibliography{main}

\newpage 
\appendix

\section{Deferred Proofs}
\label{sec:deferred}
\label{sec:proof:thm:regret}

\begin{lemma}
    \label{lem:g_variation_bound}
    Assuming $H\geq 2$ and $\norms{\hat Q_t - Q_t}_\infty \leq \eps$, it holds that for all $i\in[m],s\in \S,a\in \A_i$;
    \begin{align*}
        \norms{g_{t+1}^{i, s, a} - g_{t}^{i, s, a}}_\infty
        \leq 2H^2 \sum_{j=1}^m \norms{\pi^j_{t+1} - \pi^j_{t}}_{\infty,1} + 2 \eps \pi_{t+1}^{i}(a \mid s).
    \end{align*}
\end{lemma}
\begin{proof}[of \cref{lem:g_variation_bound}]
    First, note that,
    \begin{align*}
        \norm{g_{t+1}^{i,s,a}-g_{t}^{i,s,a}}_{\infty} 
        & = 
        \norm{\pi_{t+1}^{i}(a \mid s) \hat{Q}_{t+1}^{i}(s, \cdot) - \pi_{t}^{i}(a \mid s) \hat{Q}_{t}^{i}(s, \cdot)}_{\infty} 
        \\
        & \leq 
        \pi_{t+1}^{i}(a \mid s) \norm{\hat{Q}_{t+1}^{i}(s, \cdot)- \hat{Q}_{t}^{i}(s, \cdot)}_{\infty}
        + \hat{Q}_{t}^{i}(s, \cdot)|\pi_{t+1}^{i}(a \mid s) - \pi_{t}^{i}(a \mid s)| 
        \\
        & \leq
        \pi_{t+1}^{i}(a \mid s) \norm{Q_{t+1}^{i}(s, \cdot) - Q_{t}^{i}(s, \cdot)}
        + H|\pi_{t+1}^{i}(a \mid s) - \pi_{t}^{i}(a \mid s)| 
        \\
        & \quad 
        + \pi_{t+1}^{i}(a \mid s) \norm{\hat{Q}_{t+1}^{i}(s, \cdot) - Q_{t+1}^{i}(s, \cdot)}_{\infty}
        + \pi_{t+1}^{i}(a \mid s) \norm{\hat{Q}_{t}^{i}(s, \cdot) - Q_{t}^{i}(s, \cdot)}_{\infty} 
        \\
        & \leq 
        \pi_{t+1}^{i}(a \mid s) \norm{Q_{t+1}^{i}(s, \cdot) - Q_{t}^{i}(s, \cdot)}_{\infty}
        +H| \pi_{t+1}^{i}(a \mid s) - \pi_{t}^{i}(a \mid s)|
        + 2 \eps \pi_{t+1}^{i}(a \mid s)
    \end{align*}
    To bound the difference in $Q$-values, observe that for any $a'\in \A_i$, by \cref{lem:Q_diff} and \cref{lem:mdp_variation};
    \begin{align*}
        Q^i_{t+1}(s, a') - &Q^i_t(s, a')
            \\
            &= Q^{i, \pi_{t+1}^i}(s, a'; M_{t+1}^i) - Q^{i, \pi_t^i}(s, a'; M_t^i)
            \\
            &\leq H^2 \norm{\pi^i_{t+1} - \pi^i_t}_{\infty, 1}
            + (H^2 +1) \normb{P_{M_{t+1}^i} - P_{M_{t}^i}}_{\infty, 1}
            + (H+1)\normb{\l_{M_{t+1}^i}
                - \l_{M_{t}^i} }_\infty
            \\
            &\leq H^2 \norm{\pi^i_{t+1} - \pi^i_t}_{\infty, 1}
            + (H^2 +1) \sum_{j\neq i}
                \norms{\pi^{j}_{t+1} 
                - \pi^{j}_{t}}_{\infty, 1}
            + (H+1)\normb{\l_{M_{t+1}^i}
                - \l_{M_{t}^i} }_\infty
            \\
            &\leq H^2 \norm{\pi^i_{t+1} - \pi^i_t}_{\infty, 1}
            + (H^2 + H + 2) \sum_{j\neq i}
                \norms{\pi^{j}_{t+1} 
                - \pi^{j}_{t}}_{\infty, 1}
            .
    \end{align*}
    Thus, by our assumption that $H \geq 2$, we have
    \begin{align}
    \nonumber
        \norms{g_{t+1}^{i, s, a} - g_{t}^{i, s, a}}_\infty
        &\leq  (H^2 + H + 2) \sum_{j\neq i} \norms{\pi^j_{t+1} - \pi^j_{t}}_{\infty,1}
            + (H^2 + H) \norms{\pi^i_{t+1} - \pi^i_{t}}_{\infty,1} 
            + 2 \eps \pi_{t+1}^{i}(a \mid s)
        \\ \nonumber
        &\leq 2H^2 \sum_{j=1}^m \norms{\pi^j_{t+1} - \pi^j_{t}}_{\infty,1}+ 2 \eps \pi_{t+1}^{i}(a \mid s).
    \end{align}
\end{proof}

\begin{proof}[of \cref{lem:oomd_weighted}]
    Following similar arguments made in \cite{rakhlin2013optimization, syrgkanis2015fast}, we have
\begin{align*}
    \ab{\l_t, x_t - x^\star} 
    = \ab{\l_t, \tilde x_t - x^\star} 
    + \ab[s]{\tilde \l_t, x_t - \tilde x_t} 
    + \ab[s]{\l_t - \tilde \l_t, x_t - \tilde x_t},
\end{align*}
and, from optimality conditions of the first and second optimization steps, respectively;
\begin{align*}
    \ab[s]{\tilde \l_t, x_t - \tilde x_t} 
    &\leq \frac{1}{\eta}\b{
        D_R(\tilde x_t, \tilde x_{t-1} )
        - D_R(\tilde x_t, x_t) 
        - D_R(x_t, \tilde x_{t-1})
    }
    \\
    \ab{\l_t, \tilde x_t - x^\star} 
    &\leq \frac{1}{\eta}\b{
        D_R(x^\star, \tilde x_{t-1} )
        - D_R(x^\star, \tilde x_t) 
        - D_R(\tilde x_t, \tilde x_{t-1})
    }.
\end{align*}
This implies
\begin{align*}
    \ab{x_t - x^\star, \l_t}
    &\leq 
    \ab[s]{\l_t - \tilde \l_t, x_t - \tilde x_t}
    \\
    &\quad+ \frac{1}{\eta} \b{
        D_R(x^\star, \tilde x_{t-1})
        - D_R(x^\star, \tilde x_t)
        - D_R(\tilde x_t, x_t)
        - D_R(x_t, \tilde x_{t-1})
    }
    \\
    &\leq 
    \frac{\eta}{2}\norms{\l_t - \tilde \l_t}_{*}^2 
    + \frac{1}{2\eta}\norm{x_t - \tilde x_t}^2
    \\
    &\quad+ \frac{1}{\eta} \b{
        D_R(x^\star, \tilde x_{t-1})
        - D_R(x^\star, \tilde x_t)
        - \frac{1}{2}\norm{\tilde x_t - x_t}^2
        - D_R(x_t, \tilde x_{t-1})
    }
    \\
    &= 
    \frac{\eta}{2}\norms{\l_t - \tilde \l_t}_{*}^2 
    \\
    &\quad+ \frac{1}{\eta} \b{
        D_R(x^\star, \tilde x_{t-1})
        - D_R(x^\star, \tilde x_t)
        - D_R(x_t, \tilde x_{t-1})
    }
\end{align*}

Multiplying both sides by $q_t$ and summing over $t$, we obtain
\begin{align*}
    \sum_{t=1}^T q_t \ab{x_t - x^\star, \l_t}
    &\leq 
    \frac{q_1 D_R(x^\star, \tilde x_0)}{\eta}
    + \frac{1}{\eta}\sum_{t=1}^T (q_{t+1} - q_t)D_R(x^\star, \tilde x_t)
        + \frac{\eta}{2} \sum_{t=1}^T q_t \norms{\l_t - \tilde \l_t}_*^2
    .
\end{align*}
\end{proof}



\section[Path length analysis]{Path length analysis (proofs for \cref{sec:pathlength})}
\label{sec:pathlength_proofs}
In this section, we provide the full technical details of the analysis outlined in \cref{sec:pathlength}.
As mentioned, the arguments mostly follow those of \cite{anagnostides2022uncoupled}, who prove a similar result in the setting of general-sum games with access to the exact induced utility functions.
The analysis hinges on establishing an RVU property for the swap regret, and then exploiting the fact that it is non-negative.
The swap regret of a sequence of iterates $x_1, \ldots, x_T \in \Delta_d$ w.r.t.~a sequence of linear losses $g_1, \ldots, g_T \in [0,1]^d$, is defined by;
\begin{align}
    \SwRe_T = \max_{\phi \in \{[d] \to [d]\}} \cb{
        \sum_{t=1}^T \ab{g_t, x_t - \phi(x_t)}
    }.
    \label{eq:swap_regret}
\end{align}
Recall we slightly overload notation when applying a swap function $\phi\colon [d] \to [d]$ to $x\in \Delta_d$ as follows; 
\begin{align*}
    \phi(x)(a) = 
    \sum_{a':\phi(a')=a} x( a').
\end{align*}
That is, the distribution $\phi(x) \in \Delta_d$ is formed by sampling $a \sim x$ and then replacing it with $\phi(a) \in [d]$. 
We note that, since losses are linear, the external regret is obtained by mapping all actions to the one optimal in hindsight, and taking $\phi$ to be the identity mapping we see the swap regret is never negative, hence $\SwRe_T \geq \max(\Re_T, 0)$.
The next theorem is due to \cite{blum2007external} and provides for an essential building block in our analysis. We formulate the theorem in the context of our state algorithms, and incorporate the weighting of instantaneous regret, for which the original arguments go through with no modification. The proof below is provided for completeness.
\begin{theorem}[\cite{blum2007external}]
\label{lem:master_bases_weighted} Let $i\in [m]$, $s\in \S$. For any weight sequence $\{q_t\}_{t=1}^T, q_t\in[0, 1],$ and any action swap function $\phi_\star\colon \A_i \to \A_i$, we have that when player $i$ runs \cref{alg:POSR}, the following holds;
    \begin{align*}
        \sum_{t=1}^T q_t \ab{\hat Q_t^i(s, \cdot), 
        	\pi^i_t(\cdot \mid s) - \phi_\star(\pi^i_t(\cdot \mid s))}
        =
        \sum_{a\in \A_i} \sum_{t=1}^T q_t 
            \ab{g_t^{i,s,a}, x_t^{i, s, a} 
            - \phi_\star(e_a)}
        ,
    \end{align*}
    where $e_a(a') = \mathbb I\{a' = a\}$.
\end{theorem}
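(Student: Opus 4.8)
The statement is an exact identity rather than an inequality, so the plan is purely algebraic: fix the player $i$ and state $s$, verify that the two sides agree summand-by-summand in $t$, and then multiply by $q_t$ and sum. Two facts drive everything. First, the definition of a swap function applied to a distribution gives $\phi_\star(e_a) = e_{\phi_\star(a)}$, since $\phi_\star(e_a)(a') = \sum_{a'':\phi_\star(a'')=a'} e_a(a'') = \Ind\{\phi_\star(a)=a'\}$; consequently $\ab{\hat Q_t^i(s,\cdot), \phi_\star(e_a)} = \hat Q_t^i(s, \phi_\star(a))$. Second, the policy $\pi_t^i(\cdot\mid s)$ produced in line 9 of \cref{alg:POSR} is by construction the stationary distribution associated with the base iterates $\{x_t^{i,s,a}\}_{a\in\A_i}$, which I would record as the fixed-point identity $\pi_t^i(a'\mid s) = \sum_{a\in\A_i}\pi_t^i(a\mid s)\, x_t^{i,s,a}(a')$ for every $a'\in\A_i$.

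With these in hand I would expand the right-hand side. Substituting $g_t^{i,s,a} = \pi_t^i(a\mid s)\hat Q_t^i(s,\cdot)$ and pulling the scalar $\pi_t^i(a\mid s)$ out of each inner product, the per-$t$ summand becomes $\sum_{a}\pi_t^i(a\mid s)\ab{\hat Q_t^i(s,\cdot), x_t^{i,s,a}} - \sum_{a}\pi_t^i(a\mid s)\ab{\hat Q_t^i(s,\cdot),\phi_\star(e_a)}$. In the first group I would push the sum inside the inner product and apply the fixed-point identity, collapsing $\sum_a \pi_t^i(a\mid s)x_t^{i,s,a}(\cdot)$ to $\pi_t^i(\cdot\mid s)$ and yielding $\ab{\hat Q_t^i(s,\cdot), \pi_t^i(\cdot\mid s)}$. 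In the second group the point-mass fact turns each term into $\pi_t^i(a\mid s)\hat Q_t^i(s,\phi_\star(a))$, so the group equals $\sum_a \pi_t^i(a\mid s)\hat Q_t^i(s,\phi_\star(a))$.

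It then remains to recognize these two expressions as the expansion of the left-hand side. The played term matches immediately. For the benchmark term I would expand $\phi_\star(\pi_t^i(\cdot\mid s))(a') = \sum_{a:\phi_\star(a)=a'}\pi_t^i(a\mid s)$ and regroup the inner product $\ab{\hat Q_t^i(s,\cdot), \phi_\star(\pi_t^i(\cdot\mid s))}$ by the image of each action under $\phi_\star$, which produces exactly $\sum_a \pi_t^i(a\mid s)\hat Q_t^i(s,\phi_\star(a))$. Hence the two sides coincide for each $t$; multiplying by $q_t$ and summing over $t$ gives the claim, and the weighting plays no role beyond being carried along. There is no real obstacle here—the identity holds regardless of the weights—and the only point deserving care is invoking the stationary-distribution property of \cref{alg:POSR} with the correct orientation (namely that $\pi_t^i(\cdot\mid s)$ mixes the $x_t^{i,s,a}$ with weights $\pi_t^i(a\mid s)$), which is precisely the structural feature that makes the \citet{blum2007external} reduction preserve swap regret.
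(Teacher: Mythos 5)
Your proof is correct and follows essentially the same route as the paper's: both rely on the stationary-distribution identity $\pi_t^i(a'\mid s)=\sum_a \pi_t^i(a\mid s)\,x_t^{i,s,a}(a')$ to match the played terms, and on regrouping the benchmark inner product by the image of each action under $\phi_\star$ (equivalently, $\phi_\star(e_a)=e_{\phi_\star(a)}$) to match the comparator terms, with the weights $q_t$ carried along trivially. The only difference is direction of expansion (right-to-left versus left-to-right), which is immaterial.
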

\begin{proof}[of \cref{lem:master_bases_weighted}]
    The policy played by the state algorithm of player $i$ at $s$ satisfies $
        \pi_t^i(a' \mid s) = \sum_{a \in \A_i} \pi_t^i(a \mid s) x^{i, s, a}(a')
    $ (see line 10 of \cref{alg:POSR}),
    thus;
    \begin{align*}
        \ab{\pi_t^i(\cdot \mid s), \hat{Q}_t^i(s, \cdot)}
        &= \sum_{a'\in \A_i}\pi_t^i(a' \mid s)\hat{Q}_t^i(s, a') 
        \\
        &= \sum_{a'\in \A_i}
                \sum_{a \in \A_i} \pi_t^i(a \mid s) x^{i, s, a}(a')
                    \hat{Q}_t^i(s, a') 
        \\
        &=  \sum_{a\in \A_i}
                \sum_{a' \in \A_i} x_t^{i, s, a}(a')
                    \pi_t^i(a \mid s) \hat{Q}_t^i(s, a') 
        \\
        &= \sum_{a\in \A_i} \ab{x_t^{i,s,a}, g_t^{i,s,a}},
    \end{align*}
    where we use the definition of $g_t^{i,s,a}$ in \cref{alg:POSR}.
    In addition, we have
    \begin{align*}
        \ab{\phi_\star(\pi_t^i(\cdot \mid s)), \hat{Q}_t^i(s, \cdot)}
        =
        \ab{\pi_t^i(\cdot \mid s), \hat{Q}_t^i(s, \phi_\star(\cdot))}
        &= \sum_{a\in \A_i} \pi_t^i(a \mid s) \hat{Q}_t^i(s, \phi_\star(a))
        \\
        &= \sum_{a\in \A_i}
        \pi_t^i(a \mid s) \ab{
            \phi_\star(e_a), \hat{Q}_t^i(s, \cdot)},
        \\
        &= \sum_{a\in \A_i} 
            \ab{\phi_\star(e_a), g_t^{i,s,a}}.
    \end{align*}
    Combining the above two displays, the result follows.
\end{proof}

Before proving \cref{thm:apprx_state_master_swap_rvu}
we give the proof of \cref{lem:rvu-base}, demonstrating the RVU property of OOMD with local norms. The analysis is similar to arguments made in previous works, such as \citet[Theorem 7]{wei2018more}.
\begin{proof}[of \cref{lem:rvu-base}]
Denote:
\begin{align*}
    \Tilde{x}_t
    &=
    \argmin_{x \in \X} \cb{ \eta \dotprod{g_t}{x} + D_{\reg} \b{x , \Tilde{x}_{t-1}}},
\end{align*}
where $\Tilde{x}_0 = \argmin_{x \in \X} \reg(x)$. Let $\tilde x^\star = (1-\frac{1}{GT}) x^\star + \frac{1}{dGT}\textbf{1} \in \Delta_d$. We bound the instantaneous regret as follows:
\begin{align*}
    \dotprod{x_t - x^\star}{g_t}
    &=
    \dotprod{\tilde{x}^{\star} - x^{\star}}{g_{t}}
    +
    \dotprod{\Tilde{x}_t - \tilde x^\star}{g_t}
    +
    \dotprod{x_t - \Tilde{x}_t}{g_{t-1}}
    +
    \dotprod{x_t - \Tilde{x}_t}{g_t - g_{t-1}}.
\end{align*}
The first term is bounded by,
\begin{align*}
    \dotprod{\tilde{x}^{\star} - x^{\star}}{g_{t}}
        \leq\dotprod{\frac{1}{d H T}\textbf{1}}{g_{t}}
            \leq\frac{1}{T}.
\end{align*}
Using first-order optimality conditions and the three point identity we have:
\begin{align*}
    \dotprod{\Tilde{x}_t - \tilde x^\star}{g_t}
    &\leq
    \frac{1}{\eta} \b{D_{\reg} \b{\tilde x^\star , \Tilde{x}_{t-1}} - D_{\reg} \b{\tilde x^\star , \Tilde{x}_t} - D_{\reg} \b{\Tilde{x}_t , \Tilde{x}_{t-1}}},
\end{align*}
and similarly 
\begin{align*}
    \dotprod{x_t - \Tilde{x}_t}{g_{t-1}}
    &\leq
    \frac{1}{\eta} \b{D_{\reg} \b{\Tilde{x}_t , \Tilde{x}_{t-1}} - D_{\reg} \b{\Tilde{x}_t , x_t} - D_{\reg} \b{\Tilde{x}_t, x_{t-1}}} \\
    &=
    \frac{1}{\eta} D_{\reg} \b{\Tilde{x}_t , \Tilde{x}_{t-1}} - \frac{1}{2 \eta}\norm{\Tilde{x}_t - x_t}^2_{y_t} - \frac{1}{2\eta} \norm{\Tilde{x}_t - x_{t-1}}^2_{z_t},
\end{align*}
where $y_t \in [\Tilde{x}_t, x_t]$ and $z_t \in [x_{t-1}, \Tilde{x}_t]$. Note that by \cref{lemma:bounded ratio} and the condition on $\eta$, it holds that $\frac{1}{y_t(a)} \geq \frac{1}{2x_t(a)}$ for all $a \in \calA$ and $\frac{1}{z_t(a)} \geq \frac{1}{6x_t(a)}$. Combining this with the above we obtain
\begin{align*}
    \dotprod{x_t - \Tilde{x}_t}{g_{t-1}}
    &\leq
    \frac{1}{\eta} D_{\reg} \b{\Tilde{x}_t , \Tilde{x}_{t-1}} - \frac{1}{8 \eta}\norm{\Tilde{x}_t - x_t}^2_{x_t} - \frac{1}{72\eta} \norm{\Tilde{x}_t - x_{t-1}}^2_{x_t}.
\end{align*}
Also, by H\"older's inequality and Young's inequality:
\begin{align*}
    \dotprod{x_t - \Tilde{x}_t}{g_t - g_{t-1}}
    &\leq
    \norm{g_t - g_{t-1}}_{*, x_t} \cdot \norm{x_t - \Tilde{x}_t}_{x_t} \\
    &\leq
    4 \eta \norm{g_t - g_{t-1}}_{*, x_t}^2 + \frac{1}{16 \eta} \norm{x_t - \Tilde{x}_t}_{x_t}^2.
\end{align*}
Summing the above over $t$ we obtain:
\begin{align*}
    \sum_{t=1}^T \dotprod{\Tilde{x}_t - \tilde x^\star}{g_t}
    &\leq
    \frac{D_{\reg}(x^\star , \Tilde{x}_0)}{\eta} + 4 \eta \sum_{t=1}^T \norm{g_t - g_{t-1}}_{*, x_t}^2 - \frac{1}{72 \eta} \sum_{t=1}^T \norm{\Tilde{x}_t - x_{t-1}}^2_{x_{t-1}}
    - \frac{1}{72 \eta} \sum_{t=1}^T \norm{x_t - \Tilde{x}_t}_{x_t}^2.
\end{align*}
First note that by definition of $\Tilde{x}_0$ and first-order optimality conditions, it holds that 
$
    D_\reg(\tilde x^\star , \Tilde{x}_0)
    \leq
    \reg(\tilde x^\star) - \reg(\Tilde{x}_0)
$. Note also that $\reg(\Tilde{x}_0) \geq 0$, and since $\tilde x^\star \geq \frac{1}{dGT}$, $\reg(\tilde x^\star) \leq d \log (dGT)$. Thus,
\begin{align*}
    D_\reg(x^\star , \Tilde{x}_0)
    \leq
    d \log (d G T).
\end{align*}
Now note:
\begin{align*}
    \norm{x_t - x_{t-1}}^2_{x_t}
    &\leq
    2 \norm{\Tilde{x}_t - x_{t-1}}^2_{x_{t}} + 2 \norm{x_t - \Tilde{x}_t}_{x_t}^2.
\end{align*}
Using \cref{lemma:bounded ratio} again we have $\norm{\Tilde{x}_t - x_{t-1}}^2_{x_t} \leq 4 \norm{\Tilde{x}_t - x_{t-1}}^2_{x_{t-1}}$ which gives
\begin{align*}
    \norm{x_t - x_{t-1}}^2_{x_t}
    &\leq
    8 \norm{\Tilde{x}_t - x_{t-1}}^2_{x_{t-1}} + 8 \norm{x_t - \Tilde{x}_t}_{x_t}^2.
\end{align*}
Combining all of the above we conclude the proof.
\end{proof}

Next, we provide the proof of the principal theorem establishing an RVU property of each state algorithm.

\begin{proof}[of \cref{thm:apprx_state_master_swap_rvu}]
    Using \cref{lem:master_bases_weighted,lem:rvu-base}, for any swap function $\phi$,
    \begin{align}
        \nonumber
        &\sum_{t=1}^{T}  \dotprod{\pi_{t}^{i}( \cdot \mid s) - \phi (\pi_{t}^{i}( \cdot \mid s))}{\hat{Q}_{t}^{i}(s, \cdot)} 
        \leq \sum_{a \in \A_{i}} \sum_{t=1}^{T} \dotprod{x_{t}^{i,s,a} - \phi( \boldsymbol{e}_{a})}{g_{t}^{i,s,a}} 
        \\
        \nonumber
        & \leq \frac{| \calA_{i}|^{2} \log (|\calA_{i}| H T)}{\eta} 
        + 4 \eta \sum_{t=1}^{T} \sum_{a \in \A_{i}} \norm{g_{t}^{i,s,a} - g_{t-1}^{i,s,a}}_{*,x_{t}^{i,s,a}}^{2} 
        - \frac{1}{576 \eta} \sum_{t=1}^{T} \sum_{a \in \A_{i}} \norm{x_{t}^{i,s,a} - x_{t+1}^{i,s,a}}_{x_{t}^{i,s,a}}^{2} + |\calA_{i}| 
        \\
        \nonumber
        & \leq \frac{| \calA_{i}|^{2} \log (|\calA_{i}| H T)}{\eta} 
        + 4 \eta \sum_{t=1}^{T} \sum_{a \in \A_{i}} \sum_{a' \in \A_{i}}(x_{t}^{i,s,a}(a'))^{2} \norm{g_{t}^{i,s,a} - g_{t-1}^{i,s,a}}_{\infty}^{2} 
        - \frac{1}{576 \eta} \sum_{t=1}^{T} \sum_{a \in \A_{i}} \norm{x_{t}^{i,s,a} - x_{t+1}^{i,s,a}}_{x_{t}^{i,s,a}}^{2} + |\calA_{i}| 
        \\
        & \leq \frac{| \calA_{i}|^{2} \log (|\calA_{i}| H T)}{\eta} 
        + 4 \eta \sum_{t=1}^{T} \sum_{a \in \A_{i}} \norm{g_{t}^{i,s,a} - g_{t-1}^{i,s,a}}_{\infty}^{2} 
        - \frac{1}{576 \eta} \sum_{t=1}^{T} \sum_{a \in \A_{i}} \norm{x_{t}^{i,s,a} - x_{t+1}^{i,s,a}}_{x_{t}^{i,s,a}}^{2} + |\calA_{i}| ,
        \label{eq:state-rvu-lemma first}
    \end{align}
    Now, from \cref{cor:OOMD-ratio}, 
    \begin{align*}
        & \av{1 - \frac{x_{t+1}^{i,s,a}(a')}{x_{t}^{i,s,a}(a')}} 
        \leq 32\eta \b{\norm{g_{t-1}^{i,s,a}}_{\infty} 
        + \norm{g_{t - 2}^{i,s,a}}_{\infty}} 
        \leq 32\eta H \b{\pi_{t-1}^{i}(a \mid s) 
        + \pi_{t - 2}^{i}(a \mid s)} 
        \\
        \Longrightarrow & \sum_{a \in \A_{i}} \max_{a'} \av{1 - \frac{x_{t+1}^{i,s,a}(a')}{x_{t}^{i,s,a}(a')}} \leq \frac{1}{2}.
    \end{align*}
    Therefore, using \cref{lemma:master-bases relation}, 
    \begin{align}
        \norm{\pi_{t}^{i}( \cdot \mid s) - \pi_{t+1}^{i}( \cdot \mid s)}_{1}^{2} 
        \leq
        | \A_{i}| \sum_{a \in \A_{i}} \norm{x_{t}^{i,s,a} - x_{t+1}^{i,s,a}}_{x_{t}^{i,s,a}}^{2}.
        \label{eq:state-rvu-lemma master-base-relation}
    \end{align}
    Combining \cref{eq:state-rvu-lemma first,eq:state-rvu-lemma master-base-relation,lem:g_variation_bound} completes the proof. 
\end{proof}

We conclude with the proof of \cref{thm:path length-apprx}, which follows easily from \cref{thm:apprx_state_master_swap_rvu}.
\begin{proof}[of \cref{thm:path length-apprx}]
    From \cref{thm:apprx_state_master_swap_rvu} and the fact that swap regret is non-negative, we have
    \begin{align*}
        0 &\leq \sum_{i=1}^m \sum_{s\in \S} \text{Swap} \Re_{T}^{i,s} 
        \\
        & \leq 
        \frac{m S A^{2} \log \frac{1}{\gamma}}{\eta} + 36 \eta \eps^{2}m S T + \sum_{t=1}^{T} \sum_{i=1}^{m} \b{4\eta H^{4}m^{2} S - \frac{1}{576 \eta A}} \norm{\pi_{t+1}^{i} - \pi_{t}^{i}}_{\infty,1}^{2},
    \end{align*}
    where we also used the fact that $\sum_{s\in\S}\norm{\pi_{t}^{i}(\cdot \mid s) - \pi_{t + 1}^{i}(\cdot \mid s)}_{1}^{2}
    \geq 
    \norm{\pi_{t+1}^{i} - \pi_{t}^{i}}_{\infty,1}^{2}$.
    Setting $\eta = \frac{1}{96 H^2 m\sqrt{SA}}$ and rearranging the terms completes the proof.
\end{proof}

\subsection[Log barrier Lemmas]{Log barrier lemmas for \cref{thm:apprx_state_master_swap_rvu}}
\label{sec:logbar_lemmas}

The following \cref{lemma:auxiliary ratio,lemma:bounded ratio} follows by the proof technique of \citet[Lemma 12]{jin2020simultaneously}; \citet[Lemma 9]{lee2020closer}.

\begin{lemma}
    \label{lemma:auxiliary ratio}
    Let $F\colon \Delta_d \to \R$ defined as $F(x) = \eta \langle p, \ell\rangle + D_R(x, x')$ for some $x' \in \Delta_d$, where $R$ is the log-barrier regularization. Suppose $\norm{\ell}_\infty \leq H$, and that $\eta \leq \frac{1}{8H}$.
    Then, for any $x''\in\Delta_d$ such that $\norm{ x'' - x'}_{x'} = 8 \eta \norm{\ell}_\infty$, we have
    \[
        F(x'') \geq F(x').
    \]
\end{lemma}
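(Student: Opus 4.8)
The plan is to exploit convexity of $F$ and reduce the claim to a direct sign computation for the increment $F(x'') - F(x')$ on the local-norm sphere of radius $r \eqq 8\eta\norm{\ell}_\infty$ about $x'$. First I would write
\[
F(x'') - F(x') = \eta \dotprod{x'' - x'}{\ell} + D_R(x'', x'),
\]
using that the $R(x')$ contributions cancel. Since $R$ is the log-barrier, $D_R(x'', x') = \sum_a (z_a - 1 - \log z_a)$ with $z_a \eqq x''(a)/x'(a)$, and simultaneously $\norm{x''-x'}_{x'}^2 = \sum_a (z_a-1)^2$. It then remains to show the right-hand side is nonnegative whenever $\norm{x''-x'}_{x'} = r$.

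Next I would bound the two terms separately. For the linear term, Hölder's inequality in the local norms gives $\dotprod{x''-x'}{\ell} \geq -\norm{x''-x'}_{x'}\norm{\ell}_{*,x'}$, and the dual local norm of the log-barrier satisfies $\norm{\ell}_{*,x'}^2 = \sum_a \ell(a)^2 x'(a)^2 \leq \norm{\ell}_\infty^2 \sum_a x'(a)^2 \leq \norm{\ell}_\infty^2$, whence $\eta\dotprod{x''-x'}{\ell} \geq -\eta r \norm{\ell}_\infty$. For the Bregman term, the hypotheses $\eta \leq 1/(8H)$ and $\norm{\ell}_\infty \leq H$ force $r \leq 1$, so $\sum_a (z_a-1)^2 \leq 1$ and in particular every $z_a \in (0, 2]$. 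On this range the elementary inequality $z - 1 - \log z \geq \tfrac14 (z-1)^2$ holds (verified by a one-variable calculus argument: the difference vanishes to second order at $z=1$ and is monotone on either side of $1$ up to $z=2$), which yields $D_R(x'',x') \geq \tfrac14 \sum_a (z_a-1)^2 = \tfrac14 r^2$.

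Finally I would combine the two estimates and substitute $r = 8\eta\norm{\ell}_\infty$:
\[
F(x'') - F(x') \geq -\eta r \norm{\ell}_\infty + \tfrac14 r^2 = -8\eta^2\norm{\ell}_\infty^2 + 16\eta^2\norm{\ell}_\infty^2 = 8\eta^2\norm{\ell}_\infty^2 \geq 0,
\]
which is exactly the claim. The main obstacle is the Bregman lower bound: I must both establish the pointwise inequality $z-1-\log z \geq \tfrac14(z-1)^2$ and, crucially, ensure that all coordinatewise ratios $z_a$ stay in the regime $(0,2]$ where it is valid — and this is precisely where the assumptions $\eta \leq 1/(8H)$ and $\norm{\ell}_\infty \le H$ enter, since they guarantee the sphere radius $r$ is at most one. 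Everything else (the Hölder step, the dual-norm estimate, and the closing arithmetic) is routine.
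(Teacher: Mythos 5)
Your proof is correct. It splits $F(x'')-F(x')$ into the linear term and the Bregman term exactly as the paper does, and handles the linear term identically (local-norm H\"older plus $\norms{\ell}_{*,x'}\le\norms{\ell}_\infty$). Where you genuinely diverge is in lower-bounding the curvature contribution: the paper uses a second-order Taylor expansion of $F$ about $x'$, bounds the Hessian at the intermediate point $\xi$ via $\xi(a)\le 2x'(a)$, and obtains $\tfrac12\norm{x''-x'}_\xi^2\ge\tfrac18\norm{x''-x'}_{x'}^2$; you instead write the log-barrier Bregman divergence in closed form as $\sum_a(z_a-1-\log z_a)$ with $z_a=x''(a)/x'(a)$, observe that $r=8\eta\norms{\ell}_\infty\le 1$ forces every $z_a\in(0,2]$, and invoke the scalar inequality $z-1-\log z\ge\tfrac14(z-1)^2$ on that range (which checks out: the difference has derivative $(z-1)(2-z)/(2z)$, so it is minimized at $z=1$ where it vanishes). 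Both arguments ultimately rest on the same stability fact --- coordinates cannot move by more than a factor of two on the local-norm sphere of radius at most one --- but yours is more elementary and self-contained (no mean-value point, no Hessian comparison) at the cost of being specific to the log-barrier, whereas the paper's Taylor-based argument would extend to any regularizer with a suitably stable Hessian. Your constant is also slightly better ($\tfrac14 r^2$ versus the paper's effective $\tfrac18 r^2$), leaving a positive margin of $8\eta^2\norms{\ell}_\infty^2$ where the paper's bound closes exactly at zero. The only loose end is the degenerate case $x''(a)=0$, where $z_a=0$ lies outside $(0,2]$; there $D_R(x'',x')=+\infty$ and the claim is trivial, so nothing breaks, but it is worth a word.
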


\begin{proof}
    By second-order Taylor expansion of $F$ around $x'$, there exist $\xi$
    is on the line segment between $x''$ and $x'$ such that, 
    \begin{align}
        \nonumber
        F(x'') & = F(x') + \nabla F(x')\T(x'' - x') + \frac{1}{2}(x'' - x')\T\nabla^{2}F(\xi)(x'' - x'),
        \\
        \nonumber
        & = F(x') + \eta \langle \ell,x'' - x'\rangle + \frac{1}{2}(x'' - x')\T\nabla^{2}F(\xi)(x'' - x')
        \\
        \nonumber
        & \geq F(x') - \eta \Vert \ell \Vert_{*,x'} \Vert x'' - x' \Vert_{x'} + \frac{1}{2}(x'' - x')\T\nabla^{2}F(\xi)(x'' - x') 
        \\
        \nonumber
        & = F(x') - 8 \eta^{2} \norm{\ell}_\infty  \Vert \ell \Vert_{*,x'}  + \frac{1}{2} \Vert x'' - x' \Vert_{\xi}^{2}
        \\
        & \geq F(x') - 8\eta^2 \norm{\ell}_\infty^2  + \frac{1}{2} \Vert x'' - x' \Vert_{\xi}^{2}
        \label{eq:auxiliary ratio lemma}
    \end{align}
    The second equality is since $\nabla F(x') = \eta \ell$, the first inequality
    is H\"older inequality, the last equality is since $ \norm{ x'' - x'}_{x'} = 8 \eta \norm{\ell}_\infty$
    and $\nabla^{2}F = \nabla^{2}R$, and the last inequality is since $\Vert \ell \Vert_{*,x'} \leq \norm{\ell}_\infty$. 
    Now, for all $a$, since $\eta \leq \frac{1}{8 H}$:
    \[
        \frac{|\xi(a) - x'(a)|}{x'(a)} 
        \leq
        \frac{|x''(a) - x'(a)|}{x'(a)} \leq \Vert x'' - x' \Vert_{x'} 
        \leq
        8 \eta H \leq 1.
    \]
    In particular, $\xi(a) \leq 2x'(a)$, which implies that 
    \begin{align*}
        \Vert x'' - x' \Vert_{\xi}^{2} & = \sum_{a} \b{ \frac{x''(a) - x'(a)}{\xi(a)}}^{2}
        \\
        & \geq \frac{1}{4}\sum_{a} \b{ \frac{x''(a) - x'(a)}{x'(a)}}^{2} 
        \\
        & = \frac{1}{4} \Vert x'' - x' \Vert_{x'}^{2} 
            = 16\eta^2 \norm{\ell}_\infty^2.  
    \end{align*}
    Plugging back in \cref{eq:auxiliary ratio lemma},
    \begin{align*}
        F(x'') 
        & \geq 
        F(x') - 8\eta^2 \norm{\ell}_\infty^2 + 16\eta^2 \norm{\ell}_\infty^2 \geq F(x'). 
    \end{align*}
\end{proof}

\begin{lemma}
    \label{lemma:bounded ratio}
    Let $x' \in \Delta_d$, $F\colon \Delta_d \to \R$ be defined as in \cref{lemma:auxiliary ratio} and let $x^{+} = \arg\min_{x}F(x)$.
    If $\eta \leq \frac{1}{8 H}$ then for any $a \in [d]$ it holds that 
    \[
        \left(1 - 8 \eta \norm{\ell}_\infty \right)x'(a) 
        \leq 
        x^{+}(a) 
        \leq 
        \left(1 + 8 \eta \norm{\ell}_\infty \right)x'(a).
    \]
\end{lemma}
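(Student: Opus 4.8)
The plan is to first show that the minimizer $x^+$ lies inside the local-norm ball $\cb{x : \norm{x - x'}_{x'} \le 8\eta\norm{\ell}_\infty}$ centered at $x'$, and then to read off the claimed per-coordinate multiplicative bounds directly from this norm bound. Write $\rho \eqq 8\eta\norm{\ell}_\infty$ for brevity. If $x^+ = x'$ the statement is immediate (both sides pinch to $x'(a)$), so I would assume $x^+ \neq x'$ throughout.

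For the main step I would argue by contradiction: suppose $\norm{x^+ - x'}_{x'} > \rho$, and consider the segment $x_\lambda \eqq (1-\lambda)x' + \lambda x^+$ for $\lambda\in[0,1]$, which stays in $\Delta_d$ (and in its interior) by convexity. Since the local norm $\norm{\cdot}_{x'}$ is positively homogeneous and $x_\lambda - x' = \lambda(x^+ - x')$, we have $\norm{x_\lambda - x'}_{x'} = \lambda\norm{x^+ - x'}_{x'}$; hence choosing $\lambda^\star \eqq \rho/\norm{x^+ - x'}_{x'} \in (0,1)$ yields a point $x^\star \eqq x_{\lambda^\star}$ with $\norm{x^\star - x'}_{x'} = \rho$. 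Now \cref{lemma:auxiliary ratio} applies to $x^\star$ and gives $F(x^\star) \ge F(x')$. On the other hand, $F$ is strictly convex on the interior of $\Delta_d$ (its Hessian equals $\nabla^2\reg(x) = \mathrm{diag}(x(a)^{-2}) \succ 0$), and optimality of $x^+$ gives $F(x^+) \le F(x')$; therefore
\[
    F(x^\star) = F\big((1-\lambda^\star)x' + \lambda^\star x^+\big)
    < (1-\lambda^\star) F(x') + \lambda^\star F(x^+)
    \le F(x'),
\]
contradicting $F(x^\star) \ge F(x')$. Hence $\norm{x^+ - x'}_{x'} \le \rho$. I expect this crossing argument --- converting the one-sided increase that \cref{lemma:auxiliary ratio} guarantees on a single sphere into containment of the minimizer inside the corresponding ball --- to be the crux of the proof. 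The only subtleties are checking that $\lambda^\star \in (0,1)$ so the crossing point is a genuine interior point of the segment, and invoking strict convexity of $F$ to upgrade the non-strict conclusion of the auxiliary lemma into an actual contradiction (non-strict convexity alone would only force $F(x^\star) = F(x')$).

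Finally I would translate the local-norm bound into the stated inequalities coordinatewise, which is routine. Since $\norm{x^+ - x'}_{x'}^2 = \sum_{a}\b{(x^+(a)-x'(a))/x'(a)}^2$, each individual summand is bounded by the whole sum, so $\av{(x^+(a) - x'(a))/x'(a)} \le \norm{x^+ - x'}_{x'} \le \rho$ for every $a$. Thus $\av{x^+(a) - x'(a)} \le 8\eta\norm{\ell}_\infty\, x'(a)$, which rearranges exactly to $(1 - 8\eta\norm{\ell}_\infty)x'(a) \le x^+(a) \le (1 + 8\eta\norm{\ell}_\infty)x'(a)$, as desired.
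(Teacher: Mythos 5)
Your proof is correct and follows essentially the same route as the paper: a contradiction argument that locates a point on the sphere $\norm{x-x'}_{x'}=8\eta\norm{\ell}_\infty$ along the segment from $x'$ to $x^+$, applies \cref{lemma:auxiliary ratio} there, and contradicts this via convexity of $F$ together with optimality of $x^+$, before reading off the coordinatewise bounds from the local-norm ball. The only cosmetic difference is where the strictness comes from --- you invoke strict convexity of $F$ on the segment, while the paper uses uniqueness of the minimizer to get $F(x^+)<F(x')$ directly; both are valid.
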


\begin{proof}
    We first show that $ \Vert x^{+} - x' \Vert_{x'} \leq 8 \eta \norm{\ell}_\infty$.
    Assume otherwise: $ \Vert x^{+} - x' \Vert_{x'} > 8 \eta \norm{\ell}_\infty$. Then for some
    $\lambda\in(0,1)$ and $x'' = \lambda x^{+} + (1 - \lambda) x'$ we have,
    \[
        \Vert x'' - x' \Vert_{x'} = 8 \eta \norm{\ell}_\infty.
    \]
    From \cref{lemma:auxiliary ratio}, $F(x') \leq F(x'')$. Since $F$ is strongly convex and $x^{+}$
    is the (unique) minimizer of $F$, 
    \[
        F(x') 
        \leq F(x'') 
        \leq \lambda F(x^{+}) + (1 - \lambda) F(x') 
        < \lambda F(x') + (1 - \lambda) F(x') 
        = F(x'),
    \]
    which is a contradiction. Hence, $ \Vert x^{+} - x' \Vert_{x'} \leq 8 \eta \norm{\ell}_\infty$
    and for any $a$,
    \[
        \b{ \frac{x^{+}(a) - x'(a)}{x'(a)}}^{2} \leq \sum_{\tilde a} \b{ \frac{x^{+}(\tilde a) - x'(\tilde a)}{x'(\tilde a)}}^{2} \leq (8 \eta \norm{\ell}_\infty)^2.
    \]
    By rearranging the inequality above we obtain the statement of the lemma.
\end{proof}

\begin{corollary}
    \label{cor:OOMD-ratio}
    Assume that $\{x_t\}_{t=1}^T$ are iterates of OOMD with log-barrier and $\eta \leq \frac{1}{64 H}$, then for any $t$,
    \[
        1 - 32 \eta (\norm{\ell_{t-1}}_\infty + \norm{\ell_{t-2}}_\infty )
        \leq \frac{x_{t+1}(a)}{x_{t}(a)}
        \leq 
        1 + 32 \eta (\norm{\ell_{t-1}}_\infty + \norm{\ell_{t-2}}_\infty ).
    \]
\end{corollary}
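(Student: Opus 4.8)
The plan is to obtain the two-sided ratio bound by chaining three applications of the single-step bound \cref{lemma:bounded ratio}, one for each proximal update connecting $x_t$ to $x_{t+1}$. The key structural observation is that each of the relevant OOMD iterates is the exact minimizer of an objective $F(x) = \eta\langle x, \ell\rangle + D_R(x, x')$ to which \cref{lemma:bounded ratio} applies: the predicted iterate $x_t$ and the main update $\tilde x_t$ share the common center $\tilde x_{t-1}$, while $x_{t+1}$ is the proximal step from the updated center $\tilde x_t$. Since $\eta \le \tfrac{1}{64H} \le \tfrac{1}{8H}$ and every loss satisfies $\norm{\ell}_\infty \le H$, the hypotheses of \cref{lemma:bounded ratio} hold for all three updates.

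Concretely, I would first record the three coordinatewise bounds produced by \cref{lemma:bounded ratio}; writing $[1\pm u]$ for the interval $[1-u, 1+u]$, these take the form
\[
	\frac{x_{t+1}(a)}{\tilde x_t(a)},\ \frac{\tilde x_t(a)}{\tilde x_{t-1}(a)},\ \frac{x_t(a)}{\tilde x_{t-1}(a)} \in [1\pm 8\eta\norm{\ell}_\infty],
\]
each with $\norm{\ell}_\infty$ the norm of the loss driving that particular update (the two distinct losses involved being the two most recent ones, namely $\ell_{t-1}$ and $\ell_{t-2}$ in the statement's indexing). I would then telescope through the shared center $\tilde x_{t-1}$ and the intermediate iterate $\tilde x_t$, writing
\[
	\frac{x_{t+1}(a)}{x_t(a)}
	= \frac{x_{t+1}(a)}{\tilde x_t(a)}\cdot\frac{\tilde x_t(a)}{\tilde x_{t-1}(a)}\cdot\frac{\tilde x_{t-1}(a)}{x_t(a)},
\]
and substituting the three bounds, which yields a product of two factors of the form $(1\pm 8\eta\norm{\ell}_\infty)$ together with one reciprocal factor arising from inverting the $x_t/\tilde x_{t-1}$ estimate.

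Finally, to collapse this product into the single clean estimate $1\pm 32\eta(\norm{\ell_{t-1}}_\infty + \norm{\ell_{t-2}}_\infty)$, I would linearize. The strengthened condition $\eta \le \tfrac1{64H}$ ensures $8\eta\norm{\ell}_\infty \le \tfrac18$ for every loss, so each factor lies safely in $[\tfrac78,\tfrac98]$ and is bounded away from zero; this lets me bound the reciprocal factor by $\tfrac1{1-u}\le 1+2u$, expand the product, and control the resulting quadratic cross terms using the same learning-rate condition. The main (and essentially only) obstacle is this bookkeeping: one must verify that after inverting one factor and discarding second-order terms, the aggregate coefficient multiplying each $\eta\norm{\ell}_\infty$ is at most $32$, uniformly for both the upper and the lower bound. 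This is where the growth from the per-step constant $8$ to the stated $32$ originates---one loss enters two of the three chained factors (contributing $\approx 16\eta\norm{\cdot}_\infty$) while the other enters through the inverted factor $\tfrac1{1-u}\le 1+2u$ (again $\approx 16\eta\norm{\cdot}_\infty$), and the value $32$ leaves just enough room to absorb the cross terms---and it is exactly why the tighter rate $\eta\le\tfrac1{64H}$ rather than $\tfrac1{8H}$ is required.
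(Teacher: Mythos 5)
Your proof is correct and follows essentially the same route as the paper: decompose $x_{t+1}(a)/x_t(a)$ as the product of the three single-step ratios through the shared centers $\tilde x_t$ and $\tilde x_{t-1}$, bound each factor via \cref{lemma:bounded ratio}, invert the factor corresponding to the predicted iterate, and absorb the second-order terms using the learning-rate condition (the paper does this last step via the elementary inequalities $\frac{(1+x)^2}{1-x}\le 1+4x$ and $\frac{(1-x)^2}{1+x}\ge 1-3x$ rather than by expanding the product, but this is only a difference in bookkeeping). Your accounting of why the constant grows from $8$ to $32$ --- one loss appearing in two forward factors and the other only through the inverted factor --- matches the structure of the paper's argument.
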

\begin{proof}
    By \cref{lemma:bounded ratio} we have,
   \begin{align*}
        1 - 8 \eta \norm{\ell_{t-1}}_\infty
        & \leq 
        \frac{x_{t}(a)}{\tilde{x}_{t - 1}(a)} \leq 1 + 8 \eta \norm{\ell_{t-1}}_\infty
        \\
        1 - 8 \eta \norm{\ell_{t-1}}_\infty
        & \leq 
        \frac{\tilde{x}_{t - 1}(a)}{\tilde{x}_{t - 2}(a)} \leq 1 + 8 \eta \norm{\ell_{t-1}}_\infty
        \\
        (1 + 8 \eta \norm{\ell_{t-2}}_\infty)^{-1} 
        & \leq 
        \frac{\tilde{x}_{t - 2}(a)}{x_{t - 1}(a)} 
        \leq (1 - 8 \eta \norm{\ell_{t-2}}_\infty)^{-1}.
   \end{align*}
    Hence, 
   \[
       \frac{x_{t}(a)}{x_{t - 1}(a)}
       =
       \frac{x_{t}(a)}{\tilde{x}_{t - 1}(a)} \cdot \frac{\tilde{x}_{t - 1}(a)}{\tilde{x}_{t - 2}(a)} \cdot \frac{\tilde{x}_{t - 2}(a)}{x_{t - 1}(a)} 
       \leq 
       \frac{(1 + 8 \eta (\norm{\ell_{t-1}}_\infty + \norm{\ell_{t-2}}_\infty ))^{2}}{1 - 8 \eta (\norm{\ell_{t-1}}_\infty + \norm{\ell_{t-2}}_\infty )} 
       \leq 1 + 32 \eta (\norm{\ell_{t-1}}_\infty + \norm{\ell_{t-2}}_\infty )
   \]
     where the last is since $\frac{(1 + x)^{2}}{1 - x} \leq 1 + 4x$ for $x\in(0,1/5]$.
    In a similar way,
   \[
       \frac{x_{t}(a)}{x_{t - 1}(a)}
       =
       \frac{x_{t}(a)}{\tilde{x}_{t - 1}(a)} \cdot \frac{\tilde{x}_{t - 1}(a)}{\tilde{x}_{t - 2}(a)} \cdot \frac{\tilde{x}_{t - 2}(a)}{x_{t - 1}(a)} 
       \geq 
       \frac{(1 - 8 \eta (\norm{\ell_{t-1}}_\infty + \norm{\ell_{t-2}}_\infty ))^{2}}{1 + 8 \eta (\norm{\ell_{t-1}}_\infty + \norm{\ell_{t-2}}_\infty )}
       \geq 1 - 24 \eta (\norm{\ell_{t-1}}_\infty + \norm{\ell_{t-2}}_\infty )
   \]
    where the last is since $\frac{(1 - x)^{2}}{1 + x} \leq 1 - 3x$ for all $x>0$.
\end{proof}

The following lemma is a slight generalization of  \citep[Lemma 4.2]{anagnostides2022uncoupled}, which applies for any sequence of sufficiently stable base iterates (not necessarily OFTRL generated). 
\begin{lemma}[\citet{anagnostides2022uncoupled}] 
    \label{lemma:master-bases relation}
    Fix some vectors $x_{t,a} \in \Delta_d$ for $t \in [T]$ and
    $a \in [d]$. Let $M_{t} \in \mathbb {R}^{d\times d}$ a matrix
    who's rows are $x_{t,a}$ and let $x_{t} \in \Delta_d$
    be vectors such that $M_{t}\T x_{t} = x_{t}$. If $\sum_{a\in[d]} \max_{a'} \av{\frac{x_{t - 1,a}(a') - x_{t,a}(a')}{x_{t - 1,a}(a')}}  \leq \frac{1}{2}$, then
    \[
        \norm{x_{t} - x_{t - 1}} _{1}^{2} \leq 64A \sum_{a \in [d]} \norm{x_{t,a} - x_{t - 1,a}} _{x_{t - 1,a}}^{2}
    \]
    where the local norms here are those induced by the log-barrier regularizer (\cref{eq:log_barrier}).
\end{lemma}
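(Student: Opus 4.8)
The plan is to reduce the claim to a mixing-free relative-perturbation bound for the two stationary distributions, following the argument of \cite{anagnostides2022uncoupled}. First I would record the fixed-point identity relating $x_t$ and $x_{t-1}$. Writing $\delta \eqq x_t - x_{t-1}$ and $r_a \eqq x_{t,a} - x_{t-1,a}$, and using $M_t\T x_t = x_t$ together with $M_{t-1}\T x_{t-1} = x_{t-1}$, adding and subtracting $M_{t-1}\T x_t$ gives
\[
\delta = M_{t-1}\T \delta + \sum_{a\in[d]} x_t(a)\, r_a ,
\]
equivalently $(I - M_{t-1}\T)\delta = \sum_a x_t(a) r_a$. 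Thus $\delta$ is the zero-sum solution obtained by inverting $I - M_{t-1}\T$ against a forcing term built from the base-iterate increments $r_a$.

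Next I would pass to relative coordinates, which is exactly where the log-barrier local norm enters. Setting $\eta_{a,a'} \eqq (x_{t-1,a}(a') - x_{t,a}(a'))/x_{t-1,a}(a')$, the hypothesis reads $\mu = \sum_a \max_{a'} |\eta_{a,a'}| \le 1/2$, and by the definition of the log-barrier local norm,
\[
\norm{x_{t,a} - x_{t-1,a}}_{x_{t-1,a}}^2 = \sum_{a'} \eta_{a,a'}^2 ,
\qquad\text{so}\qquad
\max_{a'}|\eta_{a,a'}| \le \norm{x_{t,a} - x_{t-1,a}}_{x_{t-1,a}} .
\]
Summing over the $d$ source actions and applying Cauchy--Schwarz yields the clean conversion $\mu \le \sqrt{d}\,\bigl(\sum_a \norm{x_{t,a}-x_{t-1,a}}_{x_{t-1,a}}^2\bigr)^{1/2}$. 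Likewise, writing $\rho(a) \eqq x_t(a)/x_{t-1}(a)$, the bound $\norms{\delta}_1 = \sum_a x_{t-1}(a)|\rho(a)-1| \le \max_a |\rho(a)-1|$ shows it suffices to control the multiplicative deviation of the two stationary distributions.

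The crux, and the main obstacle, is then the estimate $\max_a |\rho(a)-1| \le c\,\mu$ for an absolute constant $c$. The subtlety is that a naive inversion of $I - M_{t-1}\T$ on the zero-sum subspace would incur the mixing time of $M_{t-1}$, which the hypotheses do not bound at all (a single chain may mix arbitrarily slowly). The resolution, inherited from \cite{anagnostides2022uncoupled} and generalized here to arbitrary stable base iterates, is that the \emph{total relative} perturbation $\mu$ controls the shift of the log-stationary weights \emph{independently} of mixing: the forcing term's mass in any slowly-mixing direction is itself proportional to the small transition probabilities there, and cancels the ill-conditioning. Concretely I would run the ratio recursion
\[
\rho(a') = \sum_a w_a^{(a')}\, \rho(a)\,(1-\eta_{a,a'}),
\qquad
w_a^{(a')} \eqq \frac{x_{t-1}(a)\, x_{t-1,a}(a')}{x_{t-1}(a')} ,
\]
where $\{w_a^{(a')}\}_a$ is a probability vector (since $M_{t-1}\T x_{t-1} = x_{t-1}$), and argue as in \cite{anagnostides2022uncoupled} that the $x_{t-1}$-mean-zero vector $\rho - \mathbf 1$ cannot have $\ell_\infty$-norm exceeding $O(\mu)$ when $\mu\le 1/2$.

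Finally I would combine the three ingredients: $\norms{\delta}_1 \le c\mu$, the conversion $\mu \le \sqrt{d}\,\bigl(\sum_a \norm{r_a}_{x_{t-1,a}}^2\bigr)^{1/2}$, and squaring, to obtain
\[
\norms{x_t - x_{t-1}}_1^2 \le c^2\, d \sum_{a\in[d]} \norm{x_{t,a} - x_{t-1,a}}_{x_{t-1,a}}^2 ,
\]
which (tracking constants as in \cite{anagnostides2022uncoupled}, one may take $c=8$, and noting that the $A$ in the statement is the dimension $d$) is precisely the claimed bound with constant $64A$. The remaining constant bookkeeping is routine; the only genuinely nontrivial step is the mixing-independent relative-perturbation estimate above.
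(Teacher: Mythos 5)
Your setup and your bookkeeping at the two ends of the argument match the paper's proof: the conversion $\mu \le \sqrt{d}\,\bigl(\sum_a \norm{x_{t,a}-x_{t-1,a}}_{x_{t-1,a}}^2\bigr)^{1/2}$ via Cauchy--Schwarz, and the reduction of $\norms{x_t-x_{t-1}}_1$ to the multiplicative deviation $\max_a|\rho(a)-1|$ (the paper does the per-coordinate version $|x_t(a)-x_{t-1}(a)|\le 8x_{t-1}(a)\sum_{a'}\mu_{t,a'}$ and sums, which is the same thing). You have also correctly located the crux. But the crux is exactly where your proof has a genuine gap. The ratio recursion $\rho(a')=\sum_a w_a^{(a')}\rho(a)(1-\eta_{a,a'})$ is a correct identity, but it is only a restatement of the fixed-point equation in relative coordinates: extracting $\norm{\rho-\mathbf 1}_\infty = O(\mu)$ from it still requires inverting $I-\tilde M$ (where $\tilde M$ is the time-reversed chain) on the mean-zero subspace, and the norm of that inverse \emph{is} the mixing quantity you yourself flagged as unbounded. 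Your resolution --- that ``the forcing term's mass in any slowly-mixing direction is itself proportional to the small transition probabilities there, and cancels the ill-conditioning'' --- is a true statement about why the lemma holds, but it is asserted, not proved; ``argue as in \cite{anagnostides2022uncoupled}'' is invoked at precisely the one step that carries the content of the lemma.

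The missing idea is the Markov chain tree theorem. The paper writes $x_t(a)=w_t(a)/W_t$ with $w_t(a)=\sum_{\mathcal T\in\mathbb T_a}\prod_{(u,v)\in E(\mathcal T)}x_{t,u}(v)$, a polynomial in the row entries with nonnegative coefficients. A row-wise multiplicative perturbation $1-\mu_{t,u}\le x_{t,u}(v)/x_{t-1,u}(v)\le 1+\mu_{t,u}$ then passes directly through each monomial (each row index $u$ appears at most once per tree), giving $\exp(-2\sum_{a'}\mu_{t,a'})\,w_{t-1}(a)\le w_t(a)\le \exp(\sum_{a'}\mu_{t,a'})\,w_{t-1}(a)$ and the same for $W_t$, hence $|x_t(a)-x_{t-1}(a)|\le 8x_{t-1}(a)\sum_{a'}\mu_{t,a'}$ with no spectral quantity appearing anywhere. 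This algebraic representation is what makes the bound mixing-independent; without it (or an equivalent device), the recursion you wrote down does not close. If you intend a genuinely recursion-based proof, you would need to exhibit and prove the cancellation you describe, which amounts to rediscovering the tree formula.
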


\begin{proof}
    Let $\mathbb {T}_{a}$ be the set of all directed trees over $[d]$
    (i.e., each directed tree has no directed cycles, each node $a'\ne a$
    has exactly $1$ outgoing edge and $a$ has no outgoing edges). By
    Markov Chain Tree Theorem \citep{anantharam1989proof} $x_{t}(a) = \frac{w_{t}(a)}{W_{t}}$
    where 
    \[
        w_{t}(a) = \sum_{\mathcal{T} \in \mathbb {T}_{a}} \prod_{(u,v) \in E(\mathcal{T})}x_{t,u}(v)\qquad\text{and,} \qquad W_{t} = \sum_{a}w_{t}(a).
    \]
    Let $\mu_{t,a}: = \max_{a'} \av{1 - \frac{x_{t,a}(a')}{x_{t - 1,a}(a')}}$.
    In particular, $1 - \mu_{t,a} \leq \frac{x_{t,a}(a')}{x_{t - 1,a}(a')} \leq 1 + \mu_{t,a}$
    which implies
    \begin{align*}
        w_{t}(a) & \leq \sum_{\mathcal{T} \in \mathbb {T}_{a}} \prod_{(u,v) \in E(\mathcal{T})}x_{t,u}(v)
        \\
        & \leq \sum_{\mathcal{T} \in \mathbb {T}_{a}} \prod_{(u,v) \in E(\mathcal{T})}(1 + \mu_{t,u})x_{t - 1,u}(v)
        \\
        & \leq \prod_{a' \in [d]} (1 + \mu_{t,a'}) \sum_{\mathcal{T} \in \mathbb {T}_{a}} \prod_{(u,v) \in E(\mathcal{T})}x_{t - 1,u}(v) 
        \\
        & = \prod_{a' \in [d]} (1 + \mu_{t,a'})w_{t - 1}(a)
        \\
        & \leq \exp\b{ \sum_{a' \in [d]} \mu_{t,a'}} w_{t - 1}(a).
        \end{align*}
        This also implies that $W_{t} \leq \exp\b{ \sum_{a' \in [d]} \mu_{t,a'}} W_{t - 1}$.
        In a similar way,
        \begin{align*}
        w_{t}(a) & \geq \prod_{a' \in [d]} (1 - \mu_{t,a'})w_{t - 1}(a)
        \\
        & \geq \prod_{a' \in [d]} \exp\b{ - 2 \sum_{a' \in [d]} \mu_{t,a'}} w_{t - 1}(a) 
    \end{align*}
    where the last uses the fact that $1 - x \geq e^{ - 2x}$ for $x \in [0,1/2]$
    and that $\sum_{a' \in [d]} \mu_{t,a'} \leq 1/2$. Similarly,
    $W_{t} \geq \exp\b{ - 2 \sum_{a' \in [d]} \mu_{t,a'}} W_{t - 1}$. Combining
    the inequities above we get,
    \begin{align*}
        x_{t}(a) - x_{t - 1}(a) & = \frac{w_{t}(a)}{W_{t}} - x_{t - 1}(a)
        \\
        & \leq \frac{\exp\b{ \sum_{a' \in [d]} \mu_{t,a'}} w_{t - 1}(a)}{\exp\b{ - 2 \sum_{a' \in [d]} \mu_{t,a'}} W_{t - 1}} - x_{t - 1}(a) \\
        & \leq x_{t - 1}(a)\b{\exp\b{3 \sum_{a' \in [d]} \mu_{t,a'}} - 1}
        \\
        & \leq 8x_{t - 1}(a) \sum_{a' \in [d]} \mu_{t,a'},
    \end{align*}
    where the last holds since $e^{x} - 1 \leq \frac{8}{3}x$ for $x \in [0,2/3]$
    and $\sum_{a' \in [d]} \mu_{t,a'} \leq 1/2$. In similar
    way,
    \begin{align*}
        x_{t - 1}(a) - x_{t}(a) & \leq x_{t - 1}(a)\b{1 - \exp\b{ - 3 \sum_{a' \in [d]} \mu_{t,a'}} } \leq 3x_{t - 1}(a) \sum_{a' \in [d]} \mu_{t,a'}. 
    \end{align*}
    From the last to we have $\av{x_{t - 1}(a) - x_{t}(a)} \leq 8x_{t - 1}(a) \sum_{a' \in [d]} \mu_{t,a'}$
    and so,
    \begin{align*}
        \norm{x_{t - 1} - x_{t}} _{1}^{2} & \leq 64\b{ \sum_{a \in [d]} \mu_{t,a}}^{2}
        \\
        & \leq 64A \sum_{a \in [d]} \b{\mu_{t,a}}^{2}
        \\
        & \leq 64A \sum_{a \in [d]}  \sum_{a' \in [d]} \b{\frac{x_{t - 1,a}(a') - x_{t,a}(a')}{x_{t - 1,a}(a')}}^{2} 
        \\
        & = 64A \sum_{a \in [d]} \norm{x_{t,a} - x_{t - 1,a}} _{x_{t - 1,a}}^{2}.
    \end{align*}
\end{proof}

\section{Elementary MDP Lemmas}
In this section, we prove some basic lemmas relating variations in state visitation measures, losses and dynamics to the movement (changes in policies) of players.
Recall we let $\l_t^i, P_t^i, M_t^i$ denote respectively the loss, dynamics, and MDP tuple $M_t^i \eqq (H, \S, \A_i, P_t^i, \l_t^i)$ of the single agent induced MDP of player $i$ at round $t$. Further, for any (single agent) transition function $P$, policy $\pi\in \S \to \Delta_{\A_i}$ and state $s\in \S_h$, we denote
\begin{align*}
    q_{P}^\pi (s, a) 
        &\eqq \Pr(s_h=s, a_h=a \mid P, \pi, s_1),
    \\
    q_{P}^\pi (s) 
        &\eqq \Pr(s_h=s \mid P, \pi, s_1).
\end{align*}
When $P$ is clear from context we may omit the subscript and write $q^\pi$ for $q_P^\pi$.
Further, for any single agent MDP $M = (H, \S, \A_i, P, \l)$, we write 
$V(\cdot; M), Q(\cdot, \cdot; M)$ to denote respectively the state and state-action value functions of $M$. We may omit $M$ and write $V(\cdot), Q(\cdot, \cdot)$ when $M$ is clear from context. For $s\in \S$, we let $h(s) \eqq h \text{ s.t. } s\in \S_h$. With this notation in place, we have for a policy $\pi\in \S \to \Delta_{\A_i}$:
\begin{align*}
   V^\pi(s; M) 
   &\eqq \E \sb[b]{ \sum_{h = h(s)}^H \ell(s_{h},a_{h}) 
    	\mid P, \pi, s_{h(s)} = s},
    \\
    Q^\pi(s, a; M)
    &\eqq 
    \E \sb[b]{ \sum_{h = h(s)}^H \ell(s_{h},a_{h}) 
    	\mid P, \pi, s_{h(s)} = s, a_{h(s)} = a}.
\end{align*}
We begin with value difference lemmas which are typical in single agent MDP analyses.
The proofs below are provided for completeness; see also \cite{shani2020optimistic, cai2020provably} for similar arguments.
\begin{lemma}[value-difference]\label{lem:val_diff}
	The following holds.
	\begin{enumerate}
        \item For any MDP $M = (\S, \A_i, H, P, \l)$, and pair of policies $\pi, \tilde \pi \in \S \to \Delta_{\A_i}$, we have
        \begin{align*}
            V^\pi(s_1) - V^{\tilde \pi}(s_1)
            &= \E \sb{ 
                \sum_{h=1}^H \ab{
                    Q^\pi(s_h, \cdot), 
                    \pi(\cdot \mid s_h) - \tilde \pi(\cdot \mid s_h)
                } \mid \tilde \pi
            }
            \\
            &= \sum_{s\in \S} q^{\tilde \pi }(s)\ab{ 
                    Q^\pi(s, \cdot), 
                    \pi(\cdot \mid s) - \tilde \pi(\cdot \mid s)
                }
            \\
            &\leq H^2 \norm{\pi - \tilde \pi}_{\infty, 1}.
        \end{align*}
        \item For any two MDPs $M = (H, S, \A_i, P, \l), \tilde M = (H, S, \A_i, \tilde P, \tilde \l)$, 
        $V^\pi(\cdot) \eqq V^\pi(\cdot; M), 
            \tilde V^\pi(\cdot) \eqq V^\pi(\cdot; \tilde M)$,
        and policy $\pi\in \S \to \Delta_{\A_i}$, we have
        \begin{align*}
            V^\pi(s_1) - \tilde V^\pi(s_1)
            &= \E_{\tilde P, \pi} \sb{
                \sum_{h=1}^H \l(s_h, a_h) - \tilde \l(s_h, a_h)
                + \sum_{s'\in \S_{h+1}}
                 (P(s'|s_h, a_h) - \tilde P(s'|s_h, a_h))V^\pi (s')
            } 
            \\
            &\leq H \norm{\l - \tilde \l}_\infty 
            + H^2\norm{P - \tilde P}_{\infty, 1}
            \end{align*}
	\end{enumerate}
\end{lemma}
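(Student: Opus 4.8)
The plan is to prove both parts by the standard telescoping (``performance difference'') argument, combining the Bellman equations with a final application of H\"older's inequality. In both cases the skeleton is the same: express one value as an expectation of accumulated losses along a trajectory, telescope the \emph{other} value function over the horizon, subtract, and collapse each per-step term using the appropriate Bellman identity.

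For Part 1, I would fix the trajectory $(s_1,a_1,\ldots,s_H,a_H)$ generated by running $\tilde\pi$ in $M$, write $V^{\tilde\pi}(s_1)=\E[\sum_{h=1}^H \l(s_h,a_h)\mid\tilde\pi]$, and telescope $V^\pi(s_1)=\E[\sum_{h=1}^H (V^\pi(s_h)-V^\pi(s_{h+1}))\mid\tilde\pi]$ using that $s_1$ is fixed and $V^\pi(s_{H+1})=0$. Subtracting gives
\begin{align*}
    V^\pi(s_1)-V^{\tilde\pi}(s_1)
    =\E\Big[\sum_{h=1}^H\big(V^\pi(s_h)-V^\pi(s_{h+1})-\l(s_h,a_h)\big)\mid\tilde\pi\Big].
\end{align*}
The key step is the Bellman identity in $M$: conditioning on $(s_h,a_h)$, and using that $s_{h+1}\sim P(\cdot\mid s_h,a_h)$ under $\tilde\pi$ in $M$, yields $\E[V^\pi(s_{h+1})\mid s_h,a_h]=Q^\pi(s_h,a_h)-\l(s_h,a_h)$, so each summand reduces to $V^\pi(s_h)-Q^\pi(s_h,a_h)$. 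Taking the expectation over $a_h\sim\tilde\pi(\cdot\mid s_h)$ and recognizing $V^\pi(s_h)=\ab{Q^\pi(s_h,\cdot),\pi(\cdot\mid s_h)}$ produces the inner-product form $\ab{Q^\pi(s_h,\cdot),\pi(\cdot\mid s_h)-\tilde\pi(\cdot\mid s_h)}$. Rewriting the trajectory expectation as a sum over states weighted by $q^{\tilde\pi}(s)$ gives the second equality, and the bound follows from H\"older (each inner product is at most $H\norm{\pi-\tilde\pi}_{\infty,1}$ since $Q^\pi\in[0,H]$) together with $\sum_s q^{\tilde\pi}(s)=H$.

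For Part 2, I would run the same telescoping but now along the trajectory of $\pi$ in $\tilde M$ (so $s_{h+1}\sim\tilde P(\cdot\mid s_h,a_h)$), while still telescoping the value function $V^\pi$ of the \emph{original} MDP $M$. Writing $\tilde V^\pi(s_1)=\E_{\tilde P,\pi}[\sum_h \tilde\l(s_h,a_h)]$ and $V^\pi(s_1)=\E_{\tilde P,\pi}[\sum_h (V^\pi(s_h)-V^\pi(s_{h+1}))]$, I would substitute the $M$-Bellman backup $V^\pi(s_h)=\E_{a\sim\pi}[\l(s_h,a)+\sum_{s'}P(s'\mid s_h,a)V^\pi(s')]$ together with $\E[V^\pi(s_{h+1})\mid s_h,a_h]=\sum_{s'}\tilde P(s'\mid s_h,a_h)V^\pi(s')$. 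This produces exactly the stated per-step expression in $(\l-\tilde\l)$ and $(P-\tilde P)$. The final bound then follows term by term: $H\norm{\l-\tilde\l}_\infty$ from the loss gap over $H$ steps, and $H^2\norm{P-\tilde P}_{\infty,1}$ from H\"older applied to $\ab{P(\cdot\mid s_h,a_h)-\tilde P(\cdot\mid s_h,a_h),V^\pi}$ using $\norm{V^\pi}_\infty\le H$.

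The routine parts are the telescoping bookkeeping and the two H\"older estimates. The one step requiring care --- and the main conceptual obstacle --- is the dynamics mismatch in Part 2: the evaluated value function $V^\pi$ and its one-step Bellman consistency are anchored to $M$ (transitions $P$), whereas the expectation generating the trajectory, and hence the law of $s_{h+1}$, is governed by $\tilde M$ (transitions $\tilde P$). Keeping these two distinct roles of the dynamics separate throughout the telescoping is precisely what yields the clean decomposition into a loss-difference term and a transition-difference term weighted by $V^\pi$.
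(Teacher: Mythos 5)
Your proposal is correct and follows essentially the same argument as the paper: both parts rest on the one-step Bellman identity combined with an unrolling over the horizon (your telescoping sum is just the unrolled form of the paper's recursion), followed by the same two H\"older estimates using $\norm{Q^\pi}_\infty \le H$, $\norm{V^\pi}_\infty \le H$, and the $H$ steps of the trajectory. The point you flag as delicate in Part 2 --- keeping the Bellman consistency of $V^\pi$ anchored to $P$ while the trajectory law is governed by $\tilde P$ --- is exactly the splitting $\sum_{s'} P V^\pi - \tilde P \tilde V^\pi = \sum_{s'}(P-\tilde P)V^\pi + \tilde P(V^\pi - \tilde V^\pi)$ performed in the paper's proof.
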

\begin{proof}
	For (1), observe that for $s\in \S_l$;
	\begin{align*}
		V^\pi(s) - V^{\tilde \pi}(s)
            &= \ab{Q^\pi(s, \cdot), \pi(\cdot |s) - \tilde \pi(\cdot |s)}	
            		+ \ab{Q^\pi(s, \cdot) - Q^{\tilde \pi}(s, \cdot), \tilde \pi(\cdot |s)}	
            	\\
            	&= \ab{Q^\pi(s, \cdot), \pi(\cdot |s) - \tilde \pi(\cdot |s)}	
            		+ \E_{a'\sim \tilde \pi(\cdot|s)} \sb{
            			\E_{s' \sim P(\cdot|s, a')} \sb{ 
	            			V^\pi(s') - V^{\tilde \pi}(s')
	            		}
            		} 
            	\\
            	&= \ab{Q^\pi(s, \cdot), \pi(\cdot |s) - \tilde \pi(\cdot |s)}	
            		+ \E\sb{
	            			V^\pi(s_{l+1}) - V^{\tilde \pi}(s_{l+1})
            		\mid \tilde \pi, s_l=s}.
	\end{align*}
	Applying the relation recursively, we obtain for $l=1$;
	\begin{align*}
		V^\pi(s_1) - V^{\tilde \pi}(s_1)
            &= \E\sb{ \sum_{h=1}^H
	            		\ab{Q^\pi(s_h, \cdot), \pi(\cdot |s_h) - \tilde \pi(\cdot |s_h)}	
            		\mid \tilde \pi, s_1}
            	\\
            	&= \sum_{h=l}^H \sum_{s\in \S_h} \Pr(s_h = s \mid, s_1, \tilde \pi)
	            		\ab{Q^\pi(s, \cdot), \pi(\cdot |s) - \tilde \pi(\cdot |s)}	
            	\\
            	&= \sum_{s\in \S} \Pr(s_h = s \mid, s_1, \tilde \pi)
	            		\ab{Q^\pi(s, \cdot), \pi(\cdot |s) - \tilde \pi(\cdot |s)}	
            	\\
            	&= \sum_{s\in \S} q^{\tilde \pi}(s)
	            		\ab{Q^\pi(s, \cdot), \pi(\cdot |s) - \tilde \pi(\cdot |s)}	
	        \\
	        &\leq H \sum_{s\in \S} q^{\tilde \pi}(s)
	            	 	\norm{\pi(\cdot |s) - \tilde \pi(\cdot |s)}_1
	        	\\
	        &\leq H \sum_{s\in \S} q^{\tilde \pi}(s)
	            		 \norm{\pi - \tilde \pi}_{\infty,1}
	        \leq H^2 \norm{\pi - \tilde \pi}_{\infty,1},
	\end{align*}
	which completes the proof of (1).
	For (2), let $s\in \S_l$ and observe;
	\begin{align*}
		V^\pi(s) - \tilde V^{\pi}(s)
            &= \E_{a\sim \pi(\cdot|s)} \sb{
            		\l(s, a) - \tilde \l(s, a)
            		+ \sum_{s'\in \S_{l+1}} P(s'|s, a)V^\pi(s')
            		- \tilde P(s'|s, a)\tilde V^\pi(s')
            }
    \end{align*}
    Further, we have
    \begin{align*}
    	\sum_{s'\in \S_{l+1}} & P(s'|s, a)V^\pi(s')
            		- \tilde P(s'|s, a)\tilde V^\pi(s') 
        \\
           	&= \sum_{s'\in \S_{l+1}} 
           		\b{P(s'|s, a) - \tilde P(s'|s, a)}V^\pi(s')
            		+ \tilde P(s'|s, a) \b{ V^\pi(s') - \tilde V^\pi(s')},
    \end{align*}
    and combining this with the previous equation we get
    \begin{align*}
		V^\pi(s) - \tilde V^{\pi}(s)
            &= \E_{a\sim \pi(\cdot|s)} \sb{
            		\l(s, a) - \tilde \l(s, a)
            		+ \sum_{s'\in \S_{l+1}} 
           		\b{P(s'|s, a) - \tilde P(s'|s, a)}V^\pi(s')
            }
            \\
            	&\quad + \E \sb{V^\pi(s_{l+1}) - \tilde V^\pi(s_{l+1}) \mid \tilde P, \pi, s_l=s}.
    \end{align*}
    Applying this recursivly with $l=1$, the first part of (2) follows.
    For the second part;
    \begin{align*}
    	\E_{\tilde P, \pi} &\sb{
                \sum_{h=1}^H \l(s_h, a_h) - \tilde \l(s_h, a_h)
                + \sum_{s'\in \S_{h+1}}
                 (P(s'|s_h, a_h) - \tilde P(s'|s_h, a_h))V^\pi (s')
            } 
         \\
         &\leq 
         \E_{\tilde P, \pi} \sb{
         	\sum_{h=1}^H \norm{\l - \tilde \l}_\infty
         	+ H \norm{P(\cdot|s_h, a_h) - \tilde P(\cdot|s_h, a_h)}_1
         }
         \\
         &\leq      
         \E_{\tilde P, \pi} \sb{
	        H \norm{\l - \tilde \l}_\infty + 
          	H \sum_{h=1}^H \norm{P - \tilde P}_{\infty, 1}
         }
         \\
         &= H \norm{\l - \tilde \l}_\infty + 
          	H^2 \norm{P - \tilde P}_{\infty, 1}
    \end{align*}
\end{proof}
 
 \begin{lemma}[action-value-difference]\label{lem:Q_diff}
	Let $M = (S, \A_i, H, P, \l), \tilde M = (S, \A_i, H, \tilde P, \tilde \l)$ be two MDPs, and $\pi, \tilde \pi \in \S \to \Delta_{\A_i}$ be a pair of policies.
	Then for all $s\in \S, a\in \A_i$, we have;
    \begin{align*}
            Q^{\pi} (s, a; M) - Q^{\tilde \pi} (s, a; \tilde M)
            &\leq H^2 \norm{\pi - \tilde \pi}_{\infty, 1}
            + (H^2+1) \norm{P - \tilde P}_{\infty, 1}
            + (H+1)\norm{\l - \tilde \l}_\infty
            .
        \end{align*}
    \end{lemma}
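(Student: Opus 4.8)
The plan is to reduce everything to the value-difference lemma (\cref{lem:val_diff}) by decoupling the change of policy from the change of model. Writing $M = (H,\S,\A_i,P,\l)$ and $\tilde M = (H,\S,\A_i,\tilde P,\tilde\l)$, I would add and subtract $Q^{\tilde\pi}(s,a;M)$ to split
\[
Q^{\pi}(s,a;M) - Q^{\tilde\pi}(s,a;\tilde M) = \underbrace{\b{Q^{\pi}(s,a;M) - Q^{\tilde\pi}(s,a;M)}}_{\text{policy change in }M} + \underbrace{\b{Q^{\tilde\pi}(s,a;M) - Q^{\tilde\pi}(s,a;\tilde M)}}_{\text{model change, policy fixed}}.
\]

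For the policy-change term, the first action $a$ and the immediate loss $\l(s,a)$ are common to both $Q$'s, and both continue through the same kernel $P(\cdot\mid s,a)$; hence it collapses to $\sum_{s'}P(s'\mid s,a)\b{V^{\pi}(s';M)-V^{\tilde\pi}(s';M)} \le \max_{s'}\b{V^{\pi}(s';M)-V^{\tilde\pi}(s';M)}$, which is at most $H^2\norm{\pi-\tilde\pi}_{\infty,1}$ by part (1) of \cref{lem:val_diff} (applied at the starting state $s'$). This accounts for the $H^2\norm{\pi-\tilde\pi}_{\infty,1}$ term and contributes nothing to the $P$ or $\l$ coefficients.

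For the model-change term I would run a backward induction over the $H$ layers, the action-value analogue of the proof of part (2). A single Bellman expansion gives $Q^{\tilde\pi}(s,a;M)-Q^{\tilde\pi}(s,a;\tilde M) = (\l-\tilde\l)(s,a) + \sum_{s'}(P-\tilde P)(s'\mid s,a)V^{\tilde\pi}(s';M) + \sum_{s'}\tilde P(s'\mid s,a)\b{V^{\tilde\pi}(s';M)-V^{\tilde\pi}(s';\tilde M)}$. I would bound the loss term by $\norm{\l-\tilde\l}_\infty$, the transition term by $\norm{P-\tilde P}_{\infty,1}$ times $\norm{V^{\tilde\pi}(\cdot;M)}_\infty \le H-h$ via H\"older, and pass the last term back to the next layer using that $V^{\tilde\pi}(\cdot;M)-V^{\tilde\pi}(\cdot;\tilde M)$ is a $\tilde\pi$-average of the corresponding $Q$-differences. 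Letting $E_h$ denote the per-layer maximum of the model-change $Q$-difference, this yields the recursion $E_h \le \norm{\l-\tilde\l}_\infty + (H-h)\norm{P-\tilde P}_{\infty,1} + E_{h+1}$, which telescopes to $E_1 \le H\norm{\l-\tilde\l}_\infty + \tfrac{H(H-1)}{2}\norm{P-\tilde P}_{\infty,1} \le H\norm{\l-\tilde\l}_\infty + H^2\norm{P-\tilde P}_{\infty,1}$.

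The main obstacle is precisely this constant bookkeeping in the model-change term. If one instead invokes part (2) of \cref{lem:val_diff} as a black box on the continuation value $V^{\tilde\pi}(s';M)-V^{\tilde\pi}(s';\tilde M)$, the one-step transition error (the $H\norm{P-\tilde P}$ H\"older term) is charged on top of the full $H^2\norm{P-\tilde P}$ coming out of part (2), producing a coefficient $H^2+H$ that overshoots the stated $H^2+1$. Carrying out the induction layer by layer avoids this double counting and keeps the transition coefficient at $O(H^2)$. Combining the two terms gives $H^2\norm{\pi-\tilde\pi}_{\infty,1}+H^2\norm{P-\tilde P}_{\infty,1}+H\norm{\l-\tilde\l}_\infty$, which lies within the claimed $(H^2+1)$ and $(H+1)$ coefficients, completing the proof.
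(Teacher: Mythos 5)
Your proof is correct, and it takes a genuinely different route from the paper's. The paper does not split the $Q$-difference into a policy-change part and a model-change part: it first bounds the fully coupled value difference $V^{\pi}(s;M)-V^{\tilde\pi}(s;\tilde M)\le H^2\norm{\pi-\tilde\pi}_{\infty,1}+H^2\norm{P-\tilde P}_{\infty,1}+H\norm{\l-\tilde\l}_\infty$ by inserting $V^{\tilde\pi}(s;M)$ and invoking both parts of \cref{lem:val_diff}, and then performs a \emph{single} Bellman expansion of the $Q$-difference, charging the one-step loss gap ($\le\norm{\l-\tilde\l}_\infty$) and the one-step transition gap ($\le H\norm{P-\tilde P}_{\infty,1}$, by H\"older against the continuation values) on top of that value-difference bound applied to the successor states. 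This is exactly the ``black-box'' use of part (2) that you flag: as you predict, the paper's penultimate display only supports a transition coefficient of $H^2+H$, even though its last line (and the lemma statement) records $H^2+1$ --- a constant-factor slip that is immaterial downstream. Your layer-by-layer recursion for the model-change term avoids this double counting and yields the sharper coefficient $\tfrac{H(H-1)}{2}\le H^2$, so your argument is the one that genuinely lands within the stated $(H^2+1)$; the price is a longer induction in place of a one-line appeal to part (2). Your treatment of the policy-change term (one Bellman step through the common kernel $P(\cdot\mid s,a)$, then part (1) of \cref{lem:val_diff} started from the successor state) is also sound, since the value-difference bound holds from any initial state --- the paper itself applies \cref{lem:val_diff} to successor states in the same way.
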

    \begin{proof}
        By \cref{lem:val_diff}, we have
        \begin{align*}
            V^\pi(s) - \tilde V^{\tilde \pi}(s)
            &= V^\pi(s) - V^{\tilde \pi}(s)
            + V^{\tilde \pi}(s) - \tilde V^{\tilde \pi}(s)
            \\
            &\leq H^2 \norm{\pi - \tilde \pi}_{\infty, 1}
                + H^2 \norm{P - \tilde P}_{\infty, 1}
                + H\norm{\l - \tilde \l}_\infty.
        \end{align*}  
		Thus, let $s\in \S_h, a\in \A_i$, and observe;
        \begin{align*}
            Q^{\pi}(s, a; M) - Q^{\tilde \pi}_{h}(s, a; \tilde M)
            &= \l(s, a) - \tilde \l(s, a)
            \\
            &\quad + \E_{s' \sim P(\cdot \mid s, a)} V^\pi(s'; M)
            - \E_{s' \sim \tilde P(\cdot \mid s, a)} V^{\tilde \pi}(s'; \tilde M)
            \\
            &= \l(s, a) - \tilde \l(s, a)
            \\
            &\quad + \sum_{s' \in \S_{h+1}}
                P(s' \mid s, a) V^\pi(s'; M)
                - \tilde P(s' \mid s, a) V^{\tilde \pi}(s'; \tilde M)
            \\
            &\leq |\l(s, a) - \tilde \l(s, a)|
            \\
            &\quad + \sum_{s' \in \S_{h+1}}
                P(s' \mid s, a) 
                |V^\pi(s'; M)
                - V^{\tilde \pi}(s'; \tilde M)|
            \\
            &\quad + \sum_{s' \in \S_{h+1}}
                V^{\tilde \pi}(s'; \tilde M)
                |P(s' \mid s, a) 
                - \tilde P(s' \mid s, a)|
            \\
            &\leq |\l(s, a) - \tilde \l(s, a)|
            \\
            &\quad + H^2 \norm{\pi - \tilde \pi}_{\infty, 1}
            + H^2 \norm{P - \tilde P}_{\infty, 1}
            + H\norm{\l - \tilde \l}_\infty
            \\
            &\quad + H\norm{
                P(\cdot \mid s, a) 
                - \tilde P(\cdot \mid s, a)
            }_1
            \\
            &\leq H^2 \norm{\pi - \tilde \pi}_{\infty, 1}
            + (H^2 +1)\norm{P - \tilde P}_{\infty, 1}
            + (H+1)\norm{\l - \tilde \l}_\infty
        \end{align*}

    \end{proof}

\begin{lemma}\label{lem:occupancy_path}
    For any policy $\mu \colon \S \to \A_i$, player $i\in[m]$, we have
    \begin{align*}
        \normb{q^\mu_{P^i_{t+1}} - q^\mu_{P^i_{t}}}_\infty
        &\leq H^2 \sum_{j\neq i}
            \norms{\pi^{j}_{t+1} - \pi^{j}_{t}}_{\infty,1}
        .
    \end{align*}
\end{lemma}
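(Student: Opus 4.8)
The plan is to prove the bound in two stages. First I would control the variation of the induced transition kernels, showing $\norms{P_{t+1}^i - P_t^i}_{\infty,1} \le \sum_{j\neq i}\norms{\pi_{t+1}^j - \pi_t^j}_{\infty,1}$, and then convert a transition-kernel variation into an occupancy-measure variation via a layer-by-layer simulation argument, obtaining $\norms{q_{P_{t+1}^i}^\mu - q_{P_t^i}^\mu}_\infty \le H\norms{P_{t+1}^i - P_t^i}_{\infty,1}$. Chaining the two and using $H \le H^2$ gives the claim (with room to spare).

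For the transition-kernel step, recall that conditioning on $a^i = a$ leaves the other players' actions independent, so $P_t^i(\cdot \mid s, a) = \sum_{\ba^{-i}} \bpi_t^{-i}(\ba^{-i} \mid s)\, P(\cdot \mid s, (a,\ba^{-i}))$ with $\bpi_t^{-i}(\cdot \mid s) = \prod_{j\neq i}\pi_t^j(\cdot \mid s)$. Subtracting the two rounds, the fixed kernel $P$ factors out; since every $P(\cdot \mid s, \ba)$ is a probability vector, I would bound
\[
\norms{P_{t+1}^i(\cdot \mid s, a) - P_t^i(\cdot \mid s, a)}_1 \le \norms{\bpi_{t+1}^{-i}(\cdot \mid s) - \bpi_t^{-i}(\cdot \mid s)}_1.
\]
A standard telescoping (hybrid) argument over the product distribution then yields $\norms{\bpi_{t+1}^{-i}(\cdot \mid s) - \bpi_t^{-i}(\cdot \mid s)}_1 \le \sum_{j\neq i}\norms{\pi_{t+1}^j(\cdot \mid s) - \pi_t^j(\cdot \mid s)}_1$, and maximizing over $s,a$ gives the stated kernel bound.

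For the occupancy step, write $P = P_t^i$, $\tilde P = P_{t+1}^i$, and let $\delta_h \eqq \norms{q_{\tilde P}^\mu|_{\S_h} - q_P^\mu|_{\S_h}}_1$ be the per-layer $L_1$ discrepancy, so that $\norms{q_{\tilde P}^\mu - q_P^\mu}_\infty \le \max_h \delta_h$. The layer-$(h{+}1)$ occupancy is obtained from layer $h$ by applying the nonnegative, column-stochastic operator $v \mapsto \sum_{s\in\S_h}\sum_a v(s)\,\mu(a \mid s)\,P(\cdot \mid s, a)$. Adding and subtracting $\sum_{s\in\S_h} q_{\tilde P}^\mu(s)\sum_a \mu(a \mid s)\,P(\cdot \mid s, a)$ splits $q_{\tilde P}^\mu|_{\S_{h+1}} - q_P^\mu|_{\S_{h+1}}$ into (i) a kernel-difference term involving $\tilde P - P$ weighted by the occupancy $q_{\tilde P}^\mu$, which sums to one over $\S_h$ and is therefore bounded in $L_1$ by $\norms{\tilde P - P}_{\infty,1}$; and (ii) the $P$-operator applied to $q_{\tilde P}^\mu|_{\S_h} - q_P^\mu|_{\S_h}$, bounded by $\delta_h$ since column-stochastic matrices are $L_1$-contractions. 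This gives the recursion $\delta_{h+1} \le \delta_h + \norms{\tilde P - P}_{\infty,1}$ with $\delta_1 = 0$ (layer $1$ is $\{s_1\}$), hence $\delta_h \le (h-1)\norms{\tilde P - P}_{\infty,1} \le H\norms{\tilde P - P}_{\infty,1}$.

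The two inequalities themselves are routine; the main technical content is the layered recursion in the second step, which must be set up carefully to exploit that each layer's occupancy sums to one and that the induced per-layer transition operator is column-stochastic. I would present it explicitly via the recursion above rather than invoking the value-difference lemma, since the latter bounds \emph{value functions} rather than occupancy measures directly.
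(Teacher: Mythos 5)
Your proposal is correct, and its overall decomposition is the same as the paper's: first bound the variation of the induced kernels by the opponents' policy movement, then convert kernel variation into occupancy variation and chain the two. Your first stage is exactly the paper's \cref{lem:mdp_variation} together with the product-distribution telescoping of \cref{lem:l1_prod}. Where you genuinely diverge is in the second stage: the paper (\cref{lem:occupancy_variation}) obtains the occupancy sensitivity by embedding the occupancy measure as a value function of an auxiliary MDP with indicator loss $\l_z(s,a)=\Ind\{s=z\}$ and invoking the value-difference lemma (\cref{lem:val_diff}), whereas you run a direct per-layer $L_1$ recursion $\delta_{h+1}\leq \delta_h + \norms{\tilde P - P}_{\infty,1}$, using that each layer's occupancy sums to one and that the induced layer-to-layer operator is an $L_1$-contraction. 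Both arguments are sound and amount to the same telescoping over layers; yours is more self-contained and avoids the detour through value functions, while the paper's reuses machinery it needs elsewhere anyway. Your route also yields the sharper constant $H$ (rather than $H^2$) for the sup-norm bound stated in the lemma, which matches the sup-norm clause of \cref{lem:occupancy_variation}; the extra factor of $H$ in the lemma's statement is slack (the paper in fact proves and later uses the $L_1$ version, which your recursion also delivers by summing $\delta_h$ over $h$, so nothing downstream is affected).
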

\begin{proof}
    Follows by combining \cref{lem:occupancy_variation} and \cref{lem:mdp_variation}; 
    \begin{align*}
        \normb{q^\mu_{P^i_{t+1}} - q^\mu_{P^i_{t}}}_1
        \leq H^2 \normb{
            P_{t+1}^i - P_t^i
        }_{\infty, 1}
        \leq H^2 \sum_{j\neq i}
            \norms{\pi^{j}_{t+1} 
        - \pi^{j}_{t}}_{\infty, 1}
        .
    \end{align*}
\end{proof}

\begin{lemma}\label{lem:mdp_variation}
    It holds that for all $i\in[m]$, $s\in \S, a\in \A_i$;
    \begin{align*}
        \norms{P_{t+1}^i(\cdot \mid s, a) - P_t^i(\cdot \mid s, a)}_1 
        &\leq \sum_{j\neq i}
            \norms{\pi^{j}_{t+1}(\cdot \mid s) 
        - \pi^{j}_{t}(\cdot \mid s)}_1,
        \\
        |\l_{t+1}^i(s, a)
        - \l_t^i(s, a)|
        &\leq \sum_{j\neq i}
            \norms{\pi^{j}_{t+1}(\cdot \mid s) 
        - \pi^{j}_{t}(\cdot \mid s)}_1
        .
    \end{align*}
\end{lemma}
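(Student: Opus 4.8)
The plan is to note that both $P_t^i(\cdot \mid s, a)$ and $\l_t^i(s, a)$ are averages of \emph{fixed} (i.e.\ $t$-independent) per-joint-action quantities against the product distribution induced by the other players' policies, so that the entire $t$-dependence of the differences is carried by the change in this single product distribution, and then to bound the $L_1$ movement of that product distribution by the sum of its marginal movements.

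Concretely, since the players draw actions independently, $\bpi_t(\ba \mid s) = \prod_{k} \pi_t^k(a^k \mid s)$, and conditioning on $a^i = a$ leaves exactly the product distribution $\rho_t(\ba^{-i}) \eqq \prod_{j \neq i} \pi_t^j(a^j \mid s)$ over the other players' actions $\ba^{-i}$. Writing $(a, \ba^{-i})$ for the joint action in which player $i$ plays $a$, I would therefore use
\[
	P_t^i(\cdot \mid s, a) = \sum_{\ba^{-i}} \rho_t(\ba^{-i})\, P(\cdot \mid s, (a, \ba^{-i})),
	\qquad
	\l_t^i(s, a) = \sum_{\ba^{-i}} \rho_t(\ba^{-i})\, \ell^i(s, (a, \ba^{-i})).
\]
Since the kernels $P(\cdot \mid s, (a, \ba^{-i}))$ and losses $\ell^i(s, (a, \ba^{-i}))$ do not depend on $t$, subtracting the round-$t$ and round-$(t{+}1)$ versions and applying the triangle inequality gives
\[
	\norms{P_{t+1}^i(\cdot \mid s, a) - P_t^i(\cdot \mid s, a)}_1
	\leq \sum_{\ba^{-i}} |\rho_{t+1}(\ba^{-i}) - \rho_t(\ba^{-i})|\, \norms{P(\cdot \mid s, (a, \ba^{-i}))}_1
	= \norms{\rho_{t+1} - \rho_t}_1,
\]
using that each kernel has unit $L_1$ norm; the loss bound $|\l_{t+1}^i(s,a) - \l_t^i(s,a)| \leq \norms{\rho_{t+1} - \rho_t}_1$ follows identically from $\ell^i \in [0,1]$.

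It then remains to show $\norms{\rho_{t+1} - \rho_t}_1 \leq \sum_{j \neq i} \norms{\pi_{t+1}^j(\cdot \mid s) - \pi_t^j(\cdot \mid s)}_1$, which I would prove by a standard hybrid argument: enumerate the players $j \neq i$ and interpolate between $\rho_t$ and $\rho_{t+1}$ by swapping one factor $\pi_t^j \to \pi_{t+1}^j$ at a time. Two consecutive hybrids differ in a single factor, and since all remaining factors are probability distributions summing to one, the $L_1$ distance between consecutive hybrids equals exactly $\norms{\pi_{t+1}^j(\cdot \mid s) - \pi_t^j(\cdot \mid s)}_1$; summing via the triangle inequality over the $m-1$ swaps yields the claim and hence both inequalities. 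The only mildly delicate step is this product-distribution bound, but the factorization of $\rho_t$ makes the telescoping clean, and everything else is the triangle inequality together with the trivial observation that the per-joint-action kernels and losses are independent of $t$.
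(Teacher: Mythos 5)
Your proposal is correct and follows essentially the same route as the paper: the paper likewise writes $P_t^i(\cdot\mid s,a)$ and $\l_t^i(s,a)$ as expectations of $t$-independent quantities over the product distribution $\bpi_t^{-i}(\cdot\mid s)$, bounds each difference by $\norms{\bpi^{-i}_{t+1}(\cdot\mid s)-\bpi^{-i}_{t}(\cdot\mid s)}_1$, and then invokes its Lemma~\ref{lem:l1_prod}, whose inductive proof is exactly your one-factor-at-a-time hybrid/telescoping argument. If anything, your treatment of the transition part is slightly more careful, since you insert the absolute values before summing over next states, whereas the paper's displayed computation omits them.
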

\begin{proof}
    For the losses, observe;
    \begin{align*}
        \l_{t+1}^i(s, a)
        - \l_{t}^i (s, a)
        &= \E_{\ba^{-i} \sim \bpi_{t+1}^{-i}} \l^i(s, a, \ba^{-i})
            - \E_{\ba^{-i} \sim \bpi_{t}^{-i}} \l^i(s, a, \ba^{-i})
        \\
        &= \sum_{\ba^{-i} \in \A^{-i}} \b{
                \bpi^{-i}_{t+1}(\ba^{-i} \mid s) 
                - \bpi^{-i}_{t}(\ba^{-i} \mid s)
            } \l^i(s, a, \ba^{-i})
        \\
        &\leq \norm{\bpi^{-i}_{t+1}(\cdot \mid s) 
        - \bpi^{-i}_{t}(\cdot \mid s)}_1.
    \end{align*}
    For the induced transition function, note that for any $h\in[H]$, we have
    \begin{align*}
        \sum_{s'\in \S_{h+1}}
            & P^i_{t+1}(s' \mid s, a) 
            - P^i_{t} (s' \mid s, a)
        \\
        &= \sum_{s'\in \S_{h+1}} 
            \E_{\ba^{-i} \sim \bpi_{t+1}^{-i}
                (\cdot \mid s)}\sb{P(s' \mid s, a, \ba^{-i})}
            - \E_{\ba^{-i} \sim \pi_t^{-i}
                (\cdot \mid s)}\sb{P(s' \mid s, a, \ba^{-i})}
        \\
        &= \sum_{s'\in \S_{h+1}} \sum_{\ba^{-i}\in \A^{-i}} 
            (\bpi^{-i}_{t+1}(\ba^{-i} \mid s) 
            - \bpi^{-i}_{t}(\ba^{-i} \mid s) ) 
            P(s' \mid s, a, \ba^{-i})
        \\
        &= \sum_{ \ba^{-i} \in \A^{-i}} 
            (\bpi^{-i}_{t+1}( \ba^{-i} \mid s) 
            - \bpi^{-i}_{t}( \ba^{-i} \mid s) ) 
            \sum_{s'\in \S_{h+1}} P(s' \mid s, a, \ba^{-i})
        \\
        &= \norm{\bpi^{-i}_{t+1}(\cdot \mid s) 
        - \bpi^{-i}_{t}(\cdot \mid s)}_1
    \end{align*}
    By \cref{lem:l1_prod}, we have
    \begin{align*}
        \norm{\bpi^{-i}_{t+1}(\cdot \mid s) 
        - \bpi^{-i}_{t}(\cdot \mid s)}_1
        \leq \sum_{j\neq i}
            \norms{\pi^{j}_{t+1}(\cdot \mid s) 
        - \pi^{j}_{t}(\cdot \mid s)}_1,
    \end{align*}
    and the result follows.
\end{proof}

\begin{lemma}\label{lem:occupancy_variation}
    For any policy $\pi\in \S \to \A_i$ and single agent transition functions $P, \tilde P$, it holds that
    \begin{align*}
        \norms{q_{P}^\pi - q_{ \tilde P}^\pi}_1
        &\leq H^2 \norms{P - \tilde P}_{\infty, 1},
        \\
        \norms{q_{P}^\pi - q_{ \tilde P}^\pi}_\infty
        &\leq H \norms{P - \tilde P}_{\infty, 1}.
    \end{align*}
\end{lemma}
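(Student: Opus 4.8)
The plan is to exploit the layered structure of the game and propagate the occupancy-measure discrepancy forward one layer at a time. For $s\in\S_h$ write $q_h(s)\eqq q_P^\pi(s)$ and $\tilde q_h(s)\eqq q_{\tilde P}^\pi(s)$, and set $\delta_h \eqq \sum_{s\in\S_h}|q_h(s)-\tilde q_h(s)|$, the per-layer $L_1$ discrepancy. Since the initial state $s_1$ is deterministic, $\delta_1 = 0$. The occupancies obey the forward recursion $q_{h+1}(s') = \sum_{s\in\S_h}\sum_{a\in\A_i}q_h(s)\pi(a\mid s)P(s'\mid s,a)$, and likewise for $\tilde q$ with $\tilde P$.

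The first step is to derive a one-step \emph{additive} growth bound on $\delta_h$. Subtracting the two recursions and inserting the telescoping term $q_h(s)\tilde P(s'\mid s,a)$, I would write, for each $s'\in\S_{h+1}$,
\begin{align*}
q_{h+1}(s') - \tilde q_{h+1}(s')
= \sum_{s,a}\pi(a\mid s)\Big[ q_h(s)\big(P(s'\mid s,a)-\tilde P(s'\mid s,a)\big)
+ \big(q_h(s)-\tilde q_h(s)\big)\tilde P(s'\mid s,a)\Big].
\end{align*}
Summing $|\cdot|$ over $s'\in\S_{h+1}$ and applying the triangle inequality splits this into two pieces. The first collapses, using $\sum_{s'}|P(s'\mid s,a)-\tilde P(s'\mid s,a)| \le \norms{P-\tilde P}_{\infty,1}$ together with $\sum_{s,a}\pi(a\mid s)q_h(s)=1$, to at most $\norms{P-\tilde P}_{\infty,1}$. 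The second collapses, using $\sum_{s'}\tilde P(s'\mid s,a)=1$ and $\sum_a\pi(a\mid s)=1$, to exactly $\delta_h$. This yields the clean recursion $\delta_{h+1}\le \delta_h + \norms{P-\tilde P}_{\infty,1}$, and hence by induction $\delta_h \le (h-1)\norms{P-\tilde P}_{\infty,1}\le H\norms{P-\tilde P}_{\infty,1}$ for every $h\le H+1$.

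Both claimed bounds then follow immediately. For the $\infty$-norm, any state $s\in\S_h$ satisfies $|q_h(s)-\tilde q_h(s)|\le \delta_h \le H\norms{P-\tilde P}_{\infty,1}$, so taking the maximum over all states gives $\norms{q_P^\pi-q_{\tilde P}^\pi}_\infty\le H\norms{P-\tilde P}_{\infty,1}$. For the $L_1$-norm, summing across layers gives $\norms{q_P^\pi-q_{\tilde P}^\pi}_1 = \sum_{h=1}^{H+1}\delta_h \le \tfrac{H(H+1)}{2}\norms{P-\tilde P}_{\infty,1}\le H^2\norms{P-\tilde P}_{\infty,1}$, where the final inequality uses $H\ge 1$.

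I expect the argument to be essentially routine; the only point requiring care is the telescoping split that produces an \emph{additive} (rather than multiplicative) recursion in $\delta_h$, since a naive bound would accumulate transition errors multiplicatively. The remaining work is just bookkeeping the layered sums so that the constant $\tfrac{H(H+1)}{2}$ is correctly absorbed into $H^2$.
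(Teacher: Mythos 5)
Your proof is correct, but it takes a different route from the paper's. The paper proves this lemma by a simulation argument: for each target state $z \in \S_L$ it constructs an auxiliary MDP with the indicator loss $\l_z(s,a) = \mathbb{I}\{s=z\}$, so that the value function at $s_1$ equals the reaching probability $q_P^\pi(z)$, and then invokes the second part of the value-difference lemma (\cref{lem:val_diff}) to express $q_P^\pi(z) - q_{\tilde P}^\pi(z)$ as an occupancy-weighted sum of per-step transition discrepancies; summing over $z \in \S_L$ gives the per-layer bound $L\norms{P-\tilde P}_{\infty,1}$, and summing over layers gives the $H^2$ bound. You instead run a direct forward induction on the per-layer $L_1$ discrepancy $\delta_h$, with the telescoping split producing the additive recursion $\delta_{h+1} \le \delta_h + \norms{P-\tilde P}_{\infty,1}$; this arrives at essentially the same per-layer estimate (your $(h-1)\norms{P-\tilde P}_{\infty,1}$ versus the paper's $L\norms{P-\tilde P}_{\infty,1}$) and then both conclusions follow by the same bookkeeping. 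Your argument is more elementary and self-contained, needing no value-function machinery, and it makes the $\infty$-norm bound fall out immediately from the per-layer quantity; the paper's version buys brevity by reusing \cref{lem:val_diff}, which is already established for the regret decomposition and is the workhorse for the neighboring lemmas as well.
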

\begin{proof}
    Let $L \in [H]$, $z \in \S_L$, set loss function 
    $\l_z(s, a) = \mathbb I\{ s = z\}$, and consider the two MDPs
    $M_z=(H, \S, \A_i, P, \l_z)$ and 
    $\tilde M_z=(H, \S, \A_i, \tilde P, \tilde \l_z)$ with value functions $V_z, \tilde V_z$ respectively.
    Then, we have for any $s\in \S_h$,
    $V^\pi_z(s) = \Pr(s_L = z \mid s_h = s, P, \pi)$, and 
    $\tilde V_z(s) = \Pr(s_L = z \mid s_h = s, \tilde P, \pi)$, which also implies 
    $V^\pi_z(s_1) = q^{\pi}_P(z)$ and
    $\tilde V^\pi_z(s_1) = q^{\pi}_{\tilde P}(z)$. 
    Thus, by \cref{lem:val_diff}, we have;
    \begin{align*}
        q_P^\pi(z) & - q_{\tilde P}^\pi(z)
        \\
        &= \sum_{h=1}^L \sum_{s_h, a_h}
            q_{\tilde P}^\pi(s_h, a_h)
            \sum_{s_{h+1}} 
                (P^\pi(s_{h+1} \mid s_h, a_h)
                - \tilde P^\pi(s_{h+1} \mid s_h, a_h)
                )\Pr(s_L = z \mid s_{h+1}, P, \pi).
    \end{align*}
    Taking absolute values and summing the above over $z\in \S_L$ we obtain
    \begin{align*}
        \sum_{z\in \S_L} | q_P^\pi(z) - q_{\tilde P}^\pi(z) |
        &\leq \sum_{h=1}^L \sum_{s_h, a_h}
            q_{\tilde P}^\pi(s_h, a_h)
            \sum_{s_{h+1}} 
                |P^\pi(s_{h+1} \mid s_h, a_h)
                - \tilde P^\pi(s_{h+1} \mid s_h, a_h)
                |
        \\
        &\leq L \norms{ P - \tilde P }_{\infty, 1}.
    \end{align*}
    Hence,
    \begin{align*}
        \norms{q_P^\pi - q_{\tilde P}^\pi}_1
        = \sum_{L=1}^H \sum_{z\in \S_L} 
            |q_P^\pi(z) - q_{\tilde P}^\pi(z)|
        \leq H^2 \norms{P - \tilde P}_{\infty, 1}.
    \end{align*}
    
\end{proof}

\section{FTRL lower bound in non-stationary MDP}
\label{sec:ftrl_nonstationary_lb}
In the following, we provide an example demonstrating that FTRL-based policy optimization does not adapt to non-stationary dynamics, at least not in the sense discussed here. 

In a nutshell, since FTRL considers the entire sequence of past loss functions, it may not pick up on  the change in the long term reward in a timely fashion.
Indeed, since the policy optimization paradigm prescribes a per state objective that effectively ignores the visitation frequency to that state, FTRL allows past losses (induced by action-value functions from previous episodes) \emph{that may be irrelevant} to bias the policy towards suboptimal actions for a prohibitively large number of episodes. Loosely speaking, this behavior is due to the fact that in contrast to OMD, FTRL is insensitive to the order of losses. 

Notably, the failure of FTRL is strongly related to its inability to guarantee \emph{adaptive} regret in the sense defined in \cite{hazan2009efficient}, who also point out the inherent non-adaptivity of this algorithm. 

The claim below illustrates an example of an MDP with a small constant change in the dynamics leading to FTRL incurring linear regret. 
Essentially, this is a simple example where FTRL fails to achieve adaptive regret, embedded in a non-stationary MDP.
We remark that while the MDP in our construction makes a single, abrupt shift in the dynamics, the lower bound does not stem from the abruptness of the change. Rather, this choice is only for simplicity; the construction may be generalized to the case where the per episode drift must be bounded by e.g., 
$1/\sqrt T$, by augmenting the construction with a "shift period" of $\sqrt T$ episodes. In addition, it is not hard to show the construction can be generalized to subsets of the action simplex --- this is to say that the lower bound also does not stem from lack of exploration that can be solved by truncating the simplex, as we have done in the OMD case. Finally, we remark that the same lower bound remains valid also when considering OFTRL; 1-step recency bias does not make the algorithm sufficiently adaptive for the example in question.

We refer in the statement to a symmetric regularizer, meaning one that is insensitive to permutations of the input coordinates.
This assumption is only for simplicity; it can be relaxed by generalizing the instance appearing in the lower bound to a mixture of two instances with action roles reversed, and observing that on at least one of them FTRL must incur linear regret.

\begin{claim}
	There exists a non-stationary MDP $M = (S, \{a, b\}, \{P_t\}_{t=1}^T, \l )$, such that $\sum_{t=2}^T \norms{P_t - P_{t-1}}_1 \leq 1$, but policy optimization with FTRL over the action simplex $\Delta_{\cb{a, b}}$, any symmetric regularizer, and any step size incurs regret of $\Omega(T)$.
\end{claim}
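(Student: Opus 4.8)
The plan is to embed the classical failure of follow-the-(regularized)-leader to guarantee \emph{adaptive} regret \citep{hazan2009efficient} into an MDP whose transition kernel is almost stationary, exploiting the key asymmetry of policy optimization: the per-state FTRL instance updates on the \emph{unweighted} action-value sequence $Q_t(s,\cdot)$, whereas the contribution of state $s$ to the actual regret is \emph{weighted} by the time-varying occupancy $q_t(s)$. Concretely, I would build an MDP whose only genuine decision is taken at a single ``gadget'' state $s$ with two actions $\{a,b\}$; every other state admits a single action, so $Q_t(s,\cdot)$ is policy-independent and, by the value difference lemma (\cref{lem:val_diff}), the regret collapses to the occupancy-weighted regret at $s$,
\[
\sum_{t=1}^T \big(V_t^{\pi_t}(s_1) - V_t^{\pi_\star}(s_1)\big)
= \sum_{t=1}^T q_t(s)\,\big\langle Q_t(s,\cdot),\, x_t - e_b\big\rangle,
\]
where $x_t = \pi_t(\cdot\mid s)$ is the FTRL iterate at $s$, $q_t(s)$ is the (policy-independent) probability of reaching $s$, and $b$ will be the comparator's action.

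The two halves of the horizon play the roles of the two phases of the adaptive-regret instance. I would fix the losses and introduce a \emph{single} change of the kernel at $t=T/2$ so that the induced gap is $Q_t(s,a)-Q_t(s,b)=-c$ for $t\le T/2$ (so $a$ is preferable) and $Q_t(s,a)-Q_t(s,b)=+c$ for $t>T/2$ (so $b$ is preferable), for a constant $c$. The flip is realized by routing $a$ and $b$ into two length-$\Theta(H)$ sub-chains whose relative continuation value gets swapped by a single transition perturbation: since $\lvert V^\pi-\tilde V^\pi\rvert\le H^2\norms{P-\tilde P}_{\infty,1}$, a perturbation of $\ell_1$-size $\Theta(1/H^2)$ already produces a $\Theta(1)$ swing in the continuation values, hence in $Q_t(s,\cdot)$. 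At a \emph{disjoint} (state, action) pair I would raise the occupancy of $s$ from $\epsilon=\Theta(1/H^2)$ in the first half to $\tfrac12$ in the second half; this routing change has total variation $1-\Theta(1/H^2)$, and since the two perturbations are localized to disjoint transitions their variations add, keeping $\sum_{t\ge 2}\norms{P_t-P_{t-1}}_1$ a single jump bounded by $1$.

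The heart of the argument is a statement holding for \emph{every} step size and \emph{every} symmetric regularizer: throughout the entire second half the cumulative-loss difference $D_t := \sum_{\tau\le t}\big(Q_\tau(s,a)-Q_\tau(s,b)\big)$ is nonpositive, since $D_{T/2}=-cT/2$ and it recovers at rate only $+c$ per round, giving $D_{T/2+k}=c(k-T/2)\le 0$ for all $k\le T/2$. For FTRL with a symmetric regularizer the iterate depends monotonically on $D_t$ and equals the uniform distribution exactly when $D_t=0$; hence $D_t\le 0$ forces $x_t(a)\ge \tfrac12$ for every $t$ in the second half, \emph{irrespective of} $\eta$ (large $\eta$ keeps the iterate near the stuck leader $e_a$, small $\eta$ keeps it near uniform, and intermediate $\eta$ interpolates). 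A one-step recency correction merely advances $D_t$ by one round and cannot change its sign, which is precisely why OFTRL fails on the same instance.

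Finally I would combine these facts. Using $\langle Q_t(s,\cdot), x_t - e_b\rangle = x_t(a)\,(Q_t(s,a)-Q_t(s,b))$, the second half contributes at least $\sum_{t>T/2}\tfrac12\cdot\tfrac12\cdot c = cT/8$ (occupancy $\tfrac12$, $x_t(a)\ge\tfrac12$, gap $c$), while the first half contributes at least $-\sum_{t\le T/2}\epsilon\,c=-\epsilon cT/2=o(T)$ because the occupancy there is only $\epsilon$. A parallel computation shows the occupancy-weighted advantage of $b$ at $s$ is $cT/4-\epsilon cT/2>0$, so the best fixed policy $\pi_\star$ indeed plays $b$, and the regret is at least $cT/8-\epsilon cT/2=\Omega(T)$. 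The main obstacle I anticipate is the joint budgeting --- producing a constant-size value flip \emph{and} a constant-size occupancy shift while keeping the total variation below $1$. This is exactly what the horizon amplification buys: it lets the value-flipping perturbation be polynomially small in $1/H$, so essentially the entire unit budget can be spent on the routing change; the principal technical check is verifying that the flip and the routing modification can be localized to disjoint transitions, so their variations simply add.
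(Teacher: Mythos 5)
Your proposal is correct and follows essentially the same route as the paper's proof: both embed FTRL's failure to achieve adaptive regret into a two-phase non-stationary MDP in which the decision state has negligible (in the paper, exactly zero) occupancy during the first phase while FTRL accumulates a cumulative-loss bias toward the action that becomes wrong in the second phase, and both invoke symmetry of the regularizer to conclude the stuck action retains probability at least $1/2$ for $\Omega(T)$ rounds of high occupancy; your version is merely more careful about the total-variation budget, where the paper instead makes one abrupt change and remarks that it can be smoothed over a shift period. One minor quantitative nit: a transition perturbation of $\ell_1$-size $\delta$ localized to a single state--action pair moves the continuation values (and hence $Q_t(s,\cdot)$) by at most $O(H\delta)$, not $O(H^2\delta)$, so the value-flipping perturbation must have size $\Theta(1/H)$ rather than $\Theta(1/H^2)$ --- this still leaves $1-\Theta(1/H)$ of the budget for the occupancy shift, so your argument closes unchanged.
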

\begin{proof}
	Let $S=\cb{s_0, s_1, s_2, L_0, L_1}$ denote the state space, and consider the non-stationary MDP $M = (S, \{a, b\}, \{P_t\}_{t=1}^T, \l )$, where the immediate loss function is independent of the action and is specified by $\l(s_i)=0\; \forall i$, $\l(L_0) = 0$, and $\l(L_1) = 1$.
	Further, assume that
	\begin{itemize}
		\item for $t\leq T/3$, $P_t(s_1|s_0, \cdot) = 1, P_t(L_1|s_2, a) = 1$, 
			and $P_t(L_0|s_2, b) = 1$, (see \cref{fig:ftrl_lb_1})
		\item for $t > T/3$, $P_t(s_2|s_0, \cdot) = 1, P_t(L_1|s_2, b) = 1$, 
			and $P_t(L_0|s_2, a) = 1$ (see \cref{fig:ftrl_lb_2}).
	\end{itemize}
	Consider running policy optimization with FTRL over $\X = \Delta_{\{a, b\}}$, and a symmetric regularizer $R \colon \X \to \R$ for $T$ episodes.
	
	First, observe that the optimal policy in hindsight $\pi_\star$ selects action $a$ with probability $1$ in state $s_2$; $\pi_\star(a|s_2) = 1$. Note that the actions chosen in the rest of the states do not affect the loss, and therefore need not be specified. Thus, in the first $T/3$ episodes, $\pi_\star$ loses nothing since state $s_2$ is never reached, and in the remaining $2T/3$ episodes it loses nothing on account of selecting an action which leads to $L_0$. This establishes that $\sum_{t=1}^T V^{\pi_\star}(s_0; P_t) = 0$. 
	
	On the other hand, for $t > T/3$, the FTRL objective on episode $t$ at state $s_2$, is given by
	\begin{align*}
		\pi_{t+1}(\cdot|s_2) \gets \argmin_{x\in \X} \cb[B]{
			\eta \ab[B]{\sum_{j=1}^t \hat \l_t, x} + R(x)
		},
	\end{align*}
	where $\hat \l_t(\cdot) \eqq Q_t(s_2, \cdot)$. Thus,
	\begin{align*}
		\begin{cases}
			\hat \l_t(a) = 1, \hat \l_t(b) = 0
				\quad &t\leq T/3,
			\\
			\hat \l_t(a) = 0, \hat \l_t(b) = 1
				\quad &t > T/3,
		\end{cases}
	\end{align*}
	which leads us to conclude that in all rounds $t \leq 2T/3$, the action $b$ seems favorable according to the minimization objective. This implies that for all $t \leq 2T/3$, $\pi_t(b|s_2) \geq 1/2$. Note that we use here the fact that the regularizer and decision set are symmetric, and treat all coordinates equally. Now,
	\begin{align*}
		\sum_{t=1}^T V^{\pi_t}(s_0; P_t) - V^{\pi_\star}(s_0; P_t) 
			\geq \sum_{t=T/3}^{2T/3} V^{\pi_t}(s_0; P_t)
			\geq \frac{T}{6},
	\end{align*}
	as claimed.
\end{proof}

\begin{figure}[!p]
    \centering
\tikzset{arrow/.style={-stealth, thick, draw=gray!80!black}}

\begin{tikzpicture}[
    state/.style = {
        circle, draw, inner sep=12pt
    },
]
	\node[state] at (3.5,7.5) (G0) {$L_0$};
	\node[state] at (6.5,7.5) (G1) {$L_1$};
	\node[state] at (5,5) (S2) {$s_2$};
	\node[state] at (2.5,2.5) (S0) {$s_0$};
	\node[state] at (0,5) (S1) {$s_1$};
	
	\draw [arrow] (S2) edge node[left,pos=0.47]{$a$} (G1);
	\draw [arrow] (S2) edge node[right]{$b$} (G0);
	
    \draw[dashed, ->] (S0) edge node[left]{} (S2);
    \draw [arrow] (S0) edge node[right]{$a,b$} (S1);
\end{tikzpicture}

\caption{MDP at $t=1, \ldots, T/3$} 
\label{fig:ftrl_lb_1}

\end{figure}
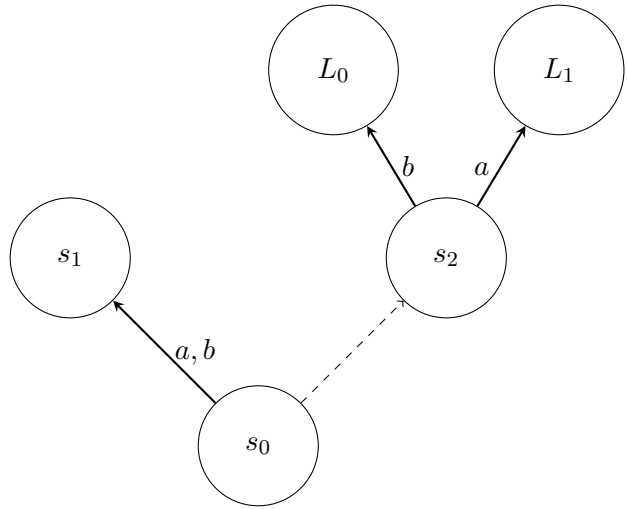

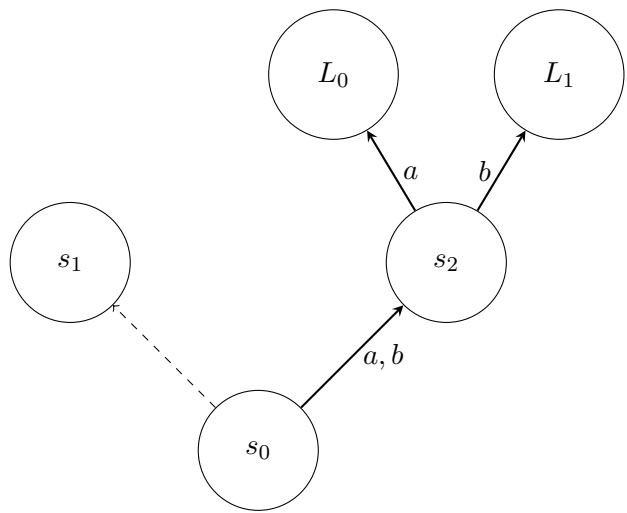
\begin{figure}
    \centering
\tikzset{arrow/.style={-stealth, thick, draw=gray!80!black}}

\begin{tikzpicture}[
    state/.style = {
        circle, draw, inner sep=12pt
    },
]
	\node[state] at (3.5,7.5) (G0) {$L_0$};
	\node[state] at (6.5,7.5) (G1) {$L_1$};
	\node[state] at (5,5) (S2) {$s_2$};
	\node[state] at (2.5,2.5) (S0) {$s_0$};
	\node[state] at (0,5) (S1) {$s_1$};
	
	\draw [arrow] (S2) edge node[left]{$b$} (G1);
	\draw [arrow] (S2) edge node[right,pos=0.47]{$a$} (G0);
	
    \draw[dashed, ->] (S0) edge node[left]{} (S1);
    \draw [arrow] (S0) edge node[right]{$a,b$} (S2);
\end{tikzpicture}

\caption{MDP at $t=T/3+1, \ldots, T$} 
\label{fig:ftrl_lb_2}

\end{figure}

\section{Auxiliary Lemmas}
 
\begin{lemma}\label{lem:logbarrier_simple_props}
    Let $k \in \mathbb N$, and consider the truncated simplex $\Delta_k^\gamma \subseteq \Delta_k$ (see \cref{eq:trunc_simplex}). It holds that:
    \begin{enumerate}
        \item For log-barrier regularizer $R\colon \Delta_k^\gamma \to \R$ (see \cref{eq:log_barrier}), we have $D_R(x, x') \leq \frac{3}{\gamma} $ for all $x, x'\in \Delta_k^\gamma$.
        \item If $0 < \gamma \leq 1/2k$, for all $x\in \Delta_k$, there exists $x^\gamma \in \Delta_k^\gamma$ such that
        $\norm{x - x^\gamma}_1 \leq 2 \gamma k$.
    \end{enumerate}
\end{lemma}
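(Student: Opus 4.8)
The plan is to handle the two parts separately, each with a short direct computation. Part (1) follows from the explicit formula for the log-barrier Bregman divergence together with the truncation lower bound $\gamma$ and the normalization $\sum_a x(a) = \sum_a x'(a) = 1$; part (2) follows from the standard trick of mixing the given distribution with the uniform distribution.

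For part (1), I would start from the stated divergence
\[
D_R(x, x') = \sum_{a\in[k]} \log\frac{x'(a)}{x(a)} + \sum_{a\in[k]} \frac{x(a) - x'(a)}{x'(a)},
\]
and bound the two sums in the same way. For the second sum, rewrite $\sum_a \frac{x(a)-x'(a)}{x'(a)} = \sum_a \frac{x(a)}{x'(a)} - k$; since $x'(a) \ge \gamma$ for all $a$ and $\sum_a x(a) = 1$, this is at most $\frac1\gamma - k$. For the first sum, the key step is to linearize the logarithm via $\log t \le t - 1$, giving $\log\frac{x'(a)}{x(a)} \le \frac{x'(a)}{x(a)} - 1$; summing and using $x(a) \ge \gamma$ together with $\sum_a x'(a) = 1$ yields $\sum_a \log\frac{x'(a)}{x(a)} \le \frac1\gamma - k$. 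Adding the two bounds gives $D_R(x,x') \le \frac2\gamma - 2k \le \frac3\gamma$, which is even slightly stronger than claimed.

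For part (2), given $x \in \Delta_k$ I would define $x^\gamma \eqq (1 - \gamma k)\,x + \gamma k\, u$, where $u(a) = 1/k$ is the uniform distribution. Since $\gamma \le 1/2k$ we have $\gamma k \le 1/2 < 1$, so $x^\gamma$ is a genuine convex combination, hence $x^\gamma \in \Delta_k$, and moreover $x^\gamma(a) = (1-\gamma k)x(a) + \gamma \ge \gamma$ for every $a$, so $x^\gamma \in \Delta_k^\gamma$. Finally, $x - x^\gamma = \gamma k (x - u)$, so $\norm{x - x^\gamma}_1 = \gamma k \norm{x - u}_1 \le 2\gamma k$ because $x$ and $u$ are both probability vectors.

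The only mildly delicate point is in part (1): the per-coordinate terms $u - 1 - \log u$ (with $u = x(a)/x'(a)$) are not individually small, so bounding them one at a time is too lossy and would produce a bound growing with $k$. The crucial idea is instead to convert the logarithmic term into a ratio via $\log t \le t-1$ and only then exploit the lower bound $\gamma$ and the normalization constraint globally over the whole sum; this is what produces the clean $O(1/\gamma)$ bound.
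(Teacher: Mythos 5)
Your proof is correct, and both parts take a genuinely different (and in fact cleaner) route than the paper's. For part (1), the paper bounds the two sums in the Bregman divergence directly, asserting $\sum_a \log\frac{x(a)}{x'(a)} \leq \log\frac{1}{\gamma}$ and $\sum_a \frac{x(a)-x'(a)}{x'(a)} \leq \frac{2}{\gamma}$; the first of these is delicate, since it bounds a sum of $k$ log-ratios by a quantity appropriate for a single term. Your linearization $\log t \leq t-1$ sidesteps this entirely: it converts the log-sum into $\sum_a \frac{x'(a)}{x(a)} - k$, which the normalization $\sum_a x'(a)=1$ and the truncation $x(a)\geq\gamma$ bound \emph{globally} by $\frac{1}{\gamma}-k$, yielding the rigorous (and slightly stronger) bound $\frac{2}{\gamma}-2k \leq \frac{3}{\gamma}$. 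For part (2), the paper constructs $x^\gamma$ by clipping the small coordinates up to $\gamma$ and then iteratively redistributing the excess mass away from the largest coordinates, which requires an argument that the process terminates inside the truncated simplex; your mixture $x^\gamma = (1-\gamma k)x + \gamma k\,u$ with the uniform distribution $u$ achieves the same $2\gamma k$ bound in one line, with membership in $\Delta_k^\gamma$ immediate from $x^\gamma(a) \geq \gamma$ and the $L_1$ bound from $\norm{x-u}_1 \leq 2$. Both of your arguments are complete and match the stated constants.
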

\begin{proof}
    See below.
    \begin{itemize}
        \item 
            We have, for any $x, x' \in \Delta_{\A_i}^\gamma$;
        \begin{align*}
            D_R(x, x') 
            = \sum_{a\in \A_i}\log\frac{x(a)}{x'(a)}
              + \frac{x(a) - x'(a)}{x'(a)}
            \leq \log\frac{1}{\gamma} + \frac{2}{\gamma}
            \leq \frac{3}{\gamma}.
        \end{align*}
        
        \item Let $I = \{i\in [k] \mid x(a) \leq \gamma \}$. Then $|I| \leq k-1$, otherwise $\sum_{i=1}^k x(i) \leq 1/2$. Now, set $\tilde x^\gamma(i) = \gamma$, for $i\in I$, and $\tilde x^\gamma(i) = x(i)$ for $i\notin I$. We have
        \begin{align*}
            \sum_{i=1}^k \tilde x^\gamma(i) = 
            1 + \delta, 
            \quad \text{where } \delta \leq \gamma |I|,
        \end{align*}
        and $\norm{x - \tilde x^\gamma}_1 \leq (k-1)\gamma$.
        Now, subtract from the largest coordinate value $\tilde x^\gamma(i_{\rm max})$ the excess weight $\delta$. In the event that $\tilde x^\gamma(i_{\rm max}) \leq \gamma + \delta$, subtract to $\gamma$, and continue iteratively to the second largest etc. This process must terminate before reaching coordinates in $I$, since $\sum_{i=1}^k x(i) = 1$. Now, $\norm{x^\gamma}_1 = 1$, and
        \begin{align*}
            \norm{x - x^\gamma}_1
            \leq (k-1)\gamma + \delta
            \leq (2k - 1) \gamma.
        \end{align*}
    \end{itemize}
    
\end{proof}

\begin{lemma}\label{lem:l1_prod}
    Let $p$ and $q$ be any two product distributions over $X_1 \times \cdots \times X_m$, i.e., $p(x_1, \ldots, x_m) = \prod_{i=1}^m p_i(x_i)$, and $q(x_1, \ldots, x_m) = \prod_{i=1}^m q_i(x_i)$. Then
    \begin{align*}
        \norm{p - q}_1
        \leq \sum_{i=1}^m \norm{p_i - q_i}_1.
    \end{align*}
\end{lemma}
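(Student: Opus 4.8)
The plan is to use a standard hybrid (telescoping) argument that swaps the marginals from $p_i$ to $q_i$ one coordinate at a time. Concretely, I would define for each $k \in \{0, 1, \ldots, m\}$ the hybrid product distribution
\[
    r_k(x_1, \ldots, x_m) \eqq \prod_{i=1}^k q_i(x_i) \prod_{i=k+1}^m p_i(x_i),
\]
so that $r_0 = p$ and $r_m = q$. Writing $p - q = \sum_{k=1}^m (r_{k-1} - r_k)$ and applying the triangle inequality, it then suffices to bound each consecutive difference $\norm{r_{k-1} - r_k}_1$ by $\norm{p_k - q_k}_1$.

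The key observation is that $r_{k-1}$ and $r_k$ agree in every coordinate except the $k$-th, so their difference factorizes as
\[
    r_{k-1}(x) - r_k(x) = \prod_{i<k} q_i(x_i) \cdot \big(p_k(x_k) - q_k(x_k)\big) \cdot \prod_{i>k} p_i(x_i).
\]
Taking absolute values and summing over all $x = (x_1, \ldots, x_m)$, the summand is a product of terms each depending on a single coordinate, so the sum factors into a product of independent one-coordinate sums. The factors $\sum_{x_i} q_i(x_i)$ for $i < k$ and $\sum_{x_i} p_i(x_i)$ for $i > k$ each equal $1$ since $p_i$ and $q_i$ are probability distributions, while the middle factor is exactly $\sum_{x_k} \av{p_k(x_k) - q_k(x_k)} = \norm{p_k - q_k}_1$. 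Hence $\norm{r_{k-1} - r_k}_1 = \norm{p_k - q_k}_1$.

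Summing over $k = 1, \ldots, m$ yields the claim. There is no real obstacle here; the only point requiring (routine) care is justifying the factorization of the multi-coordinate sum into a product of single-coordinate sums, which is immediate from the product structure of the summand together with Tonelli's theorem for the nonnegative terms.
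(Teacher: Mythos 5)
Your proof is correct and is essentially the same argument as the paper's: the paper performs the add-and-subtract of a single hybrid term (replacing one marginal at a time) and closes via induction, which when unrolled is exactly your explicit telescoping sum over the hybrids $r_0, \ldots, r_m$ (merely swapping coordinates in the opposite order). The only cosmetic difference is that you observe each consecutive difference has $L_1$ norm \emph{exactly} $\norm{p_k - q_k}_1$ via the factorization, whereas the paper passes through the triangle inequality and normalization of the untouched marginals; both are routine and valid.
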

\begin{proof}
    We have;
    \begin{align*}
        \norm{p - q}_1
        &= \sum_{x_1 \in X_1} \cdots \sum_{x_m \in X_m} 
            \Big| p_m(x_m) \prod_{i=1}^{m-1} p_i(x_i)
            - q_m(x_m) \prod_{i=1}^{m-1} q_i(x_i)
            \Big|
        \\
        &\leq \sum_{x_1 \in X_1} \cdots \sum_{x_m \in X_m} 
        \prod_{i=1}^{m-1} p_i(x_i) \Big|
            p_m(x_m) - q_m(x_m)
        \Big|
        \\
        &\quad + \sum_{x_1 \in X_1} \cdots \sum_{x_m \in X_m} 
            q_m(x_m) \Big|
            \prod_{i=1}^{m-1} p_i(x_i) 
            - \prod_{i=1}^{m-1} q_i(x_i)
        \Big|
        \\
        &=  \norm{
            p_m - q_m
        }_1 \sum_{x_1 \in X_1} \cdots \sum_{x_{m-1} \in X_{m-1}} 
        \prod_{i=1}^{m-1} p_i(x_i) 
        \\
        &\quad + \norm{
                \prod_{i=1}^{m-1} p_i
                - \prod_{i=1}^{m-1} q_i
            }_1 \sum_{x_m \in X_m} 
            q_m(x_m) 
        \\&= \norm{
            p_m - q_m
        }_1 + \norm{
            \prod_{i=1}^{m-1} p_i
            - \prod_{i=1}^{m-1} q_i
        }_1,
    \end{align*}
    and the claim follows by induction.
\end{proof}

\section{Markov Games with Independent Transition Function }
\label{sec:quasi_markov}
In this section we consider a variant of Markov Games for which each agent has its own state and the transition is affected only by the agent's own action. Formally, each agent has its own set of states $\S^i$. Further, $P$ is the transition kernel, where
given the state at time $h$, $s\in\S^i_h$, and the agent's action $a\in\A_i$, 
$P(\cdot \mid s, a) \in \Delta_{\S^i_{h+1}}$ is  the probability distribution over the next state. The loss function at time $h$ depends on the states and actions at time $h$ of all agents: $\ell_h^i:(\bigtimes_{i\in[m]}\S^i_h)\times \A \to [0,1]$ The policy of player $i$, depends on its individual state. That is, $\pi^i(\cdot \mid \cdot): \A_i \times \S^i \to [0,1]$, is a function such that $\pi^i(a\mid s)$ gives the probability of player $i$ to take action $a$ in state $s$. Similar to before, denote the expected loss function of agent $i$ at time $t$ given action $a$ and state $s\in\S_h^i$ by $\ell_t^i(s,a) = \E[\ell^i(\bs,\ba)\mid \bpi_t , s_h^i = s]$ where $\bpi_t$ is the joint policy of the agents and $\bs = (s_h^1,...,s_h^m)$ is the vector of the agents' states at time $h$. 	Similar to before, we denote the value and action-value functions of a policy $\pi \in \S \to \Delta_{\A_i}$ by
\begin{align*}
    V_t^{i,\pi}(s) = \E \sb{ \sum_{h' = h}^H \ell_t^i(\bs_{h'},\ba_{h'}) 
    	\mid \bpi_t^{-i}, s_h = s}
    \; ; \;
    Q_t^{i,\pi}(s, a) = \E \sb{ \sum_{h' = h}^H \ell_t^i(\bs_{h'},\ba_{h'}) 
    	\mid \bpi_t^{-i}, s_h = s},
\end{align*}
	 where $s \in \S^i_h$ and $a\in \A_i$. We note that we sometimes use the shorthand $V_t^i(\cdot)$ for $V_t^{i,\pi^i_t}(\cdot)$ and $Q_t^i(\cdot,\cdot)$ for $Q_t^{i,\pi^i_t}(\cdot,\cdot)$.

\paragraph{}
In the setting of individual state transitions, it is possible to achieve much better regret bounds than in Markov games; specifically, we show that using \cref{alg:POSR} each player can obtain $O(\log T)$ individual swap regret. This possibility stems from the fact that in contrast to Markov games, the MDPs each player experiences throughout the episodes remain constant, and hence it is possible to obtain a regret bound which depends on the sum of second order path lengths of the players' policies rather than on the first order path lengths (see \cref{thm:master_regret_by_path} for the corresponding result for Markov games).

\begin{theorem}
\label{thm:regret-by-path-quasi}
    In the independent transition function setting, assume that every player $i$ adopts \cref{alg:POSR} with log-barrier regularization (\cref{eq:log_barrier}) and $\gamma\leq 1/2 A_i$, and that $\norms{\hat Q_t - Q_t}_\infty \leq \eps$ for all $t$.
    Then, assuming $H \geq 2$, the swap-regret of player $i$ is bounded as
    \begin{align*}
        \SwRe_T^i  \leq \frac{A^{2}\log\frac{1}{\gamma}}{\eta} + 24 \eta H^{4}Am\sum_{j=1}^{m}\sum_{t=1}^{T}\norm{\pi_{t+1}^{j}-\pi_{t}^{j}}_{\infty,1}^{2}+\eps HT + 8 \eta \eps^{2}T
    \end{align*}
\end{theorem}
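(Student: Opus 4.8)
The plan is to mirror the proof of \cref{thm:master_regret_by_path}, taking advantage of the fact that in the independent-transition setting the single-agent MDP $M_t^i$ induced for player $i$ has a transition kernel that is \emph{fixed} across episodes, $P_t^i \equiv P^i$; only the induced loss $\l_t^i$ varies with $t$. First I would apply the value difference lemma (\cref{lem:val_diff}) exactly as before to obtain $\SwRe_T^i \leq \eps H T + \sum_{s}\sum_{t} q_t^{i,\star}(s)\ab{\hat Q_t^i(s,\cdot),\, \pi_t^i(\cdot\mid s) - \phi_\star^{i,s}(\pi_t^i(\cdot\mid s))}$, where the $\eps H T$ term absorbs the $\hat Q$-to-$Q$ approximation error. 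Then \cref{lem:master_bases_weighted} rewrites each per-state expression as weighted base-algorithm regret against the \emph{fixed} comparators $\phi_\star^{i,s}(e_a)$, and the weighted OOMD bound (\cref{lem:oomd_weighted}) decomposes each base regret into a regularization term, a weight-variation term $\tfrac1\eta\sum_t (q_{t+1}^{i,\star}(s)-q_t^{i,\star}(s)) D_R(\cdot,\tilde x_t)$, and a loss-variation term $\tfrac\eta2\sum_t q_t^{i,\star}(s)\norms{g_{t+1}-g_t}_*^2$.

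The decisive difference from the general Markov game lies in the weight-variation term. In \cref{thm:master_regret_by_path} the weights $q_t^{i,\star}(s)$ are occupancy measures under the time-varying kernel $P_t^i$, and \cref{lem:occupancy_path} converts their total variation $\sum_t\norms{q_{t+1}^{i,\star}-q_t^{i,\star}}_1$ into the \emph{first-order} path length of \emph{all} players, producing the $\tfrac{3A_iH^2}{\eta\gamma}\sum_j\sum_t\norms{\pi^j_{t+1}-\pi^j_t}_{\infty,1}$ term. Here, because $P^i$ does not change, the contribution of the other players ($j\neq i$) to this variation vanishes identically, and the occupancies are determined solely by a fixed transition. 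Establishing that this term therefore contributes \emph{no} first-order path-length dependence to the final bound is the step I expect to be the main obstacle: one must handle the residual dependence of the benchmark occupancy $q^{\phi_\star(\pi_t^i)}(s)$ on player $i$'s own moving policy, arguing via the fixed dynamics (and the per-layer normalization $\sum_{s\in\S_h}q_t^{i,\star}(s)=1$) that it does not reintroduce a $\sqrt T$-scale term.

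Finally, the loss-variation term is handled exactly as in \cref{thm:master_regret_by_path}: \cref{lem:g_variation_bound} bounds $\norms{g_{t+1}^{i,s,a}-g_t^{i,s,a}}_\infty$ by $2H^2\sum_j\norms{\pi^j_{t+1}-\pi^j_t}_{\infty,1}+2\eps\pi_{t+1}^i(a\mid s)$ (the transition-variation contribution again dropping out under fixed dynamics), which after squaring and summing yields the $24\eta H^4 A m\sum_j\sum_t\norms{\pi^j_{t+1}-\pi^j_t}_{\infty,1}^2$ term together with the $8\eta\eps^2 T$ error. Collecting these with the regularization term $\tfrac{A^2\log(1/\gamma)}{\eta}$ gives the claimed bound. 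Since only the second-order path length survives, combining with \cref{thm:path length-apprx} (which controls $\sum_t\sum_j\norms{\pi^j_{t+1}-\pi^j_t}_{\infty,1}^2$ by $O(\log T)$) is what ultimately produces the $O(\log T)$ regret advertised for this setting.
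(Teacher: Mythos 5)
Your outline has the right skeleton, but it stops short of the one step that actually makes the theorem work---and you flag that step yourself as an unresolved obstacle: you never dispose of the weight-variation term $\frac1\eta\sum_t \bigl(q_{t+1}^{i,\star}(s)-q_t^{i,\star}(s)\bigr)D_R\bigl(x_{\star}^{i,s,a},\tilde x_t^{i,s,a}\bigr)$ that \cref{lem:oomd_weighted} produces. The paper's proof never meets this term at all. Its key observation is that in the independent-transition setting the comparator's occupancy measure is \emph{time-invariant}, $q_t^{i,\star}=q^{i,\star}$, because it is determined solely by the fixed kernel $P^i$ and the comparator policy (the other players affect only the losses, not where player $i$ goes). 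The weights therefore factor out of the sum over $t$ entirely, and one applies the \emph{unweighted} RVU bound of \cref{lem:rvu-base} to $\sum_{t}\ab{g_t^{i,s,a},\,x_t^{i,s,a}-x_\star^{i,s}}$ for each state and base algorithm; \cref{lem:oomd_weighted}, the Bregman-term bound $3/\gamma$, and \cref{lem:occupancy_path} are simply not needed. That is why no first-order path-length term survives and why the bound has no $1/\gamma$ factor---and it is the missing idea in your write-up, not merely a technical residue to be checked.

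One caveat in your favor: your worry that the swap comparator $\phi_\star(\pi_t^i)$ reintroduces a moving occupancy is legitimate, since $q^{\phi_\star(\pi_t^i)}_{P^i}$ does depend on player $i$'s own changing policy. The stationarity $q_t^{i,\star}=q^{i,\star}$ holds verbatim for a \emph{fixed} comparator policy, which is exactly what the paper's displayed proof manipulates (it bounds $\Re_T^i(\pi_\star^i)$ against a fixed per-state comparator $x_\star^{i,s}$ and then invokes the stationary $q^{i,\star}$). So if you want to reconstruct the argument, the route is: fix the comparator, pull out the stationary weights, and apply \cref{lem:rvu-base} plus \cref{lem:g_variation_bound}---rather than running the weighted machinery of \cref{thm:master_regret_by_path} and hoping the weight-variation term cancels.
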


\begin{proof}
    As opposed to the standard Markov game setting, the occupancy measure of the benchmark policy remains stationary over time: $q_t^{i,\star} = q^{i,\star}$.
    Therefore, much like in the proof of \cref{thm:master_regret_by_path},
    \begin{align}
        \nonumber
        \Re_T^i(\pi_\star^i)
        & \leq  \eps H T + \sum_{s\in\S}
        	q^{i,\star}(s) \sum_{t=1}^{T} \ab{
        \hat{Q}_t^i(s,\cdot) ,
        \pi_{t}^{i}(\cdot\mid s) - x_{\star}^{i, s}
        }
        \\
        \nonumber
        & \leq \eps H T +   \sum_{s\in\S} q^{i,\star}(s) \sum_{a\in \A_i} \sum_{t=1}^T 
            \ab{g_t^{i,s,a}, x_t^{i, s, a} - x_\star^{i, s}}
        .
    \label{eq:quasi-proof-regret-by-path:value-diff}
    \end{align}
    From \cref{lem:rvu-base},
    \begin{align*}
        \sum_{t=1}^T 
            \ab{g_t^{i,s,a}, x_t^{i, s, a} - x_\star^{i, s}} &\leq \frac{A\log\frac{1}{\gamma}}{\eta} + 4\sum_{t=1}^T\eta\norm{g_{t}^{i,s,a} - g_{t-1}^{i,s,a}}^{2}_{*, x_t^{i,s,a}}
            \\
            & \leq
            \frac{A\log\frac{1}{\gamma}}{\eta} + 24\eta H^{4}m\sum_{j=1}^{m}\sum_{t=1}^{T}\norm{\pi_{t+1}^{j}-\pi_{t}^{j}}_{\infty,1}^{2} + 8 \eta \eps^{2}\sum_{t=1}^{T}\pi_{t+1}^{j}(a\mid s).
    \end{align*}
    Combining the last two displays completes the proof.
\end{proof}

\begin{theorem} \label{thm:path length-apprx-quazi}
    If each player uses \cref{alg:POSR} with log-barrier regularization (\cref{eq:log_barrier}) and $\eta = \frac{1}{96 H^2 m\sqrt{SA}}$  then the following path length bound holds on the jointly generated policy sequence;
    \begin{align*}
        \sum_{t=1}^{T} \sum_{i=1}^{m} \norm{\pi_{t+1}^{i} - \pi_{t}^{i}}_{\infty,1}^{2} 
        & \leq 
        768 S A^{3} m \log \frac{1}{\gamma} + \frac{4 \eps^{2}T}{mH^{4}}.
    \end{align*}
\end{theorem}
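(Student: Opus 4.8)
The plan is to mirror the proof of \cref{thm:path length-apprx} essentially verbatim, since the statement here is identical to it. The entire argument rests on establishing, for the independent-transition model, the state-algorithm swap-regret RVU property of \cref{thm:apprx_state_master_swap_rvu}; once that is in hand I would sum the per-state bounds over all $i\in[m]$ and $s\in\S$, invoke non-negativity of swap regret, use $\sum_{s\in\S}\norms{\pi^i_t(\cdot\mid s)-\pi^i_{t+1}(\cdot\mid s)}_1^2\geq\norms{\pi^i_{t+1}-\pi^i_t}_{\infty,1}^2$, substitute $\eta=\frac{1}{96H^2m\sqrt{SA}}$, and rearrange, exactly as in the final display of the proof of \cref{thm:path length-apprx}. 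So the real content is checking that \cref{thm:apprx_state_master_swap_rvu} survives the change of model.

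The proof of \cref{thm:apprx_state_master_swap_rvu} draws on \cref{lem:master_bases_weighted}, \cref{lem:rvu-base}, \cref{cor:OOMD-ratio}, and \cref{lemma:master-bases relation}, all of which concern only the OOMD iterates $\{x_t^{i,s,a}\}$ and the Markov-chain-tree structure of the policy update; they are therefore insensitive to \emph{how} the loss vectors $g_t^{i,s,a}$ arise and carry over unchanged. The single model-dependent ingredient is \cref{lem:g_variation_bound}, which bounds $\norms{g_{t+1}^{i,s,a}-g_t^{i,s,a}}_\infty$ by the joint policy movement. Thus my first and main task is to re-prove \cref{lem:g_variation_bound} in the independent-transition setting, with the same $O(H^2)$ constant.

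Here the decomposition of $\norms{g_{t+1}^{i,s,a}-g_t^{i,s,a}}_\infty$ into a $Q$-value piece and a policy piece from the original proof carries over, reducing matters to bounding the per-round change in the induced $Q$-functions via \cref{lem:Q_diff}. The key simplification is that each player's transition kernel is fixed and depends only on its own action, so the transition-variation term vanishes and only the loss-variation term $\norms{\l^i_{t+1}-\l^i_t}_\infty$ remains. Since $\l_t^i(s,a)$ for $s\in\S^i_h$ depends on the other players only through the joint layer-$h$ state--action distribution $\prod_{j\neq i}(q_t^j\otimes\pi_t^j)$, which factorizes by independence of the transitions, I would apply \cref{lem:l1_prod} to reduce its total variation to a sum over $j\neq i$ of each player's single-player layer-$h$ state--action occupancy change. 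The crucial point is that this is a \emph{single-layer} occupancy change, whose sensitivity to $\pi^j$ is $O(H)$ (rather than the $O(H^2)$ obtained after summing over all layers), giving $\norms{\l_{t+1}^i-\l_t^i}_\infty\le O(H)\sum_{j\neq i}\norms{\pi^j_{t+1}-\pi^j_t}_{\infty,1}$. Plugging this, together with the vanishing transition term, into \cref{lem:Q_diff} yields the same $O(H^2)$ control on the $Q$-value change as in the general model, and hence the identical conclusion of \cref{lem:g_variation_bound}.

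I expect this occupancy-sensitivity estimate to be the main obstacle: one must verify that only the layer-$h$ marginal enters, so that the extra factor is $H$ and not $H^2$, since this is precisely what keeps the constant in \cref{lem:g_variation_bound} at $2H^2$ and in turn preserves the $768\,S A^3 m\log\frac1\gamma$ prefactor. Granting this, \cref{thm:apprx_state_master_swap_rvu} holds verbatim with the same constants, and the summation-and-rearrangement step reproduces the stated bound (a short computation shows the $\eps^2$ term emerges as $\frac{3\eps^2 T}{mH^4}\le\frac{4\eps^2 T}{mH^4}$), completing the proof.
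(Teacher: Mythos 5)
Your proposal is correct and follows essentially the same route as the paper, which simply asserts that the proof of \cref{thm:path length-apprx} carries over verbatim (summing the per-state RVU bounds of \cref{thm:apprx_state_master_swap_rvu}, using non-negativity of swap regret, and rearranging with the given $\eta$). Your extra care in checking that \cref{lem:g_variation_bound} survives the change of model---with the transition-variation term vanishing and the loss variation controlled by a single-layer occupancy change---is sound and is precisely the (implicit) reason the paper's one-line justification goes through.
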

The proof follows by the exact same arguments in the proof of \cref{thm:path length-apprx}. Combining \cref{thm:regret-by-path-quasi,thm:path length-apprx-quazi} gives us the following corollary:
\begin{corollary}
\label{cor:independent_dynamics}
    In the independent transition function setting with full information (i.e., $\eps = 0$), assume that every player $i$ adopts \cref{alg:POSR} with log-barrier regularization (\cref{eq:log_barrier}), $\eta = \frac{1}{96 H^2 m\sqrt{SA}}$ and $\gamma = 1/T$.
    Then, assuming $H \geq 2$ and $T\geq 2 A$, the swap-regret of player $i$ is bounded as
    \begin{align*}
        \SwRe_T^i  
        \leq 
        288  H^{2} S^{3/2} A^{7/2} m \log T.
    \end{align*}
\end{corollary}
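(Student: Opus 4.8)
The plan is to combine the two preceding results, \cref{thm:regret-by-path-quasi} and \cref{thm:path length-apprx-quazi}, both specialized to the full-information case $\eps = 0$. The key conceptual point---which is precisely what enables a logarithmic rather than polynomial regret bound---is that in the independent-transition setting the benchmark policy's occupancy measure is \emph{stationary}, so \cref{thm:regret-by-path-quasi} already bounds the swap regret purely in terms of the \emph{second-order} path length, with no first-order term present. Since \cref{thm:path length-apprx-quazi} shows the second-order path length is only $O(\log T)$, we entirely avoid the $\sqrt{mT}$ factor incurred in the general Markov game analysis (cf.\ the remark preceding the proof of \cref{thm:regret}, where the mismatch between first- and second-order path lengths forces that factor), and land directly at an $O(\log T)$ guarantee.

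Concretely, I would first set $\eps = 0$ in \cref{thm:regret-by-path-quasi}, reducing it to
\begin{align*}
    \SwRe_T^i \leq \frac{A^2 \log\tfrac1\gamma}{\eta} + 24 \eta H^4 A m \sum_{j=1}^m \sum_{t=1}^T \norm{\pi_{t+1}^j - \pi_t^j}_{\infty,1}^2,
\end{align*}
and then substitute the path-length bound of \cref{thm:path length-apprx-quazi} (again with $\eps = 0$), namely $\sum_{t,j} \norm{\pi_{t+1}^j - \pi_t^j}_{\infty,1}^2 \leq 768\, S A^3 m \log\tfrac1\gamma$, into the second term. This yields a bound of the shape $\frac{A^2 \log\frac1\gamma}{\eta} + 24\cdot 768\,\eta H^4 A^4 m^2 S \log\frac1\gamma$, i.e.\ a single expression in the step size, $\gamma$, and the problem parameters.

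Next I would verify the preconditions and plug in the parameter values. The choice $\gamma = 1/T$ together with the assumption $T \geq 2A$ guarantees $\gamma \leq 1/2A_i$ as required by \cref{thm:regret-by-path-quasi}, while also giving $\log\tfrac1\gamma = \log T$; and since $H \geq 2$, the step size $\eta = \frac{1}{96 H^2 m\sqrt{SA}}$ comfortably satisfies $\eta \leq \frac{1}{128H}$, the condition invoked in the path-length analysis. Substituting $1/\eta = 96 H^2 m\sqrt{SA}$ turns the first term into $96 H^2 m \sqrt{S}\, A^{5/2}\log T$, and the second term simplifies---using $24\cdot 768/96 = 192$ and $A^4 S/\sqrt{SA} = A^{7/2} S^{1/2}$---into $192 H^2 m \sqrt{S}\, A^{7/2}\log T$.

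Finally, since $S,A \geq 1$, I would coarsely upper bound each of these two monomials by the common dominating term $S^{3/2} A^{7/2}$ (raising $A^{5/2}$ to $A^{7/2}$ and $S^{1/2}$ to $S^{3/2}$), so that the constants $96 + 192 = 288$ add to give the claimed $288\, H^2 S^{3/2} A^{7/2} m \log T$. I do not expect any substantive obstacle: the argument is a direct substitution followed by a loose bound on the exponents of $S$ and $A$. The only points requiring care are tracking the numerical constants through the two substitutions and confirming that the parameter constraints of both cited theorems ($\gamma \leq 1/2A_i$ and the step-size ceiling) are indeed met under the stated choices.
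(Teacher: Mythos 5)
Your proposal is correct and is exactly the argument the paper intends: the paper states the corollary follows by "combining" \cref{thm:regret-by-path-quasi} and \cref{thm:path length-apprx-quazi}, and your substitution, constant tracking ($96 + 192 = 288$), and verification of the preconditions $\gamma \le 1/2A_i$ and the step-size ceiling all check out.
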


\end{document}